\newcommand{\nope}[1]{}
\newcommand{\llnorm}[1]{\left\lVert#1\right\rVert_2}
\newcommand{\abs}[1]{\left| #1 \right|}
\renewcommand{\epsilon}{\varepsilon}
\newcommand{\ball}[2]{\mathit{B}_{#2}\left( #1 \right)}
\newtheorem{condition}[thm]{Condition}
\newcommand{\sub}{\subseteq}
\renewcommand{\l}{\left}
\renewcommand{\r}{\right}
\renewcommand{\t}{\tilde}
\newcommand{\tr}{\mathrm{tr}}
\newcommand{\Sym}{\mathsf{Sym}}
\newcommand{\id}{\mathbb{I}}
\newcommand{\Lap}{\mathrm{Lap}}
\newcommand{\mw}{w_{\min}}
\newcommand{\SD}{\ensuremath{\mathrm{d_{TV}}}}
\newcommand{\dH}{\ensuremath{\mathrm{d_{H}}}}
\newcommand{\Pub}{\mathrm{Pub}}
\newcommand{\Priv}{\mathrm{Priv}}
\newcommand{\True}{\textsc{True}}
\newcommand{\False}{\textsc{False}}
\newcommand{\Preconditioner}{\mathrm{PubPreconditioner}}
\newcommand{\SC}{\mathrm{SC}}
\newcommand{\PDPGE}{\mathrm{PubDPGaussianEstimator}}
\newcommand{\LowDimPartitioner}{\mathrm{DPLowDimPartitioner}}
\newcommand{\LowDimPartitionerPublic}{\mathrm{LowDimPartitioner}}
\newcommand{\PCount}{\mathrm{PCount}}
\newcommand{\PrivPCA}{\mathrm{PrivPCA}}
\newcommand{\DPGE}{\mathrm{DPGaussianEstimator}}
\newcommand{\DPHC}{\mathrm{DPHardClustering}}
\newcommand{\DPEC}{\mathrm{EasyClustering}}
\newcommand{\DPHE}{\mathrm{DPHardEstimator}}
\newcommand{\DPEE}{\mathrm{DPEasyEstimator}}
\newcommand{\PrivParams}{\mathrm{PrivParams}}
\newcommand{\QPub}{\mathrm{Q_{Pub}}}
\newcommand{\QPriv}{\mathrm{Q_{Priv}}}
\newcommand*{\citet}[1]{\AtNextCite{\AtEachCitekey{\defcounter{maxnames}{2}}}\textcite{#1}}
\newcommand*{\citep}[1]{\citep{#1}}
\title{Private Estimation with Public Data\thanks{Authors are listed in alphabetical order.}
}
\author{Alex Bie\thanks{\texttt{yabie@uwaterloo.ca}. Cheriton School of Computer Science, University of Waterloo. Supported by an NSERC Discovery Grant, a David R. Cheriton Graduate Scholarship, and a Vector Scholarship in Artificial Intelligence.}
    \and
    Gautam Kamath\thanks{\texttt{g@csail.mit.edu}. Cheriton School of Computer Science, University of Waterloo. Supported by an NSERC Discovery Grant, an unrestricted gift from Google, an unrestricted gift from Apple, and a University of Waterloo startup grant.}
    \and
    Vikrant Singhal
    \thanks{\texttt{vikrant.singhal@uwaterloo.ca}. Cheriton School of Computer Science, University of Waterloo. Supported by an NSERC Discovery Grant.}}
\date{}
\begin{document}
\maketitle
\thispagestyle{empty}



\begin{abstract}
    We initiate the study of differentially private (DP) estimation with access to a small amount of public data.
    For private estimation of $d$-dimensional Gaussians, we assume that the public data comes from a Gaussian that may have vanishing similarity in total variation distance with the underlying Gaussian of the private data. We show that under the constraints of pure or concentrated DP, $d+1$ public data
    samples are sufficient to remove any dependence on the range
    parameters of the private data distribution from the private sample complexity,
    which is known to be otherwise necessary without public
    data. For separated Gaussian mixtures, we assume that the underlying public and private distributions are the same, and we consider two settings: (1) when given a dimension-independent amount of public data, the private
    sample complexity can be improved polynomially in
    terms of the number of mixture components, and any dependence
    on the range parameters of the distribution can be removed
    in the approximate DP case; (2) when given an amount of public data linear in the dimension, the private
    sample complexity can be made independent of
    range parameters even under concentrated DP, and additional improvements can be made to the overall sample complexity.
\end{abstract}

\newpage

\tableofcontents

\newpage

\section{Introduction}
Differential privacy (DP)~\cite{DworkMNS06} guarantees the privacy of \emph{every} point in a dataset.
This is a strong requirement, and often gives rise to qualitatively new requirements when performing private data analysis. 
For instance, for the problem of private mean estimation (under pure or concentrated DP), the analyst must specify a guess for the unknown parameter, and needs more data to get an accurate estimate depending on how good their guess is.
This cost can be prohibitive in cases where the data domain may be unfamiliar. 

Fortunately, in many cases, it is natural to assume that there exists an additional \emph{public} dataset. 
This public dataset may vary in both size and quality. 
For example, one could imagine that a fraction of users opt out of privacy considerations, giving a small set of public in-distribution data.  
Alternatively, it is common to pretrain models on large amounts of public data from the web, which may be orders of magnitude larger than the private data but significantly out of distribution.
In a variety of such settings, this public data can yield dramatic theoretical and empirical improvements to utility in private data analysis (see discussion in Section~\ref{sec:related}).
We seek to answer the following question:

\begin{quote}
    \emph{How can one take advantage of public data for private estimation?}
\end{quote} 

We initiate the study of differentially private statistical estimation with a supplementary public dataset. 
In particular, our goal is to understand when a small amount of public data can significantly reduce the cost of private estimation.  

\subsection{Results}
We investigate private estimation with public data through the canonical problems of learning Gaussians and mixtures of Gaussians.

\paragraph{Gaussians.} In the case of pure and concentrated differential privacy, exactly $d+1$ public points are sufficient to remove any dependence on ``range parameters'' in the private sample complexity, even when the underlying Gaussian distribution of the public data bears \emph{almost no similarity} to that of the private data. We start with following result for the scenario when the two distributions are the same.

\begin{thm}\label{thm:gaussian-informal}
For all $\alpha, \beta, \epsilon > 0$ and $d\geq1$ there exists a computationally efficient $\frac {\eps^2} 2$-zCDP (and a computationally inefficient $(\eps, 0)$-DP) algorithm $\cM$ that takes $n$ private samples and $d+1$ public samples from a $d$-dimensional Gaussian $D$, and is private with respect to the private samples, such that if
    $$n = O\l(\frac {d^2} {\alpha^2} + \frac {d^2} {\alpha\eps}\r) \cdot \polylog\l(d, \frac 1 \beta, \frac 1 \alpha\r),$$
then $\cM$ estimates the parameters of $D$ to within total variation error $\alpha$ with probability at least $1-\beta$.

\end{thm}

Without public data, the analyst would be required to specify a bound $R$ on the $\ell_2$ norm of the mean and a bound $K$ on the condition number of the covariance, and the private sample complexity would scale logarithmically in terms of these parameters (which we denote as \emph{range parameters}). Thus, this cost could be arbitrarily large, or even infinite, given poor \emph{a priori} knowledge of the problem. Note that these requirements are not due to suboptimality of current private algorithms. Packing lower bounds for estimation under pure and concentrated DP show that dependence on range parameters is an inherent cost of privacy. However, this dependence can be entirely eliminated at the price of $d+1$ public data points, which is less public data than the non-private $\Theta(\tfrac {d^2} {\alpha^2})$ sample complexity of the problem. For estimating Gaussians with identity or known covariance (that is, mean estimation) it turns out that just \emph{one} public sample suffices to remove all dependence on range parameters.

The next natural question to ask is: what if our public data does not come from the same distribution as our private data?
Indeed, in practical settings, there may be significant domain shift between data that is publicly available and the private data of interest. 
Our next result shows that we can relax this assumption: it suffices that our public data comes from another Gaussian that is at most $\gamma < 1$ away in total variation distance from the underlying private data distribution.

\begin{thm} \label{thm:robust-gaussian-informal}
For all $\alpha, \beta, \epsilon > 0$, $d\geq1$, and $0 \leq \gamma < 1$, there exists a computationally efficient $\frac {\eps^2} 2$-zCDP (and a computationally inefficient $(\eps, 0)$-DP) algorithm $\cM$ that takes $n$ private samples from a $d$-dimensional Gaussian $D$, and $d+1$ public samples from a $d$-dimensional Gaussian $\wt D$ with $\SD(D, \wt D) \leq \gamma$, and is private with respect to the private samples, such that if
    $$n = O\l(\frac {d^2} {\alpha^2} + \frac {d^2} {\alpha\eps} \r) \cdot \polylog\l(d, \frac 1 \beta, \frac 1 \alpha, \frac 1 {1-\gamma}\r),$$
then $\cM$ estimates the parameters of $D$ to within total variation error $\alpha$ with probability at least $1-\beta$.
\end{thm}

The upper bound on the total variation gap $\gamma$ results in the sample complexity picking up an extra $\polylog(\tfrac 1 {1-\gamma})$ factor. 
For instance, even when the public and private distributions have total variation distance $1 - 1/d^{100}$ (that is, they are \emph{almost entirely dissimilar}), we incur only an extra $\polylog(d)$ factor in the sample complexity.
In particular, note that the public dataset may be simultaneously quite small and significantly out of distribution with respect to the private data.

\paragraph{Mixtures of Gaussians.} 
We focus on estimation of mixtures of $k$ \emph{separated} Gaussians in $d$ dimensions.
Specifically, we assume that, given mixture components $\cN(\mu_i,\Sigma_i)$ with
respective mixing weights $w_i$ for $i \in [k]$ (with $w_i \geq \mw$),
then for each $i \neq j$,
\begin{align}
    \llnorm{\mu_i - \mu_j} \geq
        \wt{\Omega}\left(\sqrt{k} + \frac{1}{\sqrt{w_i}} +
        \frac{1}{\sqrt{w_j}}\right)\cdot
        \max\left\{\sqrt{\|\Sigma_i\|_2},\sqrt{\|\Sigma_j\|_2}\right\}.
        \label{eq:gmm-separation}
\end{align}

Private estimation of such mixtures of Gaussians (without public data) was previously studied by~\cite{KamathSSU19}.
Their algorithms proceed in two stages: first, a clustering step to isolate the samples from each component, and an estimation step, which estimates the parameters of each individual Gaussian using the isolated samples from each component.
Our algorithms follow a similar structure, but employ the public data at appropriate points in the procedure to enable substantial savings.

We provide two flavours of results for learning mixtures of Gaussians.
First, we investigate the setting when we have a very small, dimension-independent amount of public data.

\begin{thm}\label{thm:gmm-hard-approx-informal}
    For all $\alpha,\beta,\eps,\delta>0$, there exists an
    $(\eps,\delta)$-DP algorithm $\cM$ that takes $n$ private samples
    and $m$ public samples from a Gaussian mixture $D$
    satisfying \eqref{eq:gmm-separation}, such that if
    \begin{align*}
        n = O\left(\frac{d^2}{\mw\alpha^2} +
            \frac{d^2}{\mw\alpha\eps}
            +\frac{d^{1.5}k^{2.5}}{\eps}\right)
            \cdot \polylog\left(d,k,\frac{1}{\beta},
            \frac{1}{\alpha},\frac{1}{\eps},
            \frac{1}{\delta}\right) ~~~\text{and}~~~
        m = O\left(\frac{\log(k/\beta)}{\mw}\right),
    \end{align*}
    then $\cM$ estimates the parameters of $D$ to within total variation
    error $\alpha$ with probability at least $1-\beta$.
\end{thm}

\begin{thm}\label{thm:gmm-hard-zcdp-informal}
    For all $\alpha,\beta,\rho>0$, there exists an
    $\rho$-zCDP algorithm $\cM$ that takes $n$ private samples
    and $m$ public samples from a Gaussian mixture $D$
    satisfying \eqref{eq:gmm-separation}, such that if
    for each
    $i \in [k]$, $\|\mu_i\|_2 \leq R$ and $\id \preceq \Sigma_i \preceq K\id$,
    and
    \begin{align*}
        n = O\left(\frac{d^2}{\mw\alpha^2} +
            \frac{d^2}{\mw\alpha\sqrt{\rho}} +
            \frac{d^{1.5}k^{2.5}}{\sqrt{\rho}}\right)
            \cdot\polylog\left(d,k,K,R,\frac{1}{\beta},
            \frac{1}{\alpha},\frac{1}{\rho}\right) ~~~\text{and}~~~
        m = O\left(\frac{\log(k/\beta)}{\mw}\right),
    \end{align*}
    then $\cM$ estimates the parameters of $D$ to within total variation error $\alpha$ with probability at least $1-\beta$.
\end{thm}

Our results differ from the prior work of~\cite{KamathSSU19} in the following ways:
(1) we have results for both approximate DP and zCDP, as opposed
to just approximate DP in \cite{KamathSSU19};
(2) the private sample complexity due to the clustering
step does not involve any dependence on the range parameters;
and (3) the sample complexity due to the clustering step is now
$O(d^{1.5}k^{2.5})$, as opposed to $O(d^{1.5}k^{9.06})$, which
is a significant improvement, and shows up only due to applications
of private PCA under better sensitivity guarantees. As far as
the private sample complexity in the component estimation process
is concerned, in the case of approximate DP, we have expressions
independent of distribution parameters owing to the recent Gaussian
learner by \cite{AshtianiL21}.

Finally, we investigate Gaussian mixture estimation with an amount of public data which is linear in the dimension. 
\begin{thm}\label{thm:gmm-easy-approx-informal}
    For all $\alpha,\beta,\eps,\delta>0$, there exists an
    $(\eps,\delta)$-DP algorithm $\cM$ that takes $n$ private samples
    and $m$ public samples from a Gaussian mixture $D$
    satisfying \eqref{eq:gmm-separation}, such that if
    \begin{align*}
        n = O\left(\frac{d^2}{\mw\alpha^2} +
            \frac{d^2}{\mw\alpha\eps}\right)
            \cdot \polylog\left(d,k,\frac{1}{\beta},
            \frac{1}{\alpha},\frac{1}{\eps},
            \frac{1}{\delta}\right) ~~~\text{and}~~~
        m = O\left(\frac{d\log(k/\beta)}{\mw}\right),
    \end{align*}
    then $\cM$ estimates the parameters of $D$ to within
    total variation error $\alpha$ with probability at least $1-\beta$.
\end{thm}

\begin{thm}\label{thm:gmm-easy-zcdp-informal}
    For all $\alpha,\beta,\rho>0$, there exists an
    $\rho$-zCDP algorithm $\cM$ that takes $n$ private samples
    and $m$ public samples from a Gaussian mixture $D$
    satisfying \eqref{eq:gmm-separation}, such that if
    \begin{align*}
        n = O\left(\frac{d^2}{\mw\alpha^2} +
            \frac{d^2}{\mw\alpha\sqrt{\rho}}\right)
            \cdot\polylog\left(d,k,\frac{1}{\beta},
            \frac{1}{\alpha},\frac{1}{\rho}\right) ~~~\text{and}~~~
        m = O\left(\frac{d\log(k/\beta)}{\mw}\right),
    \end{align*}
    then $\cM$ estimates the parameters of $D$ to within total variation error $\alpha$ with probability at least $1-\beta$.
\end{thm}
Using $\wt{O}(d/\mw)$ public samples establishes all the advantages
that using $\wt{O}(1/\mw)$ public samples gives, but additionally
improves the private sample complexity even further. As we will
describe in the technical overview, the clustering process is done
entirely using the public data, and ends up removing the term due
to private PCA from the private sample complexity in both approximate
DP and zCDP regimes. Additionally, in the zCDP case, we do not have
any dependence on distribution range parameters even in the component
estimation process because we can use results from
Theorem~\ref{thm:gaussian-informal} this time.

\subsection{Technical Overview}

\subsubsection{Estimating Gaussians}
To start, we make the simple observation that for identity covariance Gaussians $\cN(\mu, \id)$, a single public sample is sufficient to remove the \emph{a priori} assumption that $\|\mu\|_2 \leq R$, which is required to obtain a finite sample complexity under pure and concentrated DP. Given a single public sample $\wt X$, Gaussian concentration implies that with probability at least $1- \beta$, the true mean lies within $\wt{O}(\sqrt{d})$ of $\wt X$. Hence, setting $R = \wt{O}(\sqrt{d})$ and starting with our coarse estimate $\wt X$, we can directly apply the $\tfrac{\eps^2}{2}$-zCDP ``clip-and-noise'' Gaussian mechanism to get a sample complexity of $\wt O (\tfrac{d}{\alpha^2} + \tfrac{d}{\eps\alpha})$, or the efficient $\eps$-DP algorithm from \cite{HopkinsKM22} to arrive at a $\wt O(\tfrac {d} {\alpha^2\eps})$ sample complexity with no dependence on $R$ in either case.

We apply the same strategy to estimate the parameters of a general $d$-dimensional Gaussian. First, we compute the sample mean and covariance of $d+1$ public samples. Then we shift our private samples with respect to the public sample mean, and then rescale them according to the public sample covariance. The distribution of the resulting transformed private samples satisfy, with probability at least $1-\beta$ over public samples, $\poly(d, \tfrac 1 \beta)$ range bounds on both the norm of the mean and the condition number of the covariance. This means we can apply either the concentrated DP Gaussian learner from \cite{KamathLSU19} or the pure DP Gaussian learner from \cite{BunKSW19} (followed by inverting the shift and scale) to arrive at a successful learner matching the private sample complexity up to logarithmic factors of these learners that require these \emph{a priori} range bounds, but now without any dependence on them.

Finally, we extend our results to the case where the public data comes from a Gaussian that is at most $\gamma$ far in TV distance from the private data distribution. We translate the $\gamma$-TV bound on the public and the private data distributions to a bound on the differences in their parameters. Taking this into account when applying our shift and rescaling with the public data yields $\poly(d, \tfrac 1 \beta, \tfrac 1 {1-\gamma})$ range bounds on the transformed private data distribution's parameters, which suffices to apply the same private learners as described earlier.

\subsubsection{Estimating Gaussian Mixtures}

 The general process of estimating the parameters of Gaussian
mixtures involves separating the mixture components by using
techniques like PCA, followed by their isolation, and finally
by individual component estimation. In other words, the components
are first projected onto a low-dimensional subspace of $\R^d$,
which separates the large mixture components from the
rest, given that the separation in the original space is enough.
The subspace is computed with respect to the available data,
that is, it is chosen to be the top $k$ subspace of the data
matrix, which preserves the original distances between the
individual components, but shrinks down each component so that
the intra-component distances are small but inter-component
distances are large still. This allows us to separate a group
of components from the rest. Repeating this process on each
group, we can narrow down to a single Gaussian in each group,
and estimate its parameters.

This has been the high-level idea in a lot of prior work, both
non-private \cite{VempalaW02, AchlioptasM05} and private
\cite{KamathSSU19}. In our setting, we utilise the public
data available in different settings at various points to
establish these tasks, while having a low cost in the sample
complexity of the private data. Given a lower bound on the
mixing weights ($\mw$), we consider two settings in terms
of the availability of the public data: (1) when there is
very little available -- $\wt{O}(1/\mw)$; and (2) when there is
a lot more available -- $\wt{O}(d/\mw)$. The key challenge, as
indicated, is accurate clustering of data.

\bigskip

\noindent \textbf{Public Data Sample Complexity: $\wt{O}(1/\mw)$}

\vspace{-11pt}

\paragraph{Clustering.}
This is the more challenging setting of the two because there
is not enough public data for techniques such as PCA to give accurate
results. We therefore have to rely on the private data for
these tasks, and use their private counterparts to get the desired
results.

\begin{enumerate}
    \item Superclustering to reduce sensitivity for Private PCA:
        The main goal, as in most private estimation tasks,
        is to reduce the amount of noise added for privacy in order
        to get better accuracy. This boils down to limiting the
        sensitivity of the target empirical
        function of the data. For this, we would like to make sure
        that the range of the data (to which, we scale the noise) is
        very tight -- in our context, it means that we would like
        to have a ball that contains a group of mixture components,
        which are very close to one another, having close to the
        smallest possible radius. We can limit the data to within this
        ball, and the sensitivity for the next step would be reduced.
        To get such a ball, we use the public data, and come up
        with a ``superclustering'' algorithm,
        which does exactly
        that. This algorithm essentially relies on the concentration
        properties of high-dimensional Gaussians -- it finds the
        radius of the largest Gaussian and a point (as the centre)
        within that Gaussian,
        and additively grows the ball until it stops finding more
        points. This gives a ball whose radius is an $O(k)$ approximation
        to the radius of the largest Gaussian. We use this ball to
        isolate the same components in the private data, and perform
        private PCA to separate the components within. Getting this
        tight ball is crucial to reduce the cost of private PCA in
        terms of the sample complexity compared to the prior work
        of \cite{KamathSSU19}.
    \item Private PCA: We use the well-known private PCA algorithm
        that has been previously
        analysed in \cite{DworkTTZ14, KamathSSU19} for this.
    \item Partitioning in low dimensions: The next task is to partition
        the components in the
        low-dimensional subspace in a way that there is no
        overlap. For this, we use an algorithm that is similar
        to the ``terrific ball'' algorithm of \cite{KamathSSU19}, and
        relies upon the accuracy guarantees of the private PCA step.
        It uses both public and private data, and looks for a terrific
        ball within them. If there is one, indicating that there is
        more than one component in that subset of the datasets, it
        partitions the datasets (and the components), and queues
        the partitions for further work. Otherwise, it just adds that
        subset of the private dataset to the set of discovered
        clusters.
    \item The whole process is repeated on all the remaining
        partitions.
\end{enumerate}

\paragraph{Component Estimation.}
The final task is to estimate the parameters
of the $k$ isolated components.
In the $(\eps,\delta)$-DP setting,
we simply apply the private Gaussian learner of
\cite{AshtianiL21} to each component, which enables us to
learn with no dependence in the sample complexity on the
range parameters of the Gaussian itself (among other available
choices, were the $(\eps,\delta)$-DP learners from
\cite{KamathMSSU21, KothariMV21, TsfadiaCKMS21}, but
this one had the joint best sample complexity with
\cite{TsfadiaCKMS21}). In the $\rho$-zCDP setting,
we use the Gaussian learner from \cite{KamathLSU19}, instead.
Note that in this setting, the private sample complexity
does depend on the range parameters of the Gaussians,
themselves.

\bigskip

\noindent \textbf{Public Data Sample Complexity: $\wt{O}(d/\mw)$}

\vspace{-11pt}

\paragraph{Clustering.}
In the second setting, clustering is easier in terms of
the tasks that need to be done privately -- it can be done
entirely by using the public dataset itself. This time, the
number of public samples is enough to be able to accurately
do PCA on the public data, hence, perform clustering using
just the public data itself. This improves the private
data sample complexity by removing the term due to private
PCA altogether.

\begin{enumerate}
    \item No supercustering involved: The first change in the
        clustering algorithm is eliminating the superclustering step.
    \item PCA: We essentially get the projection matrix for the top $k$
        subspace of the data using non-private PCA, and work within that
        subspace using the public data to further partition the two
        datasets (and the components).
    \item Partitioning in low dimensions: The algorithm for partitioning
        in the low-dimensional subspace
        is similar to the private partitioner used in the previous
        case, except that there is no need for privacy in this case.
    \item The whole process is repeated on all the remaining
        partitions.
\end{enumerate}

\paragraph{Component Estimation.}
Once the components are separated, as before, we individually
estimate them using the partitions of the private dataset privately.
In the $(\eps,\delta)$-DP setting,
we again use the learner from
\cite{AshtianiL21}, which ensures parameter-free
private data sample complexity in this process. In the
$\rho$-zCDP setting though, we use our own
zCDP learner from this text (Theorem~\ref{thm:gaussian-informal}),
which uses the public
data, as well, to ensure parameter-free private data
sample complexity for this process.

\subsection{Related Work}
\label{sec:related}
A large part of our work focuses on using public data to remove the dependence on range parameters in private estimation. 
Understanding these dependences without public data has been a topic of significant study.
\cite{KarwaV18} investigated private estimation of univariate Gaussians, showing that logarithmic dependences on the range parameters were both necessary and sufficient for pure differential privacy, but by using stability-based histograms~\cite{KorolovaKMN09,BunNS16}, they could be removed under approximate differential privacy.
Similar results have been shown in the multivariate setting: while logarithmic dependences are necessary and sufficient under pure or concentrated DP~\cite{KamathLSU19}, they can be removed under approximate DP~\cite{AdenAliAK21,KamathMSSU21,TsfadiaCKMS21,AshtianiL21,KothariMV21,LiuKO21}.
More broadly, a common theme is that the dependences which are intrinsic to pure or concentrated DP can be eliminated by relaxing to approximate DP.
Our results demonstrate that instead of relaxing the privacy definition for all the data, one can achieve the same goal by employing a much smaller amount of public data.

Our investigation fits more broadly into a line of work employing public data for private data analysis.
Works, both theoretical and empirical, investigate the role of public data in private query release, synthetic data generation, and prediction~\cite{JiE13, BeimelNS16, AlonBM19, NandiB20, BassilyCMNUW20, BassilyMN20, LiuVSUW21}. For distribution-free classification with public and private data, it was shown in \cite{AlonBM19} that, roughly speaking, if a hypothesis class is privately learnable with a small amount of public data ($o(1/\alpha)$), it is privately learnable with no public data. For density estimation, we show that the family of unbounded Gaussians is learnable with a small amount of public data ($\alpha$-independent), despite the fact that it is not learnable with only private data.

Additionally, a number of empirical works study the efficacy of public data in private machine learning, including methods such as public pre-training~\cite{AbadiCGMMTZ16, PapernotCSTE19, TramerB21, LuoWAF21,YuZCL21, LiTLH22, YuNBGIKKLMWYZ22}, using public data to compute useful statistics about the private gradients~\cite{ZhouWB21, YuZCL21, KairouzRRT21,AmidGMRSSSTT21}, or using unlabeled public data to train a student model~\cite{PapernotAEGT17, PapernotSMRTE18, BassilyTT18}.

Another setting with mixed privacy guarantees is when different users may require local versus central differential privacy~\cite{AventKZHL17}.
Mean estimation has also been studied in this setting~\cite{AventDK20}.

\section{Preliminaries}

\subsection{Notation}

We define some notation here to be used throughout the
paper.
\begin{itemize}
    \item Public data is denoted by $\wt X = (\wt{X}_1,\dots,\wt{X}_m)$, with each $\wt{X}_i \in \R^d$. Private data is denoted by $X = (X_1,\dots,X_n)$, with each $X_j \in \R^d$. 
    \item We denote a ball of radius $r>0$ centred around a point
        $c \in \R^d$ by $B_{r}(c)$.
    \item For a set $S$, we denote its power set by $\cP(S)$.
\end{itemize}

\subsection{Useful Concentration Inequalities}

We first state a multiplicative Chernoff bound.
\begin{lem}[Multiplicative Chernoff]\label{lem:chernoff-mult}
    Let $X_1,\dots,X_m$ be independent Bernoulli random
    variables, and let $X$ be
    their sum. If $p = \ex{}{X_i}$, then for $0 \leq \delta_1 \leq 1$
    and $\delta_2 \geq 0$,
    $$\pr{}{X \leq (1-\delta_1)pm} \leq e^{-\frac{\delta_1^2 pm}{2}}$$
    and
    $$\pr{}{X \leq (1+\delta_2)pm} \leq
        e^{-\frac{\delta_2^2pm}{2+\delta_2}}.$$
\end{lem}

Next, we state Bernstein's inequality.
\begin{lem}[Bernstein's Inequality]\label{lem:chernoff-add}
    Let $X_1,\dots,X_m$ be independent Bernoulli random variables.
    Let $p = \ex{}{X_i}$.
    Then for $m \geq \frac{5p}{2\epsilon^2}\ln(2/\beta)$ and
    $\eps \leq p/4$,
    $$\pr{}{\abs{\frac{1}{m}\sum{X_i}-p} \geq \epsilon}
        \leq 2e^{-\epsilon^2m/2(p+\epsilon)}
        \leq \beta.$$
\end{lem}

Now, we state the Hanson-Wright inequality
about quadratic forms.
\begin{lem}[Hanson-Wright inequality~\cite{HansonW71}]
  \label{lem:HW}
  Let $X \sim \cN(0,\mathbb{I}_{d \times d})$ and let $A$ be a $d \times d$ matrix.
  Then for all $t > 0$, the following two bounds hold:
  \[
    \pr{}{X^TAX - \tr(A) \geq 2 \|A\|_F \sqrt{t} + 2\|A\|_2t} \leq \exp(-t);
  \]
  \[
    \pr{}{X^TAX - \tr(A) \leq -2 \|A\|_F \sqrt{t} } \leq \exp(-t).
  \]
\end{lem}

We mention an inequality that bounds the tails of
a one-dimensional Gaussian $\cN(\mu,\sigma^2)$.
\begin{lem}[$1$-D Gaussian Concentration]\label{lem:gauss-conc-1d}
    Let $Z \sim \cN(\mu,\sigma^2)$. Then,
    $$\pr{}{|Z-\mu| \leq t\sigma} \leq 2e^{-\frac{t^2}{2}}.$$
\end{lem}

Next, we state a concentration inequality for $0$-mean
Laplace random variables.
\begin{lem}[Laplace Concentration]\label{lem:lap-conc}
    Let $Z \sim \Lap(t)$. Then
    $\pr{}{\abs{Z} > t\cdot\ln(1/\beta)} \leq \beta.$
\end{lem}

We now mention an anti-concentration
inequality for weighted $\chi^2$ distributions from
\cite{ZhangZ18}. We adjust the constants appropriately
in the following theorem as per the specifications in
the aforementioned article.
\begin{lem}[Theorem 6 from \cite{ZhangZ18}]
    \label{lem:chi-squared}
    Let $Z \sim \cN(0,1)$, $a > 0$, and $Y = aZ$. Then
    for $\tau \geq a$, we have the following.
    $$\pr{}{Y \geq a + \tau} \geq 0.06e^{-\frac{3\tau}{2a}}$$
\end{lem}

The following are standard concentration results
for the empirical mean and covariance of a set of
Gaussian vectors (see, e.g.,~\cite{DiakonikolasKKLMS16}).
\begin{lem}\label{lem:gaussian-sum-conc}
    Let $X_1, \dots, X_n$ be i.i.d.\ samples
    from $\cN(0, \mathbb{I}_{d \times d})$.
    Then we have that 
    \[
        \pr{}{\left\|\frac{1}{n} \sum_{i \in [n]} X_i \right\|_2 \geq t}
            \leq 4\exp(c_1 d - c_2 nt^2);
    \]
    \[
        \pr{}{\left\|\frac{1}{n} \sum_{i \in [n]} X_iX_i^T  - I \right\|_2 \geq t}
            \leq 4\exp(c_3 d - c_4 n\min(t,t^2)),
    \]
    where $c_1, c_2, c_3, c_4 >0$ are some absolute constants.
\end{lem}

\subsection{Gaussian Mixtures}

Here, we provide preliminaries for the problem of parameter
estimation of mixtures of Gaussians. Let $\Sym_d^+$ denote
set of all $d \times d$, symmetric, and positive semidefinite
matrices. Let $\cG(d) = \{ \cN(\mu,\Sigma) :
    \mu \in \R^{d}, \Sigma \in \Sym_d^+\}$
be the family of $d$-dimensional Gaussians. We can now define the
class $\cG(d,k)$ of mixtures of Gaussians as follows.
\begin{defn} [Gaussian Mixtures]
    The class of \emph{Gaussian $k$-mixtures in $\R^{d}$} is
    $$\cG(d,k) \coloneqq \left\{\sum\limits_{i=1}^{k}{w_i G_i}:
       G_1,\dots,G_k \in \cG(d), w_1,\dots,w_k > 0,
       \sum_{i=1}^{k} w_i = 1 \right\}.$$
    We can specify a Gaussian mixture
    by a set of $k$ tuples as:
    $\{(\mu_1,\Sigma_1,w_1),\dots,(\mu_k,\Sigma_k,w_k)\},$
    where each tuple represents the mean,
    covariance matrix, and mixing weight
    of one of its components. Additionally,
    for each $i$, we refer to
    $\sigma_i^2 = \llnorm{\Sigma_i}$
    as the maximum directional variance
    of component $i$.
\end{defn}

We assume that the mixing weight of each component is lower
bounded by $\mw$.
We impose a separation condition for mixtures of Gaussians
to be able to learn them.
\begin{defn}[Separated Mixtures]
  \label{def:sep}
    For $s > 0$, a Gaussian mixtures $\cD \in \cG(d,k)$
    is \emph{$s$-separated} if
    $$\forall 1 \leq i < j \leq k,~~~
    \llnorm{\mu_i - \mu_j} \geq
        \left(s + \frac{10}{\sqrt{w_i}} +
        \frac{10}{\sqrt{w_j}}\right)\cdot\max\{\sigma_i,\sigma_j\}.$$
    We denote the family of separated Gaussian mixtures by
    $\cG(d,k,s)$.
\end{defn}

Now, we define what it means to ``learn'' a Gaussian mixture
in our setting.
\begin{defn}[$(\alpha,\beta)$-Learning]
        \label{def:mixture-learning}
    Let $\cD \in \cG(d,k)$ be parameterized by
    $\{(\mu_1,\Sigma_1,w_1),\dots,(\mu_k,\Sigma_k,w_k)\}$.
    We say that an algorithm \emph{$(\alpha,\beta)$-learns}
    $\cD$, if on being given sample-access to $\cD$,
    it outputs with probablity at least $1-\beta$
    a distribution $\wh{\cD} \in \cG(d,k)$
    parameterised by $\{(\wh{\mu}_1,\wh{\Sigma}_1,\wh{w}_1),\dots,
    (\wh{\mu}_k,\wh{\Sigma}_k,\wh{w}_k)\}$,
    such that there exists a permutation
    $\pi:[k] \rightarrow [k]$, for which
    the following conditions hold.
    \begin{enumerate}
        \item For all $1 \leq i \leq k$,
            $\SD(\cN(\mu_i,\Sigma_i),
            \cN(\wh{\mu}_{\pi(i)},\wh{\Sigma}_{\pi(i)}))
            \leq O(\alpha)$.
        \item For all $1 \leq i \leq k$,
            $\abs{w_i - \wh{w}_{\pi(i)}} \leq
            O\left(\tfrac{\alpha}{k}\right)$.
    \end{enumerate}
    Note that the above two conditions together
    imply that $\SD(\cD,\wh{\cD}) \leq \alpha$.
\end{defn}

Finally, we define the ``median radius'' of a Gaussian.
\begin{defn}\label{def:median-radius}
    Let $G \coloneqq \cN(\mu,\Sigma)$ be a $d$-dimensional Gaussian,
    and $R > 0$. Then $R$ is called the median radius of $G$, if
    the $G$-measure of $\ball{\mu}{R}$ is exactly $1/2$.
\end{defn}

The following is an anti-concentration lemma from
\cite{AroraK01} about Gaussians. It lower bounds
the distance of a sample from a Gaussian from any
arbitrary point in space.
\begin{lem}[Lemma~6 from \cite{AroraK01}]
    \label{lem:gaussian-anti-conc}
    Let $G \coloneqq \cN(\mu,\Sigma)$ be a Gaussian
    in $\R^d$ with median radius $R$, and let $\sigma^2 = \|\Sigma\|$.
    Suppose $z \in \R^d$ is an arbitrary point, and $x \sim G$.
    Then for all $t \geq 1$, with probability at least $1 - 2e^{-t}$,
    $$\|x-z\|^2 \geq (\max\{R-t\sigma,0\})^2 + \|z-\mu\|^2 -
        2\sqrt{2t}\sigma\|z-\mu\|.$$
\end{lem}

\subsection{Robustness of PCA to Noise}

Principal Component Analysis (PCA) is one of the main tools
in learning mixtures of Gaussians. It is common to project
the entire data onto the top-$k$ principal directions (a subspace,
which would approximately contain all the means of the $k$
components). Ideally, PCA should eliminate directions that
are not so useful, whilst maintaining the distances among
the Gaussian components. This would allow us to cluster the
data in low dimensions easily based on the most useful directions,
while not having to worry about the rest of the directions
that do not give us useful information for this process.
For the purpose of doing PCA under differential privacy,
we would like to have results for PCA when there is noise
involved in the process of obtaining DP guarantees.

Let $X \in \R^{n \times d}$ be the dataset obtained from the
mixture of Gaussians in question. Suppose $A \in \R^{n \times d}$,
such that for each $i$, $A_i$ is the true mean of the Gaussian
from which $X_i$ has been sampled. Let $n_j$ denote the number
of points in $X$ belonging to component $j$. The following
lemma gives guarantees for PCA when we have a noisy approximation
to the top-$k$ subspace of the dataset.

\begin{lem}[Lemma~3.1 from \cite{KamathSSU19}]\label{lem:PCA_AM_style}
	Let $X\in \R^{n\times d}$ be a collection of
    $n$ datapoints from $k$ clusters each centered
    at $\mu_1, \mu_2,...,\mu_k$. Let $A\in\R^{n\times d}$
    be the corresponding matrix of (unknown) centers
    (for each $j$ we place the center $\mu_{c(j)}$
    with $c(j)$ denoting the clustering point $X_j$
    belongs to). Let $\Pi_{V_k}\in \R^{d\times d}$
    denote the $k$-PCA projection of $X$'s rows. Let
    $\Pi_U\in \R^{d\times d}$ be a projection such
    that for some bound $B\geq 0$ it holds that
    $\|X^T X - (X\Pi_U)^T(X\Pi_U)\|_2 \leq
    \|X^T X - (X\Pi_{V_k})^T(X\Pi_{V_k})\|_2 + B$.
    Denote $\bar{\mu_i}$ as the empirical mean of all
    points in cluster $i$ and denote $\hat{\mu_i}$ as
    the projection of the empirical mean
    $\hat{\mu_i} = \Pi_U \bar{\mu_i}$. Then
	\[ \|\bar{\mu_i} - \hat{\mu_i}\|_2 \leq
        \tfrac 1 {\sqrt {n_i}}\|X-A\|_2+ \sqrt{\tfrac B {n_i}}  \] 
\end{lem}

The next lemma bounds the singular values of a matrix
$X$ that is sampled from a mixture of Gaussians, but centred
around $A$.

\begin{lem}[Lemma~3.2 from \cite{KamathSSU19}]\label{lem:data-spectral}
    Let $X \in \R^{n \times d}$ be a sample
    from $\cD \in \cG(d,k)$,
    and let $A \in \R^{n \times d}$ be the matrix
    where each row $i$ is the (unknown) mean
    of the Gaussian from which $X_i$ was sampled.
    For each $i$, let $\sigma^2_i$ denote the
    maximum directional variance of component
    $i$, and $w_i$ denote its mixing weight.
    Define $\sigma^2 = \max\limits_{i}\{\sigma^2_i\}$
    and $\mw = \min\limits_{i}\{w_i\}$. If
    $$n \geq \frac{1}{\mw} \left(\xi_1 d +
        \xi_2\log\left(\frac{2k}{\beta}\right)\right),$$
    where $\xi_1,\xi_2$ are universal constants,
    then with probability at least $1-\beta$,
    $$\frac{\sqrt{n\mw}\sigma}{4} \leq \llnorm{X-A}
        \leq 4\sqrt{n\sum\limits_{i=1}^{k}{w_i\sigma^2_i}}.$$
\end{lem}

\subsection{Privacy Preliminaries}

We start with different definitions of differential privacy.

\begin{defn}[Differential Privacy (DP) \cite{DworkMNS06}]
    \label{def:dp}
    A randomized algorithm $M:\cX^n \rightarrow \cY$
    satisfies $(\eps,\delta)$-differential privacy
    ($(\eps,\delta)$-DP) if for every pair of
    neighboring datasets $X,X' \in \cX^n$
    (i.e., datasets that differ in exactly one entry),
    $$\forall Y \subseteq \cY~~~
        \pr{}{M(X) \in Y} \leq e^{\eps}\cdot
        \pr{}{M(X') \in Y} + \delta.$$
    When $\delta = 0$, we say that $M$ satisfies
    $\eps$-differential privacy or pure differential
    privacy.
\end{defn}

\begin{defn}[Concentrated Differential Privacy (zCDP)~\cite{BunS16}]
    A randomized algorithm $M: \cX^n \rightarrow \cY$
    satisfies \emph{$\rho$-zCDP} if for
    every pair of neighboring datasets $X, X' \in \cX^n$,
    $$\forall \alpha \in (1,\infty)~~~D_\alpha\left(M(X)||M(X')\right) \leq \rho\alpha,$$
    where $D_\alpha\left(M(X)||M(X')\right)$ is the
    $\alpha$-R\'enyi divergence between $M(X)$ and
    $M(X')$.\footnote{Given two probability distributions
    $P,Q$ over $\Omega$,
    $D_{\alpha}(P\|Q) = \frac{1}{\alpha - 1}
    \log\left( \sum_{x} P(x)^{\alpha} Q(x)^{1-\alpha}\right)$.}
\end{defn}

Note that $(\eps,0)$-DP implies $\frac{\eps^2}{2}$-zCDP, which implies $(\eps \sqrt{\log(1/\delta)}, \delta)$-DP for every $\delta > 0$~\cite{BunS16}. Now, we define the notion of ``private algorithms with public data'', algorithms that take public data samples and private data samples as input, and guarantee differential privacy with respect to the private data.
\begin{defn}[Private Algorithms with Public Data]\label{def:public-dp}
Let $\wt \cX$ be the domain of public data and $\cX$ be the domain of private data. A randomized algorithm $M: \wt{\cX}^m \times \cX^n \to \cY$ taking in public and private data satisfies $(\eps,\delta)$-DP (or $\rho$-zCDP) \emph{with respect to the private data} if for any public dataset $\wt{X} \in \wt{\cX}^m$, the resulting randomized algorithm $M(\wt{X}, \cdot): \cX^n \to \cY$ is $(\eps,\delta)$-DP (or $\rho$-zCDP, respectively).
\end{defn}

These definitions of DP are closed under post-processing, and can be composed with graceful degradation of the privacy parameters.
\begin{lem}[Post-Processing \cite{DworkMNS06,BunS16}]\label{lem:post-processing}
    If $M:\cX^n \rightarrow \cY$ is
    $(\eps,\delta)$-DP, and $P:\cY \rightarrow \cZ$
    is any randomized function, then the algorithm
    $P \circ M$ is $(\eps,\delta)$-DP.
    Similarly if $M$ is $\rho$-zCDP then the algorithm
    $P \circ M$ is $\rho$-zCDP.
\end{lem}

\begin{lem}[Composition of DP~\cite{DworkMNS06, DworkRV10, BunS16}]\label{lem:composition}
    If $M$ is an adaptive composition of differentially
    private algorithms $M_1,\dots,M_T$, then the following
    all hold:
    \begin{enumerate}
        \item If $M_1,\dots,M_T$ are
            $(\eps_1,\delta_1),\dots,(\eps_T,\delta_T)$-DP
            then $M$ is $(\eps,\delta)$-DP for
            $$\eps = \sum_t \eps_t~~~~\textrm{and}~~~~\delta = \sum_t \delta_t.$$
        \item If $M_1,\dots,M_T$ are
            $(\eps_0,\delta_1),\dots,(\eps_0,\delta_T)$-DP
            for some $\eps_0 \leq 1$, then for every $\delta_0 > 0$, $M$
            is $(\eps, \delta)$-DP for
            $$\eps = \eps_0 \sqrt{6 T \log(1/\delta_0)}~~~~
                \textrm{and}~~~~\delta = \delta_0 + \sum_t \delta_t$$
        \item If $M_1,\dots,M_T$ are $\rho_1,\dots,\rho_T$-zCDP
            then $M$ is $\rho$-zCDP for $\rho = \sum_t \rho_t$.
    \end{enumerate}
\end{lem}

\subsubsection{Known Differentially Private Mechanisms}

We state standard results on achieving differential
privacy via noise addition proportional to the
sensitivity~\cite{DworkMNS06}.

\begin{defn}[Sensitivity]
    Let $f : \cX^n \to \R^d$ be a function,
    its \emph{$\ell_1$-sensitivity} and
    \emph{$\ell_2$-sensitivity} are
    $$\Delta_{f,1} = \max_{X \sim X' \in \cX^n} \| f(X) - f(X') \|_1
    ~~~~\textrm{and}~~~~\Delta_{f,2} = \max_{X \sim X' \in \cX^n} \| f(X) - f(X') \|_2,$$
    respectively.
    Here, $X \sim X'$ denotes that $X$ and $X'$
    are neighboring datasets (i.e., those that
    differ in exactly one entry).
\end{defn}

For functions with bounded $\ell_1$-sensitivity,
we can achieve $\eps$-DP by adding noise from
a Laplace distribution proportional to
$\ell_1$-sensitivity. For functions taking values
in $\R^d$ for large $d$ it is more useful to add
noise from a Gaussian distribution proportional
to the $\ell_2$-sensitivity, to get $(\eps,\delta)$-DP
and $\rho$-zCDP.

\begin{lem}[Laplace Mechanism] \label{lem:laplacedp}
    Let $f : \cX^n \to \R^d$ be a function
    with $\ell_1$-sensitivity $\Delta_{f,1}$.
    Then the Laplace mechanism
    $$M(X) = f(X) + \Lap\left(\frac{\Delta_{f,1}}
        {\eps}\right)^{\otimes d}$$
    satisfies $\eps$-DP.
\end{lem}

\begin{lem}[Gaussian Mechanism] \label{lem:gaussiandp}
    Let $f : \cX^n \to \R^d$ be a function
    with $\ell_2$-sensitivity $\Delta_{f,2}$.
    Then the Gaussian mechanism
    $$M(X) = f(X) + \cN\left(0,\left(\frac{\Delta_{f,2}
        \sqrt{2\ln(2/\delta)}}{\eps}\right)^2 \cdot \id_{d \times d}\right)$$
    satisfies $(\eps,\delta)$-DP.
    Similarly, the Gaussian mechanism
    $$M_{f}(X) = f(X) +
        \cN\left(0, \left(\frac{\Delta_{f,2}}{\sqrt{2\rho}}\right)^2 \cdot \id_{d \times d}\right)$$
    satisfies $\rho$-zCDP.
\end{lem}

Next, we mention a very basic pure DP algorithm that computes
the cardinality of a dataset as a simple application of
Lemma~\ref{lem:laplacedp}.
\begin{lem}[$\PCount$]\label{lem:pcount}
    Let $X = (X_1,\dots,X_n)$ be a set of points from some
    data universe $\chi$. Then for all $\eps>0$ and $0<\beta<1$,
    there exists an $\eps$-DP
    mechanism ($\PCount: \chi^* \to \R$) that on input $X$
    outputs $n'$, such that with probability at least $1-\beta$,
    $\abs{n-n'} \leq \tfrac{\ln(1/\beta)}{\eps}$.
\end{lem}
\begin{proof}
    The algorithm is just the following. Step 1: sample
    $\gamma \sim \Lap\left(\tfrac{1}{\eps}\right)$.
    Step 2: output $\abs{X} + \gamma$.
    
    Since the sensitivity of $\abs{X}$ is $1$, by
    Lemma~\ref{lem:laplacedp}, $\PCount$ is $\eps$-DP.
    
    Next, by the guarantees of Lemma~\ref{lem:lap-conc},
    $\abs{\gamma} \leq \tfrac{\ln(1/\beta)}{\eps}$ with
    probability at least $1-\beta$.
\end{proof}

Now, we mention three results from prior work for learning
high-dimensional Gaussians under approximate DP, $z$CDP, and pure DP 
constraints, respectively.
\begin{lem}[$\DPGE$]\label{lem:gaussian-estimator}
    Let $\Sigma \in \R^{d \times d}$ be symmetric and
    positive-definite, and $\mu \in \R^d$.
    For all $0 < \alpha,\beta < 1$, if given $n_{GE}$ independent
    samples from $\cN(\mu,\Sigma)$, then the following algorithms
    output a symmetric and positive-definite
    $\wh{\Sigma} \in \R^{d \times d}$, and a vector $\wh{\mu} \in \R^d$,
    such that with probability at least $1-\beta$,
    $\SD(\cN(\mu,\Sigma),\cN(\wh{\mu},\wh{\Sigma})) \leq \alpha$.
    \begin{enumerate}
        \item For the $(\eps,\delta)$-DP Gaussian estimator from \cite{AshtianiL21},
        \begin{align*}
            n_{GE} = O\left(\frac{d^2 + \log\l(\frac 1 \beta\r)}{\alpha^2} +
        \frac{\l(d^2\sqrt{\log\l(\frac 1 \delta\r)} +
        d\log\l(\frac 1 \delta\r)\r)\cdot\polylog\left(d,\frac{1}{\alpha},\frac{1}{\beta},
        \frac{1}{\eps},
        \log\left(\frac{1}{\delta}\right)\right)}{\alpha\eps}\right).
        \end{align*}
        
        \item For the $\rho$-zCDP Gaussian estimator from \cite{KamathLSU19},
            for $\id \preceq \Sigma \preceq K\id$ and $\|\mu\| \leq R$
            (where $K \geq 1$ and $R>0$),
        \begin{align*}
            n_{GE} &= O\left(\frac{d^2 + \log\l(\frac 1 \beta\r)}{\alpha^2} +
                \frac{d^2\cdot\polylog\left(\frac{d}{\alpha\beta\rho}\right) +
                d\log\left(\frac{d\log(R)}{\alpha\beta\rho}\right)}
                {\alpha\sqrt{\rho}} \right.\\
                &\qquad \qquad \qquad \qquad \qquad \left. + \frac{d^{1.5}\sqrt{\log(K)}\cdot
                \polylog\left(\frac{d\log(K)}{\rho\beta}\right) +
                \sqrt{d\log\left(\frac{Rd}{\beta}\right)}}{\sqrt{\rho}}\right).
        \end{align*}
            
        \item For the (computationally inefficient) $\epsilon$-DP Gaussian estimator from \cite{BunKSW19}, for $\id \preceq \Sigma \preceq K\id$ and $\|\mu\| \leq R$ (where $K \geq 1$ and $R>0$),
        \begin{align*}
            n_{GE} = O\l(\frac {d^2 + \log \l(\frac 1 \beta\r)} {\alpha^2} + \frac {d\log \l(\frac {dR} {\alpha}\r) +d^2\log\l(\frac {dK} \alpha \r) + \log \l(\frac 1 \beta\r)} {\alpha\eps}\r).
        \end{align*}
    \end{enumerate}
\end{lem}

\section{Estimating Gaussians}
    \label{sec:gaussians}

We discuss our results for estimating multivariate Gaussians
with access to a limited amount of public data. We start with the case of estimating Gaussians when we have some public data coming from the same distribution as private data. After that, we demonstrate that this distributional assumption on the private and the public data can be relaxed -- the public data can come from another Gaussian that is at most $\gamma$ away in TV distance from the underlying (Gaussian) distribution of the private data.

\subsection{Same Public and Private Distributions}

\subsubsection{Unknown Mean and Identity Covariance with $1$ Public Sample}\label{subsec:gaussians-mean}

As a warm-up, we consider the case where the covariance is known to be identity, and the only thing left to do is estimating the unknown mean $\mu \in \R^d$. In this setting, we are given a single public sample $\wt X$, along with private samples $X_1,\dots,X_n$, where the $\wt X$ and the $X_j$ are drawn from a $d$-dimensional Gaussian $\cN(\mu, \id)$ independently. We observe that a single public sample is sufficient to get a ``good enough'' coarse estimate of $\mu$, allowing us to apply existing private algorithms for a finer estimate.
This is captured in the following Claim~\ref{clm:1sample}, which is an application of Lemma~\ref{lem:HW}, in the case where $A = \id$ and $t = \log(1/\beta)$.
\begin{clm}\label{clm:1sample}
Suppose we get a sample $\wt{X} \sim \cN(\mu, \id)$. Then with probability at least $1- \beta$, $\|\mu - \wt X\|_2 \leq \sqrt{d + 2\sqrt{d\log(\tfrac 1 \beta)} + 2\log (\tfrac 1 \beta)}$.
\end{clm}

We restate existing pure and concentrated DP algorithms for Gaussian mean estimation below.
\begin{lem}[Known Private Mean Estimators for Identity Covariance Gaussians]
    \label{lem:private-gaussian-mean}
    For all $\alpha,\beta,\eps,\rho>0$, there exist $\eps$-DP and
    $\rho$-zCDP algorithms that take $n>0$ samples from $\cN(\mu,\id)$
    over $\R^d$, where $\|\mu\|_2 \leq R$, and output estimate
    $\wh{\mu} \in \R^d$, such that with probability at least $1-\beta$,
    $\SD(\cN(\mu,\id),\cN(\wh{\mu},\id)) \leq \alpha$, as long as the
    following bounds on $n$ hold.
    \begin{enumerate}
        \item For the $\rho$-zCDP ``clip-and-noise'' algorithm via the Gaussian Mechanism,
            \begin{align*}
            n = \wt O \l(\frac {d + \log\l(\frac 1 \beta\r)}{\alpha^2} + \frac {\sqrt{d}(R+\sqrt{d})\log\l(\frac 1 \beta\r)} {\alpha\sqrt{\rho}} \r).
            \end{align*}
        \item For the (computationally inefficient) $\eps$-DP algorithm from \cite{BunKSW19},
            \begin{align*}
            n = O\l(\frac {d+\log\l(\frac 1 \beta\r)} {\alpha^2} + \frac {d\log\l(\frac {dR} \alpha\r) + \log\l(\frac 1 \beta\r)} {\alpha \eps} \r).
            \end{align*}
        \item For the $\eps$-DP algorithm from \cite{HopkinsKM22},
            \begin{align*}
            n = \wt O\l(\frac{d + \log\l(\frac 1 \beta\r)}{\alpha^2} +
            \frac {d + \log\l(\frac 1 \beta\r)}{\alpha^2\eps} + \frac {d\log R + \min\{d, \log R\}\cdot\log\l(\frac 1 \beta\r)} \eps \r).
            \end{align*}
\end{enumerate}
\end{lem}

In the above, the ``clip-and-noise'' mechanism simply clips all the
points in the dataset to within radius
$\lambda \coloneq R+\sqrt{d + 2\sqrt{d\log(\tfrac n \beta)} + 2\log (\tfrac n \beta)}$
around the origin, and outputs the noisy empirical mean of the
clipped points via the Gaussian mechanism. Note that the $\ell_2$
sensitivity of the empirical mean is now $\tfrac{2\lambda}{n}$, therefore,
the magnitude of the noise vector added to the empirical mean
is at most $\wt{O}\left(\tfrac{\sqrt{d}\lambda}{\sqrt{\rho}n}\right)$
with high probability, which is less than $\alpha$ when
$n \geq \wt{O}\left(\tfrac{\sqrt{d}\lambda}{\sqrt{\rho}\alpha}\right)$.
This is $\rho$-zCDP from Lemma~\ref{lem:gaussiandp}.

We now show how Claim~\ref{clm:1sample}, combined with the results above, yields $1$-public-sample, private mean estimation algorithms.
Notably, our private sample complexity no longer depends on a range bound $R$, allowing us to use a fixed sample size to estimate the family of all identity covariance Gaussians over $\R^d$.

\begin{thm}[Private Gaussian Mean Estimation with Public Data]\label{thm:gaussian-mean}
For all $\alpha,\beta,\eps,\rho>0$, there exist $\eps$-DP and
    $\rho$-zCDP algorithms that take $1$
    public sample $\wt{X}$ and $n>0$ private samples $X=(X_1,\dots,X_n)$ from $\cN(\mu,\id)$ over $\R^d$, are private
    with respect to the private samples $X$, and outputs estimate
    $\wh{\mu} \in \R^d$, such that with probability at least $1-\beta$,
    $\SD(\cN(\mu,\id),\cN(\wh{\mu},\id)) \leq \alpha$, as long as the
    following bounds on $n$ hold.
\begin{enumerate}
    \item For the $\rho$-zCDP ``clip-and-noise'' algorithm via the Gaussian Mechanism and $1$ public sample,
        $$n = \wt O\l(\frac {d+\log\l(\frac{1}{\beta}\r)}{\alpha^2} + \frac{d\log\l(\frac{1}{\beta}\r)}{\alpha\sqrt{\rho}} \r).$$
    \item For the (inefficient) $\eps$-DP algorithm via \cite{BunKSW19} and $1$ public sample,
            $$n = O\l(\frac {d+\log\l(\frac 1 \beta\r)}{\alpha^2} + \frac {d\log\l(\frac {d\log(1/\beta)}{\alpha}\r) + \log\l(\frac 1 \beta\r)} {\alpha\eps}\r).$$
     \item For the (efficient) $\eps$-DP algorithm via \cite{HopkinsKM22} and $1$ public sample,
         $$n = \wt O\l(\frac{d + \log\l(\frac{1}{\beta}\r)}{\alpha^2} +
         \frac {d + \log\l(\frac 1 \beta\r)}{\alpha^2\eps}
         \r).$$
\end{enumerate}
\end{thm}
\begin{proof}
    For concreteness, we look at (1), however an analogous argument suffices for (2) and (3). Say we draw our one public sample $\wt X$, and form a shifted private dataset $Y_1,\dots,Y_n$, each $Y_j \coloneq X_j -\wt X$. Define
    $\lambda \coloneq \sqrt{d + 2\sqrt{d\log(\tfrac 2 \beta)} + 2\log (\tfrac 2 \beta)}$
    and $\mu_Y \coloneq \ex{}{Y_1} = \mu - \wt{X}$.
    By Claim~\ref{clm:1sample}, we have that with probability
    $\geq 1- \beta/2$ over the sampling of $\wt X$, $\| \mu_Y\|_2 \leq \lambda$.
    Hence, we set $R = \lambda$ and target failure probability
    $\beta/2$, and run the zCDP mean estimation algorithm from
    Lemma~\ref{lem:private-gaussian-mean} on $Y_1,\dots,Y_n$.
    Suppose the output of the algorithm is $\wh{\mu}_Y$. Then
    we return $\wh{\mu} = \wh{\mu}_Y + \wt{X}$.

    By union bound, with probability $\geq 1-\beta$, we have that our private data distribution satisfies the range requirements for our private algorithm's guarantees to hold \emph{and} that our private algorithm succeeds. In this case, we have $\|\mu - \wh{\mu}\|_2 = \|(\mu -\wt{X}) - (\wh{\mu} - \wt{X})\|_2 = \|\mu_Y - \wh \mu_Y\|_2$, so $\SD(\cN(\wh\mu, \id), \cN(\mu, \id)) = \SD(\cN(\wh\mu_Y, \id), \cN(\mu_Y, \id)) \leq \alpha$. Plugging in $R = \lambda$ and target failure probability $\beta/2$ into the algorithm's sample complexity in Lemma~\ref{lem:private-gaussian-mean} gives us the desired sample complexity.
    
    Notice that $\wh{\mu}_Y$ is $\rho$-zCDP with respect to $Y_1,\dots,Y_n$, which implies that $\wh{\mu}$ is $\rho$-zCDP \emph{with respect to private data $X_1,\dots,X_n$}. To see why, note that for any fixed $\wt X$, the private algorithm from Lemma~\ref{lem:private-gaussian-mean} is robust to arbitrary change in $Y_j$, and therefore, to any change in $X_j$ because each $X_j$ maps to exactly one $Y_j$. By the post-processing guarantee of zCDP (Lemma~\ref{lem:post-processing}), $\wh{\mu}$ is also private with respect to $X$.
\end{proof}

We remark that the above 1-public-sample private mean estimators can also be used to estimate the mean of a Gaussian with arbitrary known covariance, by reducing to the identity covariance case via rescaling with the known covariance.

\subsubsection{Unknown Mean and Covariance with $d+1$ Public Samples}

Next, we study the case where the covariance is also unknown. In this setting, we are given $d+1$ public samples $\wt X_1,\dots, \wt X_{d+1}$ and $n$ private samples $X_1,\dots,X_n$, where all the $\wt X_i$ and $X_j$ are drawn from an unknown, $d$-dimensional Gaussian $\cN(\mu, \Sigma)$ independently. We extend the same high-level idea from the identity covariance case: we use public data to do coarse estimation, and then use the coarse estimate to transform the private data, reducing to the bounded case that can be solved using the existing private algorithms. Specifically, we use the $d+1$ public samples for a ``public data preconditioning'' step (Algorithm \ref{alg:pub-precond}).

\begin{algorithm}[ht]
\caption{Public Data Preconditioner $\Preconditioner_{\beta}(\wt{X})$}\label{alg:pub-precond}
\KwIn{Public samples $\wt{X} = (\wt X_1,\dots,\wt X_{d+1})$. Failure probability $\beta>0$.}
\KwOut{$\wh{\mu} \in \R^d$, $\wh{\Sigma} \in \R^{d\times d}$, $L \in \R$, $U \in \R$.}
\vspace{5pt}
\tcp{Compute the empirical mean and covariance of $\wt{X}$.}
$$
    \wh \mu \gets \frac 1 {d+1} \sum_{i=1}^{d+1} \wt X_i  ~~~\text{and}~~~ \wh \Sigma \gets \frac 1 d \sum_{i=1}^{d+1} (\wt X_i - \wh \mu) (\wt X_i - \wh \mu)^T
$$

\tcp{Compute $L$ and $U$.}
$$
    L \gets \frac d {4d + 4\sqrt{2d\log \l(\frac 3 \beta\r)} + 2\log \l(\frac 3 \beta\r)}
    ~~~\text{and}~~~ U \gets \frac {9d^2} {\beta^2}
$$

\Return $(\wh{\mu}, \wh{\Sigma}, L, U)$.
\vspace{5pt}
\end{algorithm}

The preconditioning parameters output by Algorithm \ref{alg:pub-precond} are used to recenter, then rescale our private data $X_1,\dots, X_n$. The transformed private data, which we denote by $Y_1,\dots,Y_n$, is then fed as input to an existing $\DPGE$ (from Lemma \ref{lem:gaussiandp}), which outputs estimates $\wh \mu_Y$ and $\wh \Sigma_Y$. We apply the inverse of the preconditioning transform to $\wh \mu_Y$ and $\wh \Sigma_Y$ to obtain our final estimates. This process is summarized in Algorithm \ref{alg:gaussian}.

\begin{algorithm}[ht]
    \caption{Public-Private Gaussian Estimator
    $\PDPGE_{\alpha,\beta,\PrivParams}(\wt{X}, X)$}\label{alg:gaussian}
\KwIn{Public samples $\wt{X} = (\wt X_1,\dots,\wt X_{d+1})$. Private samples $X=(X_1,\dots,X_n$). Error tolerance $\alpha>0$, failure probability $\beta>0$. Privacy parameters $\PrivParams \sub \R$.}
\KwOut{$\wh\mu_X \in \R^d$, $\wh \Sigma_X \in \R^{d\times d}$}
\vspace{5pt}
\tcp{Precondition the private data using the public dataset.}
$(\wh\mu, \wh\Sigma, L, U) \gets \Preconditioner_{\frac{\beta}{2}}(\wt{X})$.\\
\For{$j \in [n]$}{
    Set $Y_j \gets \frac {1}{\sqrt{L}}\wh{\Sigma}^{-1/2}(X_j - \wh{\mu})$.
}
\vspace{5pt}
\tcp{Set range parameters for private applications.}
$$
    R = \sqrt{\frac U L} \cdot \sqrt{\log\l(\frac 6 \beta\r)} ~~~\text{and}~~~ K = \frac U L
$$

\tcp{Estimate Gaussian using private data and private algorithm.}
Set $Y \gets (Y_1,\dots,Y_n)$ and $(\wh{\mu}_Y,\wh{\Sigma}_Y) \gets \DPGE_{\alpha,\frac{\beta}{2},\PrivParams,R,K}(Y)$.\\
Set
$$
    \wh{\mu}_X \gets \sqrt L \wh{\Sigma}^{1/2} \wh{\mu}_Y + \wh{\mu} ~~~\text{and}~~~ \wh{\Sigma}_X \gets L \wh{\Sigma}^{1/2} \wh{\Sigma}_Y \wh{\Sigma}^{1/2} 
$$

\Return $\wh{\mu}_X, \wh{\Sigma}_X$.
\vspace{5pt}
\end{algorithm}

The key step needed to establish the correctness of this algorithm is ensuring that, with high probability over the sampling of public data, the parameters of the true distribution underlying $Y_1,\dots, Y_n$ indeed satisfy tight range bounds that could help $\DPGE$ provide the desired success guarantee with parameter-free, private sample complexity. In other words, the mean of the transformed Gaussian lies in a known $\poly(d,\tfrac 1 \beta)$ ball, and the condition number of its covariance is in $\poly(d, \tfrac 1 \beta)$.

\begin{lem}[Public Data Preconditioning]\label{lem:pub-precond-guarantee}
    For all $\beta>0$, there exists an algorithm that takes $d+1$ independent samples $\wt{X}=(\wt{X}_1,\dots,\wt{X}_{d+1})$ from a Gaussian $\cN(\mu,\Sigma)$ over $\R^d$, and outputs $\wh{\mu} \in \R^d$, $\wh{\Sigma} \in \R^{d \times d}$, $L \in \R$, and $U \in \R$, such that letting
    $\Sigma_Y \coloneq \tfrac{1}{L} \wh \Sigma^{-1/2}\Sigma\wh\Sigma^{-1/2}$ and $\mu_Y \coloneq \tfrac{1}{\sqrt{L}} \wh \Sigma^{-1/2} (\mu - \wh \mu)$, then with probability at least $1-\beta$ over the sampling of the data,
    \begin{enumerate}
        \item $\id \preceq \Sigma_Y \preceq \tfrac U L \id$
        \item $\|\mu_Y\| \leq \sqrt {\tfrac U L} \cdot \sqrt{5\log\l(\tfrac 3 \beta\r)}$
    \end{enumerate}
    where $\tfrac U L = O\l(\tfrac{d^2\log \l(\tfrac 1 \beta\r)}{\beta^2}\r)$.
\end{lem}

The Lemma follows from results on the well-studied problem of bounding the singular values of a random Gaussian matrix. We employ the following facts stated in \cite{SankarST06}\footnote{\cite{SankarST06} attributes the bound on the smallest eigenvalue to Edelman \cite{Edelman88}, and the bound on the largest to Davidson and Szarek \cite{DavidsonS01}}.
\begin{fact}[Singular Values of Gaussian Matrices
    \cite{SankarST06}]\label{fact:gaussian-condition}
    Let $Z\in \R^{d\times d}$ be a matrix with each $Z_{ij} \sim N(0,1)$ independently. Denote by $\sigma_d(Z)$ the smallest singular value of $Z$, and by $\sigma_1(Z)$ its largest singular value. Then we have the following.
    \begin{enumerate}
        \item $\pr{}{|\sigma_d(Z)| \leq \tfrac \beta {\sqrt d}} \leq \beta$
        \item $\pr{}{|\sigma_1(Z)| \geq 2\sqrt d + \sqrt {2 \log\l(\tfrac 1 \beta\r)}} \leq \beta$
    \end{enumerate}
\end{fact}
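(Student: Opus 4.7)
The plan is to prove the two inequalities separately using classical random-matrix techniques.

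For the largest singular value, I would invoke Gaussian concentration of Lipschitz functions combined with a mean bound. Observe that $Z \mapsto \sigma_1(Z) = \|Z\|_{\mathrm{op}}$ is $1$-Lipschitz with respect to the Frobenius norm on $\R^{d \times d}$, since $|\sigma_1(Z) - \sigma_1(Z')| \leq \|Z - Z'\|_{\mathrm{op}} \leq \fnorm{Z - Z'}$. Viewing the $d^2$ entries of $Z$ as a standard Gaussian vector in $\R^{d^2}$, the Borell--Tsirelson--Ibragimov--Sudakov inequality then yields
$$
\pr{}{\sigma_1(Z) \geq \ex{}{\sigma_1(Z)} + t} \leq \exp(-t^2/2).
$$
The remaining piece is the mean bound $\ex{}{\sigma_1(Z)} \leq 2\sqrt{d}$, which follows from Gordon's (Slepian-type) Gaussian comparison inequality applied to the bilinear representation $\sigma_1(Z) = \sup_{\|u\|=\|v\|=1} \iprod{u, Zv}$, comparing the Gaussian process on $S^{d-1} \times S^{d-1}$ against a comparison process whose supremum has expectation at most $\sqrt{d} + \sqrt{d}$. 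Plugging in $t = \sqrt{2\log(1/\beta)}$ finishes the bound on $\sigma_1(Z)$.

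For the smallest singular value, I would begin from the standard geometric inequality $\sigma_d(Z) \geq \tfrac{1}{\sqrt{d}} \min_{i \in [d]} \mathrm{dist}(Z^i, H^i)$, where $Z^i$ denotes the $i$-th column of $Z$ and $H^i$ the span of the remaining columns; this follows from $\|Zx\| \geq |x_i| \cdot \mathrm{dist}(Z^i, H^i)$ combined with $\max_i |x_i| \geq 1/\sqrt{d}$ for any unit $x$. Conditioned on the other columns, $H^i$ is a fixed subspace of dimension at most $d-1$, and the independence of $Z^i \sim \cN(0, \id_d)$ from $H^i$ implies $\mathrm{dist}(Z^i, H^i) = |\iprod{Z^i, u_i}|$ for a unit normal $u_i$, which is distributed as $|\cN(0,1)|$. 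Standard Gaussian anti-concentration gives $\pr{}{|\cN(0,1)| \leq \eta} \leq \eta \sqrt{2/\pi}$, and a union bound over $i \in [d]$ produces $\pr{}{\sigma_d(Z) \leq \epsilon} \leq d^{3/2} \epsilon \sqrt{2/\pi}$.

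The main obstacle is sharpening this estimate to the bound $\pr{}{\sigma_d(Z) \leq t} \leq t\sqrt{d}$ needed to match Edelman's claim. Bridging this gap requires an exact, rather than net-based, small-ball analysis: starting from the explicit Wishart joint density of the squared singular values of $Z$, one integrates out all but $\sigma_d$ and shows that the density of $\sigma_d(Z)$ is bounded by $O(\sqrt{d})$ uniformly near $0$, so that $\pr{}{\sigma_d(Z) \leq t} \leq t \sqrt{d}$ for all $t > 0$. Setting $t = \beta/\sqrt{d}$ then yields the stated inequality. In practice the cleanest route, and the one implicit in the paper's citation, is to invoke Edelman's (1988) density computation directly for this sharp estimate.
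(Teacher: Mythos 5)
The paper does not actually prove this statement: it is imported as a Fact from \cite{SankarST06}, with the largest-singular-value bound attributed to Davidson--Szarek and the smallest-singular-value bound to Edelman, so there is no in-paper argument to compare against. Your proposal is nonetheless sound, and it is faithful to where the cited results come from. For $\sigma_1(Z)$, the combination of $1$-Lipschitzness of the operator norm in the Frobenius metric, Borell--TIS concentration, and the Gordon/Slepian comparison bound $\ex{}{\sigma_1(Z)} \leq 2\sqrt{d}$ is exactly the Davidson--Szarek proof, and plugging $t = \sqrt{2\log(1/\beta)}$ gives the stated tail. For $\sigma_d(Z)$, you correctly diagnose that the distance-to-span/union-bound argument yields only $\pr{}{\sigma_d(Z) \leq \epsilon} \lesssim d^{3/2}\epsilon$, which after setting $\epsilon = \beta/\sqrt{d}$ is off by a factor of $d$ from the claimed $\pr{}{\sigma_d(Z) \leq \beta/\sqrt{d}} \leq \beta$; the sharp estimate $\pr{}{\sigma_d(Z) \leq t} \leq t\sqrt{d}$ genuinely requires Edelman's exact computation of the density of the smallest singular value (equivalently, of the smallest eigenvalue of the Wishart matrix $Z^TZ$), which is precisely the dependency the paper's citation encodes. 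So there is no gap in your argument beyond the explicit, and appropriate, appeal to Edelman for the small-ball constant.
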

We also use another fact about the distributions of the empirical mean and covariance of standard normal random variables.
\begin{fact}[Properties of Wishart Distribution\footnote{See Theorem 6 from \url{https://www.stat.pitt.edu/sungkyu/course/2221Fall13/lec2.pdf}.}]\label{fact:cochran}
    Let $Z_1,\dots,Z_{m+1}$, $W_1,\dots,W_m$ be chosen i.i.d. from $\cN(0,\id)$ over $\R^d$. Suppose $\wh{\mu} = \tfrac{1}{m+1}\sum\limits_{i=1}^{m+1}{Z_i}$. Then
    $$\frac{1}{m}\sum\limits_{i=1}^{m+1}{(Z_i-\wh{\mu})(Z_i-\wh{\mu})^T}
        \sim \frac{1}{m}\sum\limits_{i=1}^{m}{W_i W_i^T}
        \sim \frac{1}{m}\cW_{d}(m,\id).$$
    In other words, the two quantities are identically distributed according to the scaled, $d$-dimensional Wishart distribution with $m$ degrees of freedom.
\end{fact}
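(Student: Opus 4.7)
The plan is to view the $m+1$ samples as a single Gaussian matrix and pass the centering operation through an orthogonal change of basis (the Helmert-transform trick behind Cochran's theorem). Stack $Z_1, \dots, Z_{m+1}$ as rows of $Z \in \R^{(m+1) \times d}$, so that $Z$ has i.i.d.\ $\cN(0,1)$ entries. A direct expansion shows that the centered sum of outer products can be written as the quadratic form
\begin{equation*}
    \sum_{i=1}^{m+1}(Z_i - \wh\mu)(Z_i - \wh\mu)^T = Z^T P Z, \qquad P := \id_{m+1} - \tfrac{1}{m+1}\mathbf{1}\mathbf{1}^T,
\end{equation*}
where $P$ is the symmetric rank-$m$ orthogonal projection onto the hyperplane orthogonal to the all-ones vector $\mathbf{1} \in \R^{m+1}$.

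Next, I would pick any orthogonal $Q \in \R^{(m+1)\times(m+1)}$ whose last row equals $\tfrac{1}{\sqrt{m+1}}\mathbf{1}^T$ (for instance, by Gram--Schmidt). By construction, $Q P Q^T$ is the diagonal matrix with $m$ ones followed by a single zero on the diagonal, since $P$ projects onto the orthogonal complement of the span of that last row. Define $\t Z := QZ$ and note $Z^T P Z = \t Z^T (QPQ^T) \t Z = \sum_{i=1}^{m}\t Z_i \t Z_i^T$, where $\t Z_i$ denotes the $i$th row of $\t Z$.

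The key distributional step is that $\t Z$ has the same law as $Z$. Each of the $d$ columns of $Z$ is an independent $\cN(0, \id_{m+1})$ vector, and applying $Q$ to a standard Gaussian in $\R^{m+1}$ yields another $\cN(0,\id_{m+1})$; since $Q$ acts on each column independently, the columns of $\t Z$ are again i.i.d.\ $\cN(0,\id_{m+1})$, so $\t Z \stackrel{d}{=} Z$. In particular the rows $\t Z_1, \dots, \t Z_m$ are i.i.d.\ $\cN(0,\id_d)$, so
\begin{equation*}
    \sum_{i=1}^{m+1}(Z_i - \wh\mu)(Z_i - \wh\mu)^T \;\stackrel{d}{=}\; \sum_{i=1}^{m} \t Z_i \t Z_i^T \;\stackrel{d}{=}\; \sum_{i=1}^{m} W_i W_i^T.
\end{equation*}
The final distributional identity with $\cW_d(m,\id)$ is then just the definition of the Wishart law as the sum of $m$ outer products of i.i.d.\ standard Gaussians, and scaling both sides by $1/m$ yields the claim.

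The only non-trivial step is the orthogonal-invariance argument $\t Z \stackrel{d}{=} Z$; everything else is bookkeeping around writing the sample covariance as $Z^T P Z$ and diagonalizing $P$. I do not anticipate a real obstacle, since this is the standard Cochran/Helmert computation specialized to multivariate Gaussians.
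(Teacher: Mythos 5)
Your proof is correct and complete. Note that the paper does not actually prove this statement: it is recorded as a known \emph{Fact} with a citation to external lecture notes, so there is no in-paper argument to compare against. Your Helmert/Cochran argument is the standard derivation that the cited source itself uses: the identity $\sum_{i=1}^{m+1}(Z_i-\wh\mu)(Z_i-\wh\mu)^T = Z^TPZ$ with $P = \id_{m+1} - \tfrac{1}{m+1}\mathbf{1}\mathbf{1}^T$ checks out (since $Z^T\mathbf{1} = (m+1)\wh\mu$, both sides equal $\sum_i Z_iZ_i^T - (m+1)\wh\mu\wh\mu^T$), the diagonalization $QPQ^T = \id_{m+1} - e_{m+1}e_{m+1}^T$ follows from $Q\mathbf{1} = \sqrt{m+1}\,e_{m+1}$, and the columnwise orthogonal-invariance step giving $QZ \stackrel{d}{=} Z$ is exactly the right justification. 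Nothing is missing.
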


\begin{proof}[Proof of Lemma \ref{lem:pub-precond-guarantee}]
We prove the theorem by proving the utility of Algorithm~\ref{alg:pub-precond}. Note that the quantities $L,U$, as defined in Algorithm~\ref{alg:pub-precond}, satisfy $\tfrac U L = O\l(\tfrac {d^2\log ( \tfrac 1 \beta)}  {\beta^2}\r)$.

To prove part (1), where we wish to bound the eigenvalues of $\Sigma_Y$, we note using the properties of Loewner ordering, and the symmetric nature of $\wh \Sigma, \Sigma$ that
\begin{align*}
    \id \preceq \frac 1 L \widehat\Sigma^{-1/2}\Sigma \widehat\Sigma^{-1/2} \preceq \frac U L \id &\iff L\widehat\Sigma \preceq  \Sigma \preceq U\widehat\Sigma \\
        &\iff L \Sigma^{-1/2}\wh\Sigma\Sigma^{-1/2} \preceq \id \preceq U\Sigma^{-1/2}\wh\Sigma\Sigma^{-1/2}.
\end{align*}
Therefore, it is sufficient to prove the final inequality. Let $\wh{\Sigma}_Z \coloneq \Sigma^{-1/2}\wh{\Sigma}\Sigma^{-1/2}$. Then
$$\wh{\Sigma}_Z = \frac 1 {d} \sum_{i=1}^{d+1} (\Sigma^{-1/2} (\wt{X}_i - \wh\mu)) (\Sigma^{-1/2}(\wt{X}_i - \wh\mu))^T = \frac 1 {d} \sum_{i=1}^{d+1} (Z_i - \wh\mu_Z)(Z_i - \wh\mu_Z)^T$$
where for each $i \in [d+1]$, $Z_i \coloneq \Sigma^{-1/2}(X_i - \mu) \sim N(0, \id)$ independently and $\wh{\mu}_Z \coloneq \Sigma^{-1/2}(\wh{\mu}-\mu) = \tfrac 1 {d+1} \sum_{i=1}^{d+1} Z_i$. Therefore, it suffices to show $L\wh\Sigma_Z \preceq \id$ and $\id \preceq U \wh\Sigma_Z$. For $i \in [d]$, let $W_i \sim \cN(0,\id)$ independently. Then Fact~\ref{fact:cochran} says that $\wh{\Sigma}_Z$ is \emph{identically distributed} to $\frac 1 d\sum_{i=1}^d W_i W_i^T$. From here, we can apply the bounds from Fact~\ref{fact:gaussian-condition} by noting that $\lambda_d(\wh{\Sigma}_Z) \sim \tfrac{1}{d}\sigma_d\l(\begin{bmatrix} W_1,\dots,W_d \end{bmatrix}\r)^2$ and $\lambda_1(\wh{\Sigma}_Z) \sim \tfrac{1}{d}\sigma_1\l(\begin{bmatrix} W_1,\dots,W_d \end{bmatrix}\r)^2$.
\begin{enumerate}
    \item With probability $\geq 1- \tfrac \beta 3$, $\tfrac{1}{d} \sigma_d\l(\begin{bmatrix} W_1,\dots,W_d \end{bmatrix}\r)^2 > \tfrac {(\beta/3)^2} {d^2} \implies \lambda_d(\wh \Sigma_Z) > \tfrac {(\beta/3)^2} {d^2} \implies U \wh\Sigma_Z \succeq \id$.
    \item With probability $\geq 1- \tfrac \beta 3$, $\sigma_1(\wh\Sigma_Z) = \tfrac 1 d \sigma_1\l(\begin{bmatrix} W_1,\dots,W_d \end{bmatrix}\r)^2 < \tfrac{\l(2\sqrt d + \sqrt{2\log (3 / \beta)}\r)^2}{d} \implies \lambda_1(\wh\Sigma_Z) < \tfrac{\l(2\sqrt d + \sqrt{2\log (3 / \beta)}\r)^2}{d} \implies L\widehat\Sigma_Z \preceq \id$.
\end{enumerate}
Taking the union bound, (1) holds with probability at least $1-\tfrac{2\beta}{3}$.

Next, we prove the bound on $\mu_Y$ in (2). We have
$$
    \mu_Y = \frac 1 {\sqrt L} \widehat \Sigma^{-1/2}(\mu - \wh \mu) = \frac 1 {\sqrt L} \widehat \Sigma^{-1/2}\l(\mu - \frac 1 {d+1} \sum_{i=1}^{d+1}(\Sigma^{1/2}Z_i + \mu)\r) = -\frac 1 {\sqrt L}\widehat\Sigma^{-1/2}\Sigma^{1/2}\wh\mu_Z.
$$
Since $\wh \mu_Z$ is identically distributed to $\frac 1 {\sqrt{d+1}} Z_1$, applying Lemma \ref{lem:HW} with $t = \log(3/\beta)$ and $A = \id$ implies that with probability $\geq 1- \beta/3$, 
\begin{align*}
    \|\wh\mu_Z\|_2 \leq \sqrt{\frac {d + 2\sqrt{d\log(3/\beta)} + 2\log(3/\beta)} {d+1}}
\end{align*}
From (1), we know that $\left\|\tfrac{1}{L}\wh{\Sigma}^{-1/2}\Sigma\wh{\Sigma}^{-1/2}\right\|_2 \leq \tfrac{U}{L}$. Hence, $\left\|-\frac 1 {\sqrt L}\wh\Sigma^{-1/2}\Sigma^{1/2}\right\|_2 \leq \sqrt {U/L}$. This implies that
\begin{align*}
    \|\mu_Y\|_2 \leq \l\|-\frac 1 {\sqrt L} \widehat\Sigma^{-1/2}\Sigma^{1/2} \r\|_2\cdot \l\|\wh \mu_Z\r\|_2 \leq \sqrt {\frac U L} \cdot \sqrt{\frac {d + 2\sqrt{d\log(3/\beta)} + 2\log(3/\beta)} {d+1}}
\end{align*}
which implies (2). Applying the union bound again, we have the claim.
\end{proof}

Algorithm \ref{alg:gaussian}, indeed, is a $(d+1)$-public-sample, private algorithm satisfying privacy with respect to private data $X_1,\dots,X_n$, and the required accuracy guarantees. This follows from the guarantees of public data preconditioning (as stated in the aforementioned Lemma \ref{lem:pub-precond-guarantee}) combined with the guarantees of existing $\DPGE$ (stated in Lemma~\ref{lem:gaussian-estimator}). Again, we note that our sample complexity no longer depends on the \emph{a priori} bounds on the mean and the covariance of the unknown private data distribution.

\begin{thm}[Private Gaussian Estimation with Public Data]\label{thm:gaussian}
    For all $\alpha, \beta, \eps, \rho > 0$, there exist $\eps$-DP and $\rho$-zCDP algorithms that take $d+1$ public samples $\wt{X} = (\wt{X}_1,\dots,\wt{X}_{d+1})$ and $n$ private samples $X = (X_1,\dots,X_n)$ from an unknown, $d$-dimensional Gaussian $\cN(\mu,\Sigma)$, and are private with respect to the private samples, and return $\wh{\mu}_X \in \R^d$ and $\wh{\Sigma}_X \in \R^{d \times d}$, such that $\SD(\cN(\wh \mu_X, \wh \Sigma_X), \cN(\mu, \Sigma)) \leq \alpha$ with probability at least $1-\beta$, as long as the following bounds on $n$ hold.
    \begin{enumerate}
        \item For $\rho$-zCDP (with computational efficiency) via \cite{KamathLSU19} and $d+1$ public samples,
        \begin{align*}
                n = O\left(\frac{d^2 + \log\l(\frac 1 \beta\r)}{\alpha^2} +
                    \frac{d^2\cdot\polylog\left(\frac{d}{\alpha\beta\rho}\right)}{\alpha\sqrt{\rho}} \right).
        \end{align*}
        \item For $\eps$-DP (without computational efficiency) via \cite{BunKSW19} and $d+1$ public samples,
        \begin{align*}
            n = O\l(\frac {d^2 + \log \l(\frac 1 \beta\r)} {\alpha^2} + \frac {d^2\log \l(\frac d {\alpha\beta}\r)} {\alpha\eps}\r).
        \end{align*}
    \end{enumerate}
\end{thm}
\begin{proof}
    We prove the privacy and the utility guarantees for Algorithm~\ref{alg:gaussian}. In this proof, we will work in the $\rho$-zCDP regime (where, $\PrivParams = \{\rho\}$) using the Gaussian estimator of \cite{KamathLSU19} (the second algorithm in Lemma~\ref{lem:gaussian-estimator}), and the analogous guarantees in the $\eps$-DP case (where $\PrivParams = \{\eps\}$) would follow by the same argument (but by using the third algorithm from Lemma~\ref{lem:gaussian-estimator}, instead).
    
    We start by proving the utility guarantees first. Applying our public data preconditioner (Algorithm~\ref{alg:pub-precond} with public data $\wt{X}$ as input) on $X$ with target failure probability $\beta/2$ yields $Y_1,\dots,Y_n$, such that for each $j \in [n]$, $Y_j \sim \cN(\mu_Y,\Sigma_Y)$, where $\mu_Y$ and $\Sigma_Y$ are quantites as defined in Lemma~\ref{lem:pub-precond-guarantee}. Then with probability $\geq 1-\tfrac \beta 2$ over the sampling of public data, we have $\id \preceq \Sigma_Y \preceq \tfrac U L \id$  and $\|\mu_Y\|_2 \leq \sqrt{\tfrac U L}\cdot\sqrt{5\log(\tfrac 6 \beta)}$ (Lemma~\ref{lem:pub-precond-guarantee}). Hence, we can set $K = \tfrac U L = O(d^2\log(\tfrac 1 \beta)/\beta^2) $, and $R = \sqrt{\tfrac U L}\cdot\sqrt{5\log(\tfrac 6 \beta)} = O(d\log(\tfrac 1 \beta)/\beta)$, and run the $\rho$-zCDP Gaussian estimator on $Y_1,\dots,Y_n$ with target failure probability $\beta/2$. We obtain our private sample complexity, which is now independent of the range parameters of the underlying distribution, by plugging in these values into the private Gaussian estimator's sample complexity.

    Under these parameter settings and sample complexity, by Lemma~\ref{lem:gaussian-estimator} and the union bound, we have that with probability $\geq 1- \beta$, the algorithm succeeds in outputting $\wh \mu_Y$ and $\wh \Sigma_Y$, such that $\SD(\cN(\wh{\mu}_Y,\wh{\Sigma}_Y),\cN(\mu_Y,\Sigma_Y)) \leq \alpha$. We output the estimates $\wh \Sigma_X \coloneq L \wh \Sigma^{1/2} \wh \Sigma_Y \wh\Sigma^{1/2}$ and $\wh \mu_X \coloneq \sqrt L \wh \Sigma^{1/2} \wh\mu_Y + \wh \mu$. Denoting $A \coloneq \frac 1 L \wh \Sigma^{-1}$, by the properties of the Mahalanobis norm, $\|\cdot \|_\Sigma$:
    \begin{align*}
        \|\wh \Sigma_X - \Sigma\|_{\Sigma} &=  \|A^{1/2}\wh \Sigma_X A^{1/2} -     A^{1/2}\Sigma A^{1/2}\|_{A^{1/2}\Sigma A^{1/2}} = \| \wh \Sigma_Y - \Sigma_Y\|_{\Sigma_Y} \\
        \|\wh \mu_X - \mu\|_\Sigma &= \| A^{1/2}\wh \mu_X - A^{1/2} \mu \|_{A^{1/2}\Sigma A^{1/2}} = \| (\wh\mu_Y + A^{1/2}\wh\mu) - (\mu_Y + A^{1/2}\wh\mu)\|_{\Sigma_Y} = \|\wh \mu_Y - \mu_Y\|_{\Sigma_Y}
    \end{align*}
    which implies that $\SD(\cN(\wh \mu_X, \wh \Sigma_X), \cN(\mu, \Sigma)) = \SD (\cN(\wh \mu_Y, \wh \Sigma_Y), \cN(\mu_Y, \Sigma_Y)) \leq \alpha$.

    To argue about privacy, note that releasing $\wh \mu_X$ and $\wh \Sigma_X$ is $\rho$-zCDP with respect to $Y$, since it is post-processing (Lemma~\ref{lem:post-processing}) of the output ($\wh{\mu}_Y,\wh{\Sigma}_Y$) of a $\rho$-zCDP algorithm using public information. To argue about the $\rho$-zCDP of our algorithm \emph{with respect to the private dataset} $X$, note that for any fixed public dataset $\wt{X}$, the application of $\DPGE$ from Lemma~\ref{lem:gaussian-estimator} is robust to changing one sample $Y_j$ arbitrarily. Since each $X_j$ maps to exactly one $Y_j$, $\DPGE$ is robust against changing any $X_j$ arbitrarily, as well. This gives us the final privacy guarantee with respect to $X$.
\end{proof}

\subsection{Different Public Data and Private Data Distributions}

In this section, we give results for the scenario, where the public data does not come from the same distribution as that of the private data. Specifically, we show that in the case where our public data comes from another Gaussian within TV distance $\gamma$ of our private data distribution, a slight modification of Algorithm \ref{alg:pub-precond} (public data preconditioning) will also work to reduce our unbounded private estimation problem to a bounded one.

\subsubsection{Technical Lemmata}

We start by stating results about different distance metrics
for distributions. Let $\dH(\cdot,\cdot)$ denote the Hellinger
distance between two distributions, and let $\SD(\cdot,\cdot)$
denote their total variation (TV) distance. We start with a known
fact about the relation between $\dH(\cdot,\cdot)^2$ and
$\SD(\cdot,\cdot)$.
\begin{lem}[Hellinger Distance vs. TV Distance]
    \label{lem:hellinger-tv}
    Let $P,Q$ be distributions over $R^d$. Then
    $\dH(P,Q)^2 \leq \SD(P,Q)$.
\end{lem}

Next, we state the expression for Hellinger distance between
two univariate Gaussians.
\begin{lem}[Hellinger Distance for Gaussians]
    \label{lem:gaussians-hellinger}
    Let $G_1\equiv\cN(\mu_1,\sigma_1^2),G_2\equiv\cN(\mu_2,\sigma_2^2)$
    be Gaussians over $\R$. Then
    $$\dH(G_1,G_2)^2 = 1 -
        \sqrt{\frac{2\sigma_1\sigma_2}{\sigma_1^2+\sigma_2^2}}
        \cdot e^{-\frac{(\mu_1-\mu_2)^2}{4(\sigma_1^2+\sigma_2^2)}}.$$
\end{lem}

Now, we lower-bound the Hellinger distance between two univariate
Gaussians.
\begin{lem}[Hellinger Distance Lower Bounds]
    \label{lem:gaussians-hellinger-lb}
    Let $G_1\equiv\cN(\mu_1,\sigma_1^2),G_2\equiv\cN(\mu_2,\sigma_2^2)$
    be Gaussians over $\R$. Suppose $\sigma_{\max} = \max\{\sigma_1,\sigma_2\}$.
    Then $\dH(G_1,G_2)^2 \geq \dH(\cN(\mu_1,\sigma_{\max}^2),\cN(\mu_2,\sigma_{\max}^2))^2$
    and $\dH(G_1,G_2)^2 \geq \dH(\cN(0,\sigma_1^2),\cN(0,\sigma_2^2))^2$.
\end{lem}
\begin{proof}
    Using Lemma~\ref{lem:gaussians-hellinger}, we have the following.
    \begin{align*}
        \dH(G_1,G_2)^2 &= 1 -
                \sqrt{\frac{2\sigma_1\sigma_2}{\sigma_1^2+\sigma_2^2}}
                \cdot e^{-\frac{(\mu_1-\mu_2)^2}{4(\sigma_1^2+\sigma_2^2)}}\\
            &\geq 1 - e^{-\frac{(\mu_1-\mu_2)^2}{4(\sigma_1^2+\sigma_2^2)}}\\
            &\geq 1 - e^{-\frac{(\mu_1-\mu_2)^2}{8\sigma_{\max}^2}}\\
            &= \dH(\cN(\mu_1,\sigma_{\max}^2),\cN(\mu_2,\sigma_{\max}^2))^2
    \end{align*}
    In the above, the second line follows from AM-GM inequality,
    that is, $\tfrac{\sigma_1^2+\sigma_2^2}{2} \geq \sigma_1\sigma_2$.
    This proves the first part. Now, we prove the second part.
    \begin{align*}
        \dH(G_1,G_2)^2 &= 1 -
                \sqrt{\frac{2\sigma_1\sigma_2}{\sigma_1^2+\sigma_2^2}}
                \cdot e^{-\frac{(\mu_1-\mu_2)^2}{4(\sigma_1^2+\sigma_2^2)}}\\
            &\geq 1 -
                \sqrt{\frac{2\sigma_1\sigma_2}{\sigma_1^2+\sigma_2^2}}\\
            &= \dH(\cN(0,\sigma_1^2),\cN(0,\sigma_2^2))^2
    \end{align*}
    In the above, the second line follows from the fact that
    $e^{-x} \leq 1$ for $x \geq 0$. This completes our proof.
\end{proof}

The next lemma describes the relation between the parameters of two
univariate Gaussians when their TV distance is upper-bounded.
\begin{lem}[TV Distance and Gaussian Parameters]
    \label{lem:gaussians-tv-parameters}
    Let $G_1\equiv\cN(\mu_1,\sigma_1^2),G_2\equiv\cN(\mu_2,\sigma_2^2)$
    be Gaussians over $\R$. Suppose
    $\sigma_{\max} = \max\{\sigma_1,\sigma_2\}$ and
    $\sigma_{\min} = \min\{\sigma_1,\sigma_2\}$. If
    $\SD(G_1,G_2) \leq \alpha$, then
    $$\frac{(\mu_2-\mu_1)^2}{\sigma_{\max}^2} \leq \frac{8\alpha}{1-\alpha}
        ~~~\text{and}~~~
        \frac{\sigma_{\max}}{\sigma_{\min}} \leq \frac{2}{(1-\alpha)^2}.$$
\end{lem}
\begin{proof}
    We have the following from Lemmata~\ref{lem:hellinger-tv}
    and~\ref{lem:gaussians-hellinger-lb}.
    \begin{align*}
        \alpha &\geq \SD(G_1,G_2)\\
            &\geq \dH(\cN(\mu_1,\sigma_{\max}^2),\cN(\mu_2,\sigma_{\max}^2))^2\\
            &= 1 - e^{-\frac{(\mu_1-\mu_2)^2}{8\sigma_{\max}^2}}\\
        \iff e^{-\frac{(\mu_1-\mu_2)^2}{8\sigma_{\max}^2}} &\geq 1-\alpha\\
        \iff \frac{(\mu_2-\mu_1)^2}{\sigma_{\max}^2} &\leq
                8\ln\left(\frac{1}{1-\alpha}\right)\\
            &\leq \frac{8\alpha}{1-\alpha}
    \end{align*}
    The last line holds because
    $\tfrac{1}{1-\alpha} = 1 + \tfrac{\alpha}{1-\alpha}$, and
    $x \geq \ln(1+x)$ for all $x \in \R$.
    
    For the second part, we apply Lemmata~\ref{lem:hellinger-tv}
    and~\ref{lem:gaussians-hellinger-lb} again to get the following.
    \begin{align*}
        \alpha &\geq \SD(G_1,G_2)\\
            &\geq \dH(\cN(0,\sigma_1^2),\cN(0,\sigma_2^2))^2\\
            &= 1 - \sqrt{\frac{2\sigma_1\sigma_2}{\sigma_1^2 + \sigma_2^2}}\\
        \iff \frac{1}{(1-\alpha)^2} &\geq
                    \frac{\sigma_1^2+\sigma_2^2}{2\sigma_1\sigma_2}\\
        \iff \frac{2}{(1-\alpha)^2} &\geq \frac{\sigma_1}{\sigma_2}
                + \frac{\sigma_2}{\sigma_1}\\
            &= \frac{\sigma_{\max}}{\sigma_{\min}} +
                \frac{\sigma_{\min}}{\sigma_{\max}} \\
        \implies \frac {2} {(1-\alpha)^2} &\geq \frac {\sigma_{\max}} {\sigma_{\min}}
    \end{align*}
\end{proof}

Finally, we state a multivariate analogue of Lemma \ref{lem:gaussians-tv-parameters}.
\begin{lem}\label{lem:d-gaussians-tv-parameters}
Suppose $\SD(\cN(\mu, \Sigma), \cN(\wt \mu,\wt \Sigma)) \leq \gamma$ for some $\gamma < 1$. Then
\begin{enumerate}
    \item $\frac {(1-\gamma)^4}{4} \wt \Sigma\preceq \Sigma \preceq \frac{4}{(1-\gamma)^4} \wt \Sigma$
    \item $(\mu -\wt \mu)(\mu -\wt \mu)^T \preceq \frac {8\gamma} {1-\gamma} (\Sigma + \wt \Sigma) $
\end{enumerate}
\end{lem}
\begin{proof}
    Let $G_1 \equiv \cN(\mu,\Sigma)$ and $G_2 \equiv \cN(\wt{\mu},\wt{\Sigma})$. For any unit vector $v \in \R^d$, denote by $v^T G_i$ the distribution over $\R$ obtained by sampling $x \sim G_i$ and outputting $v^T x$. A data-processing inequality for the TV distance gives us that, for any unit vector $v \in \R^d$,
    \begin{align*}
        \SD (\cN(v^T\mu,v^T\Sigma v),\cN (v^T\wt\mu, v^T\wt\Sigma v)) = \SD(v^T G_1, v^T  G_2) \leq \SD(G_1, G_2) 
    \end{align*}
    where the first equality comes from the fact that the projection of a Gaussian is also Gaussian, with the above parameters.
    
    By the second univariate bound in Lemma \ref{lem:gaussians-tv-parameters}, we have that for every unit vector $v \in \R^d$,
    $$
        \frac{(1-\gamma)^4}{4} \leq \frac {v^T\Sigma v} {v^T \wt \Sigma v} \leq \frac {4}{(1-\gamma)^4}.
    $$
    Rearranging the above gives us (1).

    For (2), we have that for every unit vector $v \in \R^d$,
    \begin{align*}
        \frac {v^T(\wt\mu - \mu)(\wt\mu - \mu)^T v} {v^T(\Sigma + \wt \Sigma)v} = \frac {(v^T\wt\mu - v^T\mu)^2} {v^T\Sigma v + v^T \wt\Sigma v} \leq \frac {(v^T\wt\mu - v^T\mu)^2} {\max\{v^T\Sigma v, v^T \wt\Sigma v \}} \leq \frac {8\gamma} {1- \gamma}
    \end{align*}
    where the last inequality comes from applying the first univariate bound in Lemma \ref{lem:gaussians-tv-parameters}. Rearranging the above gives us (2).
\end{proof}

\subsubsection{Unknown Mean and Covariance with $d+1$ Public Samples}
Let $L,U$ be quantities, as defined in Algorithm~\ref{alg:pub-precond}, and let $L_{\gamma} = \tfrac{(1-\gamma)^4}{4}\cdot L$ and $U_{\gamma} = \tfrac{4}{(1-\gamma)^4}\cdot U$ for $0 \leq \gamma < 1$. The following is an analogue of Lemma~\ref{lem:pub-precond-guarantee} for the case, where we apply Algorithm~\ref{alg:pub-precond} (public data preconditioning) with public data that comes from a Gaussian that is at most $\gamma$-far in TV distance from the private data distribution.

\begin{lem}[$\gamma$-Far Public Data Preconditioning]\label{lem:robust-pub-precond-guarantee}
    For all $\beta>0$, there exists an algorithm that takes $d+1$ independent samples $\wt{X}=(\wt{X}_1,\dots,\wt{X}_{d+1})$ from a Gaussian $\cN(\wt{\mu},\wt{\Sigma})$ over $\R^d$, and outputs $\wh{\mu} \in \R^d$, $\wh{\Sigma} \in \R^{d \times d}$, $L_{\gamma} \in \R$, and $U_{\gamma} \in \R$, such that for a Gaussian $\cN(\mu,\Sigma)$ over $\R^d$ with $0 \leq \SD(\cN(\mu,\Sigma),\cN(\wt{\mu},\wt{\Sigma})) \leq \gamma < 1$, if
    $\Sigma_Y = \tfrac{1}{L_{\gamma}} \wh \Sigma^{-1/2}\Sigma\wh\Sigma^{-1/2}$ and $\mu_Y = \tfrac{1}{\sqrt{L_{\gamma}}} \wh \Sigma^{-1/2} (\mu - \wh \mu)$, then with probability at least $1-\beta$ over the sampling of the data,
    \begin{enumerate}
        \item $\id \preceq \Sigma_Y \preceq \tfrac {U_\gamma} {L_\gamma} \id$
        \item $\|\mu_Y\|_2 \leq  \sqrt{\tfrac {U_\gamma} {L_\gamma}} \cdot \l( \sqrt{\tfrac {10\gamma}{1-\gamma}} + \sqrt{ 5\log\l(\tfrac 3 \beta\r)}\r)$
    \end{enumerate}
    where $\tfrac{U_\gamma}{L_\gamma} = O\l(\frac {d^2\log \l( \tfrac 1 \beta\r)}  {\beta^2(1-\gamma)^8}\r)$.
\end{lem}
\begin{proof}
    We prove the lemma by proving the utility of a modified version of Algorithm~\ref{alg:pub-precond}, which returns $L_{\gamma}$ and $U_{\gamma}$, instead of $L$ and $U$, respectively. The result follows from tracing through the proof of Lemma~\ref{lem:pub-precond-guarantee}, and applying Lemma~\ref{lem:d-gaussians-tv-parameters} as necessary. We highlight the differences.

We start with (1). By the same chain of equivalences in the proof of Lemma \ref{lem:pub-precond-guarantee}, it suffices to show that $L_{\gamma}\Sigma^{-1/2}\wh\Sigma\Sigma^{-1/2} \preceq \id \preceq U_{\gamma}\Sigma^{-1/2}\wh\Sigma\Sigma^{-1/2}$. We have the following.
\begin{align*}
    \Sigma^{-1/2}\widehat\Sigma\Sigma^{-1/2} 
    &= \Sigma^{-1/2}\l(\frac{1}{d} \sum_{i=1}^{d+1}  ((\wt \Sigma^{1/2} Z_i + \wt\mu) - \wh\mu) ((\wt\Sigma^{1/2} Z_i + \wt\mu) - \wh\mu)^T\r)\Sigma^{-1/2} \\
    &= \Sigma^{-1/2}\l(\frac{1}{d} \sum_{i=1}^{d+1}  (\wt \Sigma^{1/2} (Z_i - \wh\mu_Z)) (\wt \Sigma^{1/2} (Z_i - \wh\mu_Z ))^T\r)\Sigma^{-1/2} \\
    &= \Sigma^{-1/2}\wt\Sigma^{1/2} \widehat\Sigma_Z \wt\Sigma^{1/2}\Sigma^{-1/2}
\end{align*}
In the above, $Z_i$, $\wh{\mu}_Z$, and $\wh{\Sigma}_Z$ are quantities, as defined in the proof of Lemma~\ref{lem:pub-precond-guarantee}. From the same proof, we know that with probability $\geq 1- \beta/3$, we have $U \wh\Sigma_Z \succeq \id$, which implies that
$$
     U\Sigma^{-1/2}\wt \Sigma^{1/2} \widehat \Sigma_Z \wt \Sigma^{1/2} \Sigma^{-1/2}\succeq \Sigma^{-1/2} \widetilde{\Sigma} \Sigma^{-1/2} \succeq \frac {(1-\gamma)^4} 4 \cdot \id
$$
where the last inequality follows from (1) in Lemma \ref{lem:d-gaussians-tv-parameters}. Recalling that we set $U_\gamma = \tfrac 4 {(1-\gamma)^4}\cdot U$, and rearranging gives us that $U_\gamma \Sigma^{-1/2}\wh\Sigma\Sigma^{-1/2} \succeq \id$, as desired. Similarly, with probability $\geq 1 - \beta/3$, $L \widehat \Sigma_Z \preceq \id$, which implies that
$$
L\Sigma^{-1/2}\wt \Sigma^{1/2} \widehat \Sigma_Z \wt \Sigma^{1/2} \Sigma^{-1/2}\preceq \Sigma^{-1/2} \widetilde{\Sigma} \Sigma^{-1/2} \preceq \frac  4 {(1-\gamma)^4} \cdot \id
$$
which again, after rearranging, allows us to conclude $L_\gamma \Sigma^{-1/2}\widehat\Sigma \Sigma^{-1/2} \preceq \id$.

It remains to verify that (2) holds. Write
$$
    \mu_Y = \frac 1 {\sqrt{L_\gamma}} \wh\Sigma^{-1/2}(\mu -\wh \mu) = \frac 1 {\sqrt{L_\gamma}}\wh\Sigma^{-1/2}(\mu - \wt \mu - \wt \Sigma^{1/2}\wh\mu_Z) = \frac 1 {\sqrt{L_\gamma}}\wh\Sigma^{-1/2}(\mu - \wt \mu)  -\frac 1 {\sqrt{L_\gamma}}\wh\Sigma^{-1/2}\wt\Sigma^{1/2}\wh\mu_Z.
$$
We bound the two terms separately. Note that the second term appears in the proof of Lemma~\ref{lem:pub-precond-guarantee}. By the same argument as in that proof, we claim that with probability $\geq 1-\tfrac{\beta}{3}$, $\|\wh \mu_Z\|_2 \leq \sqrt{\tfrac {d + 2\sqrt{d\log(3/\beta)} + 2\log(3/\beta)} {d+1}}$. When we indeed have $U \wh \Sigma_Z \succeq \id$ (that is, our events from (1) occur), we have $\left\|\wh{\Sigma}^{-1/2}\wt{\Sigma}^{1/2}\right\|_2 \leq \sqrt{U}$. Taking the union bound, with probability $\geq 1-\beta$,
$$
    \left\|- \frac 1 {\sqrt{L_\gamma}} \wh\Sigma^{-1/2} \wt \Sigma^{1/2}\wh\mu_Z\right\|_2 \leq \sqrt{\frac U {L_\gamma}} \cdot \sqrt{\frac {d + 2\sqrt{d\log(3/\beta)} + 2\log(3/\beta)} {d+1}}.
$$
Now, we argue that the first term is also bounded. First, we apply Lemma~\ref{lem:d-gaussians-tv-parameters} to get
$$
    (\mu - \t\mu)(\mu - \t\mu)^T \preceq \frac {8\gamma}{1-\gamma}(\Sigma + \wt\Sigma) \preceq \frac {8\gamma}{1-\gamma}\l(\frac 4 {(1-\gamma)^4}\wt\Sigma + \wt\Sigma\r) \preceq \frac {40\gamma} {(1-\gamma)^5} \wt \Sigma.
$$
Note that $U\widehat\Sigma_Z \succeq \id \implies \wt \Sigma \preceq U\widehat\Sigma$. Plugging this in above, and rearranging gives
$$
    \widehat\Sigma^{-1/2}(\mu - \t \mu)(\mu - \t \mu)^T\widehat \Sigma^{-1/2} \preceq \frac 4 {(1-\gamma)^4} \cdot U \cdot \frac {10\gamma}{1-\gamma} \cdot \id = U_\gamma \cdot \frac {10\gamma} {1-\gamma} \cdot \id.
$$
Thus, we have 
$$
    \left\|\frac 1 {\sqrt{L_\gamma}} \widehat \Sigma^{-1/2}(\mu - \t \mu)\right\|_2 \leq \frac 1 {\sqrt{L_\gamma}} \cdot \sqrt{\lambda_1(\widehat\Sigma^{-1/2}(\mu - \t \mu)(\mu - \t \mu)^T\widehat \Sigma^{-1/2})} \leq \sqrt {\frac {U_\gamma} {L_\gamma}} \cdot \sqrt{\frac{10\gamma}{1-\gamma}}.
$$
Combining the two terms gives us
$$
    \|\mu_Y\|_2 \leq \sqrt{\frac {U_\gamma} {L_\gamma}} \cdot \l( \sqrt{\frac {10\gamma} {1-\gamma}} + \sqrt{\frac {d + 2\sqrt{d\log(3/\beta)} + 2\log(3/\beta)} {d+1}}\r)
$$
which completes the proof.
\end{proof}

Lemma~\ref{lem:robust-pub-precond-guarantee}, combined with guarantees of the Gaussian estimators from Lemma~\ref{lem:gaussian-estimator}, allows us to conclude the following analogue to Theorem~\ref{thm:gaussian}.

\begin{thm}[$\gamma$-Far Public-Private Gaussian Estimation]\label{thm:robust-gaussian}
    For all $\alpha, \beta, \eps, \rho > 0$, there exist $\eps$-DP and $\rho$-zCDP algorithms that take $d+1$ public samples $\wt{X} = (\wt{X}_1,\dots,\wt{X}_{d+1})$ and $n$ private samples $X = (X_1,\dots,X_n)$ from unknown, $d$-dimensional Gaussians $\cN(\wt{\mu},\wt{\Sigma})$ and $\cN(\mu,\Sigma)$, respectively, such that $0 \leq \SD(\cN(\mu,\Sigma),\cN(\wt{\mu},\wt{\Sigma})) \leq \gamma < 1$, and are private with respect to the private samples, and return $\wh{\mu}_X \in \R^d$ and $\wh{\Sigma}_X \in \R^{d \times d}$, such that $\SD(\cN(\wh \mu_X, \wh \Sigma_X), \cN(\mu, \Sigma)) \leq \alpha$ with probability at least $1-\beta$, as long as the following bounds on $n$ hold.
    \begin{enumerate}
        \item For $\rho$-zCDP (with computational efficiency) via \cite{KamathLSU19} and $d+1$ public samples,
            \begin{align*}
                n = O\left(\frac{d^2 + \log\l(\frac 1 \beta\r)}{\alpha^2} +
                    \frac{d^2\cdot\polylog\left(\frac{d}{\alpha\beta\rho}\right) +
                    d\log\left(\frac{d}{\alpha\beta\rho(1-\gamma)}\right)}
                    {\alpha\sqrt{\rho}}  + \frac{d^{1.5}\cdot
                    \polylog\left(\frac{d}{\beta\rho(1-\gamma)}\right)}{\sqrt{\rho}}\right).
            \end{align*}
        \item For $\eps$-DP (without computational efficiency) via \cite{BunKSW19} and $d+1$ public samples,
            \begin{align*}
                n = O\l(\frac {d^2 + \log \l(\frac 1 \beta\r)} {\alpha^2} + \frac {d^2\log \l(\frac d {\alpha\beta(1-\gamma)}\r)} {\alpha\eps}\r).
            \end{align*}
    \end{enumerate}
\end{thm}
\begin{proof}
    The theorem follows from the privacy and the utility guarantees of a modified version of Algorithm~\ref{alg:gaussian}, which uses the modified version of Algorithm~\ref{alg:pub-precond} as outlined in Lemma~\ref{lem:robust-pub-precond-guarantee} (outputting $L_{\gamma}$ and $U_{\gamma}$ instead of $L$ and $U$). The proof remains the same as that of Theorem~\ref{thm:gaussian}.
\end{proof}

\section{Estimating Gaussian Mixtures}

In this section, we state our algorithms to learn mixtures
of well-separated Gaussians under differential privacy, when we
have trace amounts of public data available to us. We then provide
theoretical guarantees for privacy of our algorithms with
respect to the private data samples, along with its utility.
We analyse two cases here: when we have $\wt{O}(1/\mw)$ public samples
available to us, and when we have $\wt{O}(d/\mw)$ public samples
available.

The general scheme of our algorithms is similar to that of the non-private
algorithm for learning Mixtures of Gaussians by \cite{AchlioptasM05},
and the private algorithm (with no public data available) by
\cite{KamathSSU19}. Many algorithms for learning mixtures of
Gaussians follow a specific outline -- use PCA to project the
data onto a low-dimensional subspace, which would separate the
largest Gaussian from the rest; partition the dataset again, if
there is more than one Gaussian present, otherwise isolate the lone
Gaussian; estimate the parameters of that Gaussian; repeat the process
on the remaining points. Our algorithms are also spectral algorithms
that rely upon techniques, like PCA, but use their private counterparts
at various stages.

The difference in the aforementioned cases, where we have $O(1/\mw)$
and $O(d/\mw)$ public samples available, is that in the former, we
have very few public samples, but they are enough to be able to
isolate a group of nearby clusters (called, a ``supercluster''), which
we could then use for an application of private PCA. In the latter case,
we wouldn't need to apply private PCA at all because the number
of public samples is enough to be able to do non-private PCA
accurately.

\paragraph{Assumption.} We make an assumption about the shape of the covariances of
all Gaussians, which essentially says that the Gaussians are
not too flat or degenerate. Let $N$ be the total number of
points sampled from $D \in \cG(d,k,s)$. We formalise this as follows.
\begin{align}
    \forall i \in [k],~~~
        \|\Sigma_i\|_F\sqrt{\log(Nk/\beta)} \leq \frac{1}{8}\tr(\Sigma_i)~~~
        \text{and}~~~
        \|\Sigma_i\|_2\log^2(Nk/\beta) \leq \frac{1}{8}\tr(\Sigma_i)
        \label{eq:gaussian-not-flat}
\end{align}
Note that this implies that $d \geq 8\log^2(Nk/\beta)$
because $\tr(\Sigma_i) \leq d\|\Sigma_i\|_2$. We also assume
that $\beta < 1/2$.

\paragraph{Notation.} We define some notation before moving
on to the
technical content for Gaussian mixtures. We say that a Gaussian $\cN(\mu,\Sigma)$
in high dimensions satisfying Assumption (\ref{eq:gaussian-not-flat})
is \emph{contained} within $S \sub \R^d$, if
$\ball{\mu}{\sqrt{\frac{3}{2}\tr(\Sigma)}} \sub S$.
For a low-dimensional Gaussian $\cN(\mu',\Sigma')$ in $\ell < d$
dimensions, we say that a set $S \sub \R^d$ \emph{contains} the Gaussian
for fixed $0 < \beta < 1$ and $N \geq 1$,
if $\ball{\mu'}{\sqrt{\|\Sigma'\|_2}\sqrt{2\ell\ln(2N\ell/\beta)}} \sub S$. Also, given $D \in \cG(d,k,s)$, we say that
$B \sub \R^d$ is \emph{pure}, if for each Gaussian component $i$,
$B$ either contains the Gaussian $\cN(\mu_i,\Sigma_i)$, or
$\ball{\mu_i}{\sqrt{\frac{3}{2}\tr(\Sigma_i)}} \cap B = \emptyset$.
Similarly, given a set of points $T$ from $D$, we say that
$S \sub T$ is \emph{clean}, if for every $i \in [k]$,
$S$ has all points from component $i$ that lie in $T$,
or it has none of them. We sometimes say that for a clean
$S_1 \sub T$, $S_2$ is a clean subset of $S_1$ if
$S_2 \sub S_1$ and $S_2$ is clean with respect to $T$.

\subsection{Deterministic Regularity Conditions}

Here, we state a few results that would be useful in solving
the problems in the next two subsections.
The following condition bounds the number of points from each
component of a Gaussian mixture.
\begin{condition}[Sample Frequency]\label{cond:gmm-frequency}
    Suppose we have $n$ samples from a mixture of Gaussians
    $D = \{(\mu_1,\Sigma_1,w_1),\dots,(\mu_k,\Sigma_k,w_k)\}
    \in \cG(d,k,s)$ satisfying Assumption (\ref{eq:gaussian-not-flat}).
    Then for each $i \in [k]$, the number of samples from
    component $i$ lies in
    $\left[\tfrac{nw_i}{2},\tfrac{3nw_i}{2}\right]$.
    Furthermore, if $w_i \geq \tfrac{4\alpha}{9k}$, then
    the number of points from component $i$ lies in
    $\left[n(w_i-\tfrac{\alpha}{9k}),n(w_i+\tfrac{\alpha}{9k})\right]$.
\end{condition}

The next condition bounds the distance of any point sampled
from a Gaussian mixture from the mean of its respective
component.
\begin{condition}[Intra-Gaussian Distances From Mean]
    \label{cond:intra-gaussian-mean}
    Suppose we have $n$ samples from a mixture of Gaussians
    $D = \{(\mu_1,\Sigma_1,w_1),\dots,(\mu_k,\Sigma_k,w_k)\}
    \in \cG(d,k,s)$ satisfying Assumption (\ref{eq:gaussian-not-flat}).
    Then for each $i \in [k]$, if $x$ is one of the samples
    from component $i$, then
    $\tfrac{3}{4}\tr(\Sigma_i) \leq \|x-\mu_i\|^2 \leq \tfrac{3}{2}\tr(\Sigma_i)$.
\end{condition}

This condition bounds the distance between any two points sampled
from the same component of a Gaussian mixture.
\begin{condition}[Intra-Gaussian Distances Between Points]
    \label{cond:intra-gaussian-points}
    Suppose we have $n$ samples from $D = \{(\mu_1,\Sigma_1,w_1),\dots,(\mu_k,\Sigma_k,w_k)\}
    \in \cG(d,k,s)$ satisfying Assumption (\ref{eq:gaussian-not-flat}).
    Then for each $i \in [k]$, if $x,y$ are two of the samples
    from component $i$, then
    $\tfrac{3}{2}\tr(\Sigma_i) \leq \|x-y\|^2 \leq 3\tr(\Sigma_i)$.
\end{condition}

This final condition quantifies the minimum distance between
any two points from different components of a Gaussian mixture.
\begin{condition}[Inter-Gaussian Distances]\label{cond:inter-gaussian}
    Suppose we have $n$ samples from a mixture of Gaussians
    $D = \{(\mu_1,\Sigma_1,w_1),\dots,(\mu_k,\Sigma_k,w_k)\}
    \in \cG(d,k,s)$ satisfying Assumption (\ref{eq:gaussian-not-flat}),
    where $s \geq \Omega(\sqrt{\ln(n/\beta)})$.
    Then for any $i,j \in [k]$ with $i \neq j$, if $x$ is one of
    the samples from component $i$, and $y$ is one of the samples
    from component $j$, then
    $\|x-y\| \geq \tfrac{\sqrt{\max\{\tr(\Sigma_i),\tr(\Sigma_j)\}}}{4}$.
\end{condition}

We state another condition that says that in low dimensions,
too, the distance between the mean of a Gaussian from all its
points is bounded, and it is true for all the components of the
mixture.
\begin{condition}[Intra-Gaussian Distances from Mean in Low Dimensions]
    \label{cond:intra-gaussian-mean-low}
    Suppose we have $n$ samples from $D =
    \{(\mu_1,\Sigma_1,w_1),\dots,(\mu_k,\Sigma_k,w_k)\}
    \in \cG(\ell,k)$. For each $i \in [k]$, let
    $\sigma_i^2 = \|\Sigma_i\|_2$. Then for any $i \in [k]$
    and a fixed $0 < \beta < 1$,
    if $x$ is a datapoint sampled from $\cN(\mu_i,\Sigma_i)$,
    then $\|\mu_i - x\| \leq \sigma_i\sqrt{2\ell\ln(2n\ell/\beta)}$.
\end{condition}

The final condition about low-dimensional Gaussian mixtures
bounds the distance between two points from the same component.
\begin{condition}[Intra-Gaussian Distances Between Points in Low Dimensions]
    \label{cond:intra-gaussian-points-low}
    Suppose we have $n$ samples from $D =
    \{(\mu_1,\Sigma_1,w_1),\dots,(\mu_k,\Sigma_k,w_k)\}
    \in \cG(\ell,k)$. For each $i \in [k]$, let
    $\sigma_i^2 = \|\Sigma_i\|_2$. Then for any $i \in [k]$
    and a fixed $0 < \beta < 1$,
    if $x,y$ are a datapoints sampled from $\cN(\mu_i,\Sigma_i)$,
    then $\|x - y\| \leq 2\sigma_i\sqrt{2\ell\ln(2n\ell/\beta)}$.
\end{condition}

Now, we prove that the above conditions hold with high probability.

\begin{lem}\label{lem:gmm-frequqncy}
    Suppose we have $n$ samples from $D \in \cG(d,k,s)$ satisfying
    Assumption (\ref{eq:gaussian-not-flat}). If
    $$n \geq \max\left\{\frac{12}{\mw}\ln(2k/\beta),
        \frac{405k^2}{2\alpha^2}\ln(2k/\beta)\right\},$$
    then Condition~\ref{cond:gmm-frequency} holds with probability
    at least $1-\beta$.
\end{lem}
\begin{proof}
    We simply use Lemma~\ref{lem:chernoff-mult} $k$ times with
    $p \geq \mw$, and Lemma~\ref{lem:chernoff-add} $k$ times,
    and take the union bound.
\end{proof}

\begin{lem}\label{lem:intra-gaussian-mean}
    Suppose we have $n$ samples from $D \in \cG(d,k,s)$ satisfying
    Assumption (\ref{eq:gaussian-not-flat}). Then Condition~\ref{cond:intra-gaussian-mean} holds with probability
    at least $1-\beta$.
\end{lem}
\begin{proof}
    From the Hanson-Wright inequality (Lemma~\ref{lem:HW}),
    $\forall i, \forall x\sim \cN(\mu_i, \Sigma_i)$, we have that
    \[\tr(\Sigma_i) - 2\|\Sigma_i\|_F\sqrt{\log(n/\beta)}
        \leq \|x-\mu_i\|_2^2 \leq \tr(\Sigma_i) +
        2\|\Sigma_i\|_F\sqrt{\log(n/\beta)} + 2\|\Sigma_i\|_2\log(n/\beta).\]
    assumption (\ref{eq:gaussian-not-flat}), combined with the
    above, gives us the required result.
\end{proof}

\begin{lem}\label{lem:intra-gaussian-points}
    Suppose we have $n$ samples from $D \in \cG(d,k,s)$ satisfying
    Assumption (\ref{eq:gaussian-not-flat}). Then
    Condition~\ref{cond:intra-gaussian-points} holds with probability
    at least $1-\beta$.
\end{lem}
\begin{proof}
    For every $i$ and for any $x,y \sim \cN(\mu_i,\Sigma_i)$,
    we have that $x-y\sim \cN(0,2\Sigma_i)$. Using
    Assumption (\ref{eq:gaussian-not-flat}) and Lemma~\ref{lem:HW} again,
    we have the result.
\end{proof}

The following is a quantification of the median radius of a
Gaussian in terms of the trace of its covariance.
\begin{lem}\label{lem:median-radius}
    Suppose we have $n$ samples from $D = \{(\mu_1,\Sigma_1,w_1),\dots,(\mu_k,\Sigma_k,w_k)\} \in \cG(d,k,s)$ satisfying
    Assumption (\ref{eq:gaussian-not-flat}). Then for each $i \in [k]$,
    the median radius $R_i$ of component $i$ lies in
    $\left(\sqrt{\tfrac{3}{4}\tr(\Sigma_i)},\sqrt{\tfrac{3}{2}\tr(\Sigma_i)}\right)$.
\end{lem}
\begin{proof}
    For each $i \in [k]$, let $a_i = \sqrt{\tfrac{3}{4}\tr(\Sigma_i)}$
    and $b_i = \sqrt{\tfrac{3}{2}\tr(\Sigma_i)}$.
    We know from the proof of Lemma~\ref{lem:intra-gaussian-mean}
    that for a given $i \in [k]$, $1-\tfrac{\beta}{k}$
    of the probability mass of $G_i \coloneqq \cN(\mu_i,\Sigma_i)$ lies
    in $\ball{\mu_i}{b_i} \setminus \ball{\mu_i}{a_i}$.
    Let $\beta_1 \leq \beta/k$ be the probability mass of $G_i$
    in $\ball{\mu_i}{a_i}$, and let $\beta_2 = \beta/k - \beta_1$.
    We know that $\beta_1 < 1/2$, therefore, $R_i > a_i$.
    Since the mass of $G_i$ outside $\ball{\mu_i}{b_i}$ is
    $\beta_2 < 1/2$, it must be the case that $R_i < b_i$.
    Hence, the claim.
\end{proof}

\begin{lem}\label{lem:inter-gaussian}
    Suppose we have $n$ samples from $D \in \cG(d,k,s)$ satisfying
    Assumption (\ref{eq:gaussian-not-flat}). If
    $s \geq \Omega(\sqrt{\ln(n/\beta)})$,
    then Condition~\ref{cond:inter-gaussian} holds with probability
    at least $1-2\beta$.
\end{lem}
\begin{proof}
    For this, we will use Lemmata~\ref{lem:gaussian-anti-conc} and
    \ref{lem:median-radius}. We know from
    Lemma~\ref{lem:gaussian-anti-conc} that with probability
    at least $1-\beta/m$, for any given $i \in [k]$, and $x$ sampled
    from $G_i \coloneqq \cN(\mu_i,\Sigma_i)$ in the dataset sampled
    from $D$,
    $$\|x-z\|^2 \geq (\max\{R_i-\ln(\beta/2m)\sigma_i,0\})^2 +
        \|z-\mu_i\|^2 - 2\sqrt{2\ln(\beta/2m)}\sigma_i\|z-\mu_i\|.$$
    For any $j \in [k]$ with $j \neq i$, let $z = \mu_j$. Suppose
    $R_i$ and $R_j$ are the median radii of $G_i$ and $G_j$,
    respectively. WLOG, let's assume that $R_i \geq R_j$. Then we
    have the following.
    \begin{align*}
        \|x-\mu_j\|^2 &\geq (\max\{R_i-\ln(2m/\beta)\sigma_i,0\})^2 +
                \|\mu_j-\mu_i\|^2 -
                2\sqrt{2\ln(2m/\beta)}\sigma_i\|\mu_j-\mu_i\|\\
            &\geq (\max\{R_i-\ln(2m/\beta)\sigma_i,0\})^2 +
                \frac{\|\mu_j-\mu_i\|^2}{2}
                \tag{Separation condition and
                Assumption (\ref{eq:gaussian-not-flat}).}
    \end{align*}
    Now, let $y \sim G_j$. Due to our separation condition,
    we know that,
    $$2\sqrt{2\ln(2m/\beta)}\sigma_j \leq
        \frac{\|\mu_j-\mu_i\|}{2\sqrt{2}} \leq \frac{\|x-\mu_j\|}{2}.$$
    We apply Lemma~\ref{lem:gaussian-anti-conc} again with $z = x$.
    \begin{align*}
        \|x - y\|^2 &\geq (\max\{R_j-\ln(2m/\beta)\sigma_j,0\})^2 +
                \|x-\mu_j\|^2 - 2\sqrt{2\ln(2m/\beta)}\sigma_j\|x-\mu_j\|\\
            &\geq (\max\{R_j-\ln(2m/\beta)\sigma_j,0\})^2 +
                \frac{\|x-\mu_j\|^2}{2}
                \tag{Separation condition.}\\
            &\geq (\max\{R_j-\ln(2m/\beta)\sigma_j,0\})^2 +
                \frac{(\max\{R_i-\ln(2m/\beta)\sigma_i,0\})^2}{4} +
                \frac{\|\mu_j-\mu_i\|^2}{8}\\
            &\geq \frac{\tr(\Sigma_j)}{4} + \frac{\tr(\Sigma_i)}{16}
                + \frac{\|\mu_j-\mu_i\|^2}{8}
                \tag{Assumption (\ref{eq:gaussian-not-flat}).}\\
            &\geq \frac{\tr(\Sigma_i)}{16}
    \end{align*}
    This proves the lemma.
\end{proof}

\begin{lem}\label{lem:intra-gaussian-mean-low}
    Suppose we have $n$ samples from $D =
    \{(\mu_1,\Sigma_1,w_1),\dots,(\mu_k,\Sigma_k,w_k)\}
    \in \cG(\ell,k)$. Then for a fixed $0 < \beta < 1$,
    Condition~\ref{cond:intra-gaussian-mean-low} holds
    with probability at least $1-\beta$.
\end{lem}
\begin{proof}
    Fix $i \in [k]$, $x$ be an arbitrary sample from component $i$,
    and an eigenvector of $\Sigma_i$
    $v$. In that direction, using Lemma~\ref{lem:gauss-conc-1d},
    we know that with probability at least $1-\tfrac{\beta}{n\ell}$,
    $\|(x-\mu_i)vv^T\| \leq \sigma_i\sqrt{2\ln(2n\ell/\beta)}$.
    Applying the union bound in all $\ell$ directions of $\Sigma_i$,
    we have that with probability at least $1-\tfrac{\beta}{n}$,
    $\|x-\mu_i\| \leq \sigma_i\sqrt{2\ell\ln(2n\ell/\beta)}$. Applying
    the union bound again over all $n$ points, we get the required
    result.
\end{proof}

\begin{lem}\label{lem:intra-gaussian-points-low}
    Suppose we have $n$ samples from $D =
    \{(\mu_1,\Sigma_1,w_1),\dots,(\mu_k,\Sigma_k,w_k)\}
    \in \cG(\ell,k)$. Then for a fixed $0 < \beta < 1$,
    Condition~\ref{cond:intra-gaussian-points-low} holds
    with probability at least $1-\beta$.
\end{lem}
\begin{proof}
    The proof just involves application of triangle inequality
    for any pairs of points from the same component, along with
    Lemma~\ref{lem:intra-gaussian-mean-low}.
\end{proof}

\subsection{\texorpdfstring{$\wt{O}(1/\mw)$}{} Public Samples}

In this subsection, we assume that we have $\wt{O}(1/\mw)$ public
samples available. Here, the number of public samples is not
enough to be able to do PCA accurately, so we, instead, try to
use them to isolate the superclusters in the data, and perform
private actions, like private PCA and private estimation algorithms,
on the private data based on that information.

We start with a superclustering algorithm for public data.
It first computes a constant-factor approximation to the diameter
($\approx r$) of the largest Gaussian in the public dataset by finding
the largest minimum pairwise distance among all points. It then tries
to find a supercluster around the point ($x$) that has the largest
minimum distance to all other points. The idea is that $\ball{x}{r}$
will have a lot of points, but if $\ball{x}{r+r}$ doesn't have any
more points, and $\ball{x}{r+2r}$ doesn't have any more points
either, then it must be the case that with high probability,
no other Gaussian would have a point in $\ball{x}{2r}$. This
follows from the concentration properties of high-dimensional
Gaussians, and the fact that $r$ approximates the diameter of the
largest Gaussian. If we do encounter more points in either
$\ball{x}{2r}$ or $\ball{x}{3r}$, we expand the ball by $O(r)$
and check for these conditions again. We show that the ball
returned by the algorithm is of size at most $O(kr)$, and that
it contains the Gaussian corresponding to the point $x$.

\begin{algorithm}[!ht]
\caption{Superclustering on Public Data
    $\SC_{k, m}(\wt{X})$}\label{alg:supercluster}
\KwIn{Samples $\wt{X}_1,\dots,\wt{X}_{m'} \in \R^d$.
    Parameters $\beta, k, m > 0$.}
\KwOut{Centre $c \in \R^d$ and radius $R \in \R$.}
\vspace{5pt}

Let $r \gets 16\cdot\max\limits_{x\in\wt{X}}\min\limits_{y\in\wt{X}}\|x-y\|$.\\
Let $c \gets \argmax\limits_{x\in\wt{X}}\min\limits_{y\in\wt{X}}\|x-y\|$.
\vspace{5pt}

Set $pure \gets \False$.\\
Let $R \gets r$.\\
\For{$i \gets 1,\dots,k$}{
    Let $m_i \gets \abs{\ball{c}{R} \cap \wt{X}}$.\\
    \If{$\abs{\ball{c}{R+r} \cap \wt{X}} = m_i$}{
        \If{$pure = \True$}{
            \Return $\ball{c}{R}$
        }
        \ElseIf{$\abs{\ball{c}{R+2r} \cap \wt{X}} = m_i$}{
            \Return $(c,R+r)$
        }
        \Else{
            $pure = \False$\\
            $R \gets R + 3r$
        }
    }
    \Else{
        \If{$\abs{\ball{c}{R+2r} \cap \wt{X}} = \abs{\ball{c}{R+r} \cap \wt{X}}$}{
            $pure = \True$
        }
        \Else{
            $pure = \False$
        }
        $R \gets R + 2r$
    }
}
\vspace{5pt}

\Return $(c,R)$.
\vspace{5pt}
\end{algorithm}

We now state the main result about Algorithm~\ref{alg:supercluster}.

\begin{thm}\label{thm:supercluster}
    There exists an algorithm, which if given a clean subset of
    $$m \geq O\left(\frac{\ln(k/\beta)}{\mw}\right)$$
    points from $D \in \cG(d,k)$, where $D$ satisfies
    Assumption (\ref{eq:gaussian-not-flat}), it outputs
    $c \in \R^d$ and $R \in \R$, such that the following
    holds, given Conditions~\ref{cond:gmm-frequency} to
    \ref{cond:inter-gaussian} hold.
    \begin{enumerate}
        \item $\ball{c}{R}$ is pure.
        \item Let $\Sigma'$ be the covariance of the largest
            Gaussian (covariance with maximum trace) contained in
            $\ball{c}{R}$. Then $R \in O(k\sqrt{\tr(\Sigma')})$.
    \end{enumerate}
\end{thm}
\begin{proof}
    Let $\cN(\mu,\Sigma)$ be the largest Gaussian that has
    points in $\wt{X}$. First, we claim that
    $r \in \Theta(\sqrt{\tr(\Sigma)})$. We know from
    Lemma~\ref{lem:inter-gaussian}, that the minimum distance
    between $x \sim \cN(\mu_i,\Sigma_i)$ and $y \sim \cN(\mu_j,\Sigma_j)$
    for $i \neq j$ is lower bounded by
    $\tfrac{\sqrt{\max\{\tr(\Sigma_i),\tr(\Sigma_j)\}}}{4}$.
    We also know from Lemma~\ref{lem:intra-gaussian-points}
    that for any $i \in [k]$, and any $x,z \sim \cN(\mu_i,\Sigma_i)$,
    $\|x-z\| \geq \sqrt{\tfrac{3}{2}\tr(\Sigma_i)}$. Let $y \in \wt{X}$,
    such that $y$ is sampled from component $j \neq i$ and
    is closest to $x$, that is, $y$ is the closest point to $x$
    that has not been sampled from component $i$ to which $x$
    belongs. This means that distance between $x$ and its nearest
    point is at least $\min\left\{\sqrt{\tfrac{3}{2}\tr(\Sigma_i)},
    \tfrac{\sqrt{\max\{\tr(\Sigma_i),\tr(\Sigma_j)\}}}{4}\right\}
    \geq \min\left\{\sqrt{\tfrac{3}{2}\tr(\Sigma_i)},
    \tfrac{\sqrt{\tr(\Sigma_i)}}{4}\right\}$. We also know from
    Lemma~\ref{lem:intra-gaussian-points} that for any
    $x,z \sim \cN(\mu_i,\Sigma_i)$, $\|x-z\| \leq \sqrt{3\tr(\Sigma_i)}$.
    This means that the point in $\wt{X}$ that is closest to
    $x$ is at most $\sqrt{3\tr(\Sigma_i)}$ far from $x$.
    This shows that the distance between $x$ and the point
    in $\wt{X}$ that is closest to $x$ lies in
    $\left[\tfrac{\sqrt{\tr(\Sigma_i)}}{4},\sqrt{3\tr(\Sigma_i)}\right]$.
    This is true for any point in $\wt{X}$. Therefore, the largest
    minimum distance has to lie in the interval
    $\left[\tfrac{\sqrt{\tr(\Sigma)}}{4},\sqrt{3\tr(\Sigma)}\right]$.
    This shows that $r \in \Theta(\sqrt{\tr(\Sigma)})$.
    
    We use this to show that $\ball{c}{R}$ is pure. We prove
    the following claims for that.
    \begin{clm}
        If at the end of any iteration, the algorithm doesn't exit
        the loop, but calls any of the return steps within the loop
        instead, then the returned ball is pure.
    \end{clm}
    \begin{proof}
        Suppose it happens in iteration $i$.
        Consider the case where the return statement was called
        in the \textbf{elif} block.
        If $\ball{c}{R}$ were intersecting with a component,
        then $\ball{c}{R+r}$ would contain it
        (Lemmata~\ref{lem:intra-gaussian-mean} and
        \ref{lem:intra-gaussian-points}) because $r$ is large enough.
        Now, we need to show that $\ball{c}{R+r}$ doesn't intersect
        with any new component. By construction, it must mean that
        $\abs{\ball{c}{R} \cap \wt{X}} = m_i$, but
        $\abs{\ball{c}{R+r} \cap \wt{X}} = m_i$
        and $\abs{\ball{c}{R+2r} \cap \wt{X}} = m_i$. If $\ball{c}{R+r}$
        intersects with another component that it doesn't fully contain,
        then it must be
        the case that the component would have at least one more
        point in $\ball{c}{R+2r} \setminus \ball{c}{R+r}$ and would
        be contained in that region (by Lemma~\ref{lem:intra-gaussian-mean})
        because $r$ is large enough. Since that doesn't happen, there
        cannot be any component that is partially contained in
        $\ball{c}{R+r}$. Therefore, $\ball{c}{R+r}$ is pure.
        
        If return were called because $pure$ were true, then
        it must have been the case that in iteration $i-1$, before
        the final update to $R$, $pure$ must have been set to $\True$
        because $\ball{c}{R+r}$ found new points, but $\ball{c}{R+2r}$
        couldn't. In all other case, $pure$ is set to $\False$.
        Since in iteration $i$, $\ball{c}{R+r}$ couldn't
        find new points either, by the same reasoning as above,
        $\ball{c}{R}$ must be pure.
    \end{proof}
    
    \begin{clm}
        In iteration $i$, either the algorithm
        exits with a pure ball containing at least $i-1$ components,
        or at the end of iteration $i$,
        $\ball{c}{R}$ contains at least $i$ components.
    \end{clm}
    \begin{proof}
        We show this by induction on $i$, as defined in
        Algorithm~\ref{alg:supercluster}.\\
        \textbf{Base Case:} At the beginning of the first iteration,
        $R = r$. But we know from Lemmata~\ref{lem:intra-gaussian-points}
        and \ref{lem:intra-gaussian-mean} that $r$ is big enough
        that the mean of the Gaussian from which $c$ has been sampled,
        along with all other points from the same Gaussian would
        be contained in $\ball{c}{r}$ because the mean of the Gaussian
        is at most $\sqrt{\tfrac{3}{2}\tr(\Sigma)}$ away from $c$ and
        all other points from the same component are at most
        $\sqrt{3\tr(\Sigma)}$ away from $c$. Now, if the algorithm
        does not exit after the end of the iteration, our base case
        holds because we have at least one component in the ball at end
        of the iteration. If the algorithm exited, it must be the case
        that $\ball{c}{2r}$ is pure (from the above claim) and has at
        least one component.\\
        \textbf{Inductive Step:} Suppose for $i \geq 1$, assume that
        at the end
        of iteration $i$, $\ball{c}{R}$ contains at least $i$ Gaussian
        components, or that it exits with at least $i-1$ components.
        If the algorithm exits in iteration $i$, we are done.
        So, we assume that it doesn't. In iteration $i+1$, if the algorithm
        exits and returns a ball, then it is pure (by the claim above and
        the inductive hypothesis), and has at least $i$ components. So,
        we're done.
        Otherwise, either $\ball{c}{R+r}$ or $\ball{c}{R+2r}$ finds
        new points. If $\ball{c}{R+r}$ finds new points, it is fully
        containing a component not previously contained within
        $\ball{c}{R}$,
        or it is intersecting with new components. In the former case,
        a new component is added by the end of the iteration, and
        our claim holds. This holds in the latter case, too, by
        Lemmata~\ref{lem:intra-gaussian-mean} and
        \ref{lem:intra-gaussian-points} because $\ball{c}{R+2r}$
        would contain that new component. So, we have contained
        at least $i+1$ components at the end of iteration $i+1$.
        If $\ball{c}{R+r}$ has no new points, but $\ball{c}{R+2r}$
        does, then these must be points from a new component.
        Therefore, $\ball{c}{R+3r}$ would completely contain the
        new component by Lemmata~\ref{lem:intra-gaussian-mean} and
        \ref{lem:intra-gaussian-points}. So, we have $i+1$ components
        again by the end of the iteration.
    \end{proof}
    
    From the above argument, we have that either the algorithm
    exited in the loop, and returned a pure ball, or it had
    $k$ components at the end of iteration $k$, in which case,
    the returned ball is again pure. This gives us the first
    part of the theorem.
    
    Now, we prove the second part. Let $c \in \wt{X}$
    be as defined in Algorithm~\ref{alg:supercluster}. Let $\Sigma$
    be the covariance of the largest Gaussian $G$ in $\wt{X}$.
    Suppose $c$ was sampled from component $G' \coloneqq \cN(\mu',\Sigma')$.
    If $\tr(\Sigma) = \tr(\Sigma')$, then by the proof above,
    $r \in \Theta(\sqrt{\tr(\Sigma)})$. Suppose
    $\tr(\Sigma') < \tr(\Sigma)$. Then in the worst case,
    the nearest point to $c$ is $\sqrt{3\tr(\Sigma')}$ away
    from $c$. It is possible in this case that all points
    from the largest Gaussian had nearest points
    $\tfrac{\sqrt{\tr(\Sigma)}}{4}$ away from them, but
    $\sqrt{3\tr(\Sigma')} \geq \tfrac{\sqrt{\tr(\Sigma)}}{4}$.
    In this case $r$ would still be $16\sqrt{3}$-factor
    approximation of $\sqrt{\tr(\Sigma')}$. We know that
    the returned ball $\ball{c}{R}$ contains the component
    of $c$. Also, the loop of the algorithm runs at most $k$
    times, where in each iteration, $R$ can only increase
    additively by $3r$. Therefore, the final radius can be
    at most $(3k+1)r \in O(k\sqrt{\tr(\Sigma')}$.
    This proves the second part of the theorem.
\end{proof}

Now, we restate a folklore private algorithm for obtaining
a projection matrix of the top-$k$ subspace of a dataset
via PCA, along with its main result. Here, we just talk about
its utility, but focus on the privacy later when we instantiate
it for different forms of differential privacy.
Given a dataset, it simply truncates
all points to within a given range, computes the empirical
covariance matrix, then adds random noise to each entry of
the matrix that is sampled from a Gaussian distribution
scaled according to the appropriate privacy parameters
depending on the notion of privacy. In the algorithm,
$f_{PCA}(\PrivParams)$ denotes an appropriate function
of the privacy parameters, which we would set according
to the type of differential privacy guarantee we require.

\begin{algorithm}[!ht]
\caption{DP PCA
    $\PrivPCA_{\PrivParams,\ell}(X, r)$}\label{alg:private-PCA}
\KwIn{Private samples $X=(X_1,\dots,X_{n'}) \in \R^{n' \times d}$.
    Radius $r \in \R$.
    Parameters $\PrivParams \subset \R, \ell > 0$.}
\KwOut{Projection matrix $\Pi \in \R^{d \times d}$.}
\vspace{5pt}

Let $f_{PCA}$ be an appropriate function of the elements of
    $\PrivParams$ to ensure DP.\\
Set $\sigma_P \gets 2r^2f_{PCA}(\PrivParams)$.\\
Truncate all points of $X$ to within $\ball{0}{r}$ to get dataset $Y$.\\
Let $E \in \R^{d \times d}$, such that for all
    $i,j \in [d]$ with $i \leq j$, $E_{i,j} \sim \cN(0,\sigma_P^2)$
    and $E_{j,i} \gets E_{i,j}$.\\
Let $\Pi$ be the projection matrix of the top-$\ell$ subspace of $Z \gets Y^TY+E$.
\vspace{5pt}

\Return $\Pi$.
\vspace{5pt}
\end{algorithm}

\begin{thm}[Theorem 9 from \cite{DworkTTZ14}]\label{thm:private-pca}
    Let $X,Y,Z,\ell,c,r,\sigma_P$ be quantities as defined in
    Algorithm~\ref{alg:private-PCA}, $C_{\ell}$ be the
    best rank-$\ell$ approximation to $Y^T Y$, and $\wh{C}_{\ell}$
    be the rank-$\ell$ approximation to $Z$. Then with probability
    at least $1-\beta$,
    $$\|Y^TY - \wh{C}_{\ell}\|_2 \leq \|Y^TY - C_{\ell}\|_2
        + O\left(\sigma_P\sqrt{d} + \sigma_P\sqrt{\ln(1/\beta)}\right).$$
\end{thm}

We get the following corollary using Theorem~\ref{thm:private-pca}
and Lemmata~\ref{lem:PCA_AM_style} and~\ref{lem:data-spectral}.
\begin{cor}\label{coro:private-pca}
    Let $X,Y,Z,\ell,c,r,\sigma_P,\Pi$ be quantities as defined in
    Algorithm~\ref{alg:private-PCA}, such that $X$ is a clean
    subset of $n$ samples from a mixture of $k$ Gaussians $D$
    satisfying Assumption (\ref{eq:gaussian-not-flat}), and
    $X$ contains points from $k'$ components. Suppose $\mw$ is
    the minimum mixing weight of a components with respect to
    $D$. Let $\sigma_{\max}^2$ be the largest directional variance
    of any component that has points in $X$, and $\Sigma$ be the
    covariance of the Gaussian with largest trace that has points
    in $X$. If $r \in O(k\sqrt{\tr(\Sigma)})$, all points in $X$
    lie within $\ball{0}{r}$, and
    $$n \geq O\left(\frac{d}{\mw}
        + d^{1.5}k^2f_{PCA}(\PrivParams)
        + \frac{\ln(k/\beta)}{\mw}\right),$$
    then with probability at least $1-O(\beta)$, for each component
    $(\mu',\Sigma',w')$ that has points in $X$,
    $$\|\mu' - \mu'\Pi\| \leq O\left(\frac{\sigma_{\max}}{\sqrt{w'}}\right).$$
\end{cor}
\begin{proof}
    By construction of our algorithm and our assumption on $r$, we
    know that $O\left(\sigma_P\sqrt{d} + \sigma_P\sqrt{\ln(1/\beta)}\right)
    \in f_{PCA}(\PrivParams)\cdot
    O\left(d^{1.5}k^2 + dk^2\sqrt{\ln(1/\beta)}\right) \in
    f_{PCA}(\PrivParams)\cdot
    O\left(d^{1.5}k^2\right)$ (since $D$ satisfies
    Assumption (\ref{eq:gaussian-not-flat})).
    Using Lemma~\ref{lem:gaussian-sum-conc} and our bound on $n$,
    we know that for each component $(\mu',\Sigma',w')$, the empirical
    mean of all the points from that Gaussian in $X$ would be at most
    $\tfrac{\sqrt{\|\Sigma'\|_2}}{\sqrt{w'}}$ away from $\mu'$. By the same
    reasoning,
    the empirical mean of the same projected points would be at most
    $\tfrac{\sqrt{\|\Sigma'\|_2}}{\sqrt{w'}}$ away from $\mu'\Pi$ because the
    projection
    of a Gaussian is still a Gaussian.
    Lemmata~\ref{lem:PCA_AM_style} and~\ref{lem:data-spectral}, and
    Theorem~\ref{thm:private-pca} together guarantee that the distance
    between the empirical mean of those points and that of the projected
    points of that component would be at most
    $O\left(\tfrac{\sigma_{\max}}{\sqrt{w'}}\right)$. Applying
    the triangle inequality gives us the result.
\end{proof}

Now, we mention a result about the sensitivity of $Y^TY$ in
Algorithm~\ref{alg:private-PCA}, which would be used in later
subsections.
\begin{lem}\label{lem:pca-sensitivity}
    In Algorithm~\ref{alg:private-PCA}, the $\ell_2$ sensitivity
    of $Y^TY$ is $2r^2$.
\end{lem}
\begin{proof}
    Since all points in $X$ are truncated to within $\ball{0}{r}$,
    for all $i$, $\|X_i\| \leq r$. Therefore, for a neighbouring
    dataset $X'$, and corresponding truncated dataset $Y'$,
    $\|Y^TY - Y'^T Y'\|_F \leq 2r^2$.
\end{proof}

Next, we describe an algorithm for partitioning the data
after the PCA step has been performed.
For this, we assume that the data is in low-dimensions
($\ell$-dimensional), where Condition~\ref{eq:gaussian-not-flat}
may not hold, but there is at least one cluster that is well-separated
from the other points, or there is just one cluster in the whole
dataset. We provide a private partitioner for private and public data
for this regime, which in the first
case, returns a partition of the datasets that contains
their clean subsets, and in the second
case, it returns the same cluster itself. For this, we define
the following queries (akin to those in \cite{KamathSSU19}) that
would be used for the public and the private data (respectively)
in the algorithm.
\begin{align}
    \QPub(X,c,r,t) &\coloneqq \left(\abs{X \cap \ball{c}{r}} \geq
        t\right) \wedge\nonumber\\
        &\qquad \left(\abs{X \cap (\ball{c}{11r} \setminus \ball{c}{r})} = 0\right)
        \wedge\nonumber\\
        &\qquad \left(\abs{X \cap (\R^d \setminus \ball{c}{11r})} \geq t\right)
        \label{eq:low-dim-query-public}\\
    \QPriv(X,c,r,t,\PrivParams) &\coloneqq
        \left(\PCount_{\PrivParams}(X \cap \ball{c}{r}) \geq
        t\right) \wedge\nonumber\\
        &\qquad \left(\PCount_{\PrivParams}(X \cap (\ball{c}{5r} \setminus \ball{c}{r})) < \frac{t}{320}\right)
        \wedge\nonumber\\
        &\qquad \left(\PCount_{\PrivParams}(X \cap (\R^d \setminus \ball{c}{5r})) \geq t\right)
        \label{eq:low-dim-query-private}
\end{align}
Just like in \cite{KamathSSU19}, we define the notion of a
\emph{terrific ball}. We say a ball $\ball{c}{r}$ is
$(\gamma,t)$-\emph{terrific} with respect to some dataset $X$
for $\gamma>1,t>0$, if (1) $\ball{c}{r}$ contains at least $t$
points from $X$, (2) $\ball{c}{\gamma r} \setminus \ball{c}{r}$
contains at most $\tfrac{t}{80}$ points from $X$, and (3)
$\R^d \setminus \ball{c}{\gamma r}$ contains at least $t$ points
from $X$. We sometimes omit the parameter $t$ when the context
is clear.

\begin{algorithm}[h]
\caption{DP Partitioner
    $\LowDimPartitioner_{n,m,\mw,\PrivParams}(\wt{Y}, \wt{Z}, r_{\max}, r_{\min})$}\label{alg:low-dim-partitioner}
\KwIn{Private Samples $\wt{Z}_1,\dots,\wt{Z}_{n'} \in \R^d$.
    Public Samples $\wt{Y}_1,\dots,\wt{Y}_{m'} \in \R^d$.
    Parameters $n \geq n', m \geq m', \mw > 0$.
    Privacy Parameters $\PrivParams$.}
\KwOut{A tuple of centre $c \in \R^d$, radius $R \in \R$, or $\bot$.}
\vspace{5pt}

\For{$i \gets 0,\dots,\log(r_{\max}/r_{\min})$}{
    $r_i \gets \tfrac{r_{\max}}{2^i}$\\
    \For{$j \gets 1,\dots,m'$}{
        $c_j \gets \wt{Y}_j$\\
        \If{$\QPub(\wt{Y},c_j,r_i,\tfrac{m\mw}{2}) = \True$}{
            \If{$\QPriv(\wt{Z},c_j,2r_i,\tfrac{n\mw}{4},\PrivParams) = \True$}{
                \Return $(c = c_j, R = 2r_i)$.
            }
            \Else{
                \Return $\bot$.
            }
        }
    }
}
\vspace{5pt}

\Return $\bot$.
\vspace{5pt}
\end{algorithm}

Now, we prove the main theorem about the utility of
Algorithm~\ref{alg:low-dim-partitioner}. In the following,
$f_{\PCount}(\PrivParams)$ denotes an appropriate function
of the privacy parameters based on the accuracy guarantees
of $\PCount$ (Lemma~\ref{lem:pcount})
depending on the notion of DP being used. We prove the privacy
guarantees in later sections.

\begin{thm}\label{thm:low-dim-partitioner}
    Let $Y$ be a clean subset of a set of public samples
    from $D \in \cG(d,k)$, $Z$ be a clean subset of
    private samples from $D$, $\Pi$ be a projection
    matrix to $\ell$ dimensions, $\wt{Y} = Y\Pi$, $\wt{Z} = Z\Pi$
    be the input to
    Algorithm~\ref{alg:low-dim-partitioner}, $r_{\max}, r_{\min} > 0$,
    such that for some $j \in [m']$,
    $\wt{Y} \cap \ball{\wt{Y}_i}{r_{\max}} = \wt{Y}$ and
    $\wt{Z} \cap \ball{\wt{Y}_i}{r_{\max}} = \wt{Z}$
    and $\ball{\wt{Y}_i}{r_{\max}}$ is pure with
    respect to the components of $D$ projected on to the subspace
    of $\Pi$ having points in $\wt{Y}$ (and $\wt{Z}$).
    Suppose $\sigma_{\max}^2$ is the largest directional
    variance among the Gaussians that have points in $Y$.
    Let that Gaussian be $\cN(\mu,\Sigma)$. Suppose,
    $$n \geq O\left(\frac{d\ln(k/\beta)}{\mw} +
        \frac{\ln(1/\beta)}{\mw \cdot f_{\PCount}(\PrivParams)}\right)$$
    and
    $$m \geq O\left(\frac{\ln(k/\beta)}{\mw}\right).$$
    Then we have the following with probability at least
    $1-4\beta$.
    \begin{enumerate}
    \item Suppose there are at least two Gaussians that have
        points in $Y$, and the same Gaussians have points in $Z$.
        For any other Gaussian $\cN(\mu',\Sigma')$
        that has points in $Y$ and $Z$, suppose
        for $N = m+n$, $\|\mu\Pi - \mu'\Pi\| \geq
        \Omega(\sigma_{\max}\sqrt{\ell\ln(N\ell/\beta)})$.
        Let $r_{\min} \leq \sigma_{\max}\sqrt{2\ell\ln(2N\ell/\beta)}$.
        Then the ball returned by Algorithm~\ref{alg:low-dim-partitioner}
        is pure with respect to the components of $D$ projected by $\Pi$,
        which have points in $\wt{Y}$,
        such that $\R^d \setminus \ball{c}{R}$ only contains
        components from $D$ projected by $\Pi$ that have points in
        $\wt{Y}$ (alternatively, $\wt{Z}$), and so
        does $\ball{c}{R}$.
    \item Suppose there is only one Gaussian that has points in
        $Y$ and $Z$. Then the algorithm returns $\bot$.
    \end{enumerate}
\end{thm}
\begin{proof}
    We first prove the first part of the theorem.
    In this case, we know that there exists a ball of some
    radius $r > 0$ and some centre $c \in \wt{Y}$, such that
    $\ball{c}{r}$ has the required properties. We prove the
    existence of such a ball first. Let $c \in \wt{Y}$
    be a point, such that the corresponding Gaussian to $c$
    in $d$ dimensions is $\cN(\mu,\Sigma)$ (with
    $\|\Sigma\| = \sigma_{\max}^2$). For any Gaussian
    $\cN(\mu',\Sigma')$ that has points in $Y$, in the subspace
    of $\Pi$, $\|\mu\Pi - \mu'\Pi\| \geq
    103\sigma_{\max}\sqrt{\ell\ln(N\ell/\beta)}$. Now, we
    know that for any Gaussian $\cN(\mu'',\Sigma'')$,
    in this subspace, for any $y \sim \cN(\mu'',\Sigma'')$
    that lies in $Y$ and any $z \sim \cN(\mu'',\Sigma'')$
    that lies in $Z$,
    $\|y\Pi - \mu''\Pi\|,\|z\Pi - \mu''\Pi\| \leq
    \sqrt{\|\Pi\Sigma''\Pi\|_2}\sqrt{2\ell\ln(2N\ell/\beta)}$
    from Lemma~\ref{lem:intra-gaussian-mean-low}.
    The triangle inequality implies that for any
    $y,z \sim \cN(\mu,\Sigma)$ with $y \in Y$ and $z \in Z$,
    and $a,b \sim \cN(\mu',\Sigma')$ with $a \in Y$ and $b \in Z$,
    $\|y\Pi-a\Pi\|,\|z\Pi-b\Pi\| \geq
    100\sigma_{\max}^2\sqrt{\ell\ln(2N\ell/\beta)}$.
    Given that $\ball{c}{2\sigma_{\max}\sqrt{2\ln(2N\ell/\beta)}}$
    contains all points in $Y\Pi$ and $Z\Pi$, whose
    Gaussian in high dimensions is $\cN(\mu,\Sigma)$,
    and contains the Gaussian in low dimensions,
    this ball satisfies the required properties.
    
    Now, let's say the algorithm returns a ball
    $B = \ball{c_j}{2r_i}$.
    We know that in this case, because the radius $r_i$
    decreases when $i$ increases,
    $r_i \geq 2\sigma_{\max}\sqrt{2\ell\ln(2N\ell/\beta)}$.
    Because $r_i$ is bigger than the largest possible diameter
    of any Gaussian component in the $\ell$-dimensional subspace,
    $\ball{c_j}{r_i}$ must be, at least, containing the component of
    $c_j$ in that subspace. We know that the shell
    $S = \ball{c_j}{11r_i} \setminus \ball{c_j}{r_i}$ has no
    points from any component in $\wt{Y}$ (by construction).
    For any component inside $\ball{c_j}{r_i}$ (either partially
    or completely), the distance between the mean of any
    such component from
    the mean of any other component that is completely outside
    $\ball{c_j}{r_i}$ has to be at least
    $10r_i - 2\times\tfrac{r_i}{2} = 9r_i$
    (because $r_i$ is at least twice the maximum distance
    from the mean of a Gaussian to any of its points in that
    subspace). If there a component, whose mean lies in the shell
    $T = \ball{c_j}{\frac{21r_i}{2}} \setminus \ball{c_j}{\frac{3r_i}{2}}$,
    it would be completely contained inside $S$, and we would
    see points of $\wt{Y}$ in $S$, which would
    be a contradiction. Therefore, the means of all components
    that lie completely outside $\ball{c_j}{r_i}$ must be within
    $\R^d \setminus \ball{c_j}{\frac{21r_i}{2}}$, which means
    that they would be entirely contained within
    $\R^d \setminus \ball{c_j}{10r_i}$ (because of
    Lemma~\ref{lem:intra-gaussian-mean-low}). Therefore,
    $B$ cannot have any component that lies completely outside
    $\ball{c_j}{r_i}$. By similar reasoning, the mean of any
    component that partially intersects with $\ball{c_j}{r_i}$
    has to be within $\ball{c_j}{\frac{3r_i}{2}}$. Therefore,
    by Lemma~\ref{lem:intra-gaussian-mean-low} again, that
    component would be contained within $\ball{c_j}{2r_i}$.
    Therefore, the ball $B$ is pure. By the construction
    of the query $\QPriv$, our sample complexity (of $n$), and
    the guarantees of $\PCount$, with probability
    at least $1-3\beta$, we know that the noise in each call
    to $\PCount$ cannot be more than $\tfrac{n\mw}{1280}$
    (Lemma~\ref{lem:pcount}).
    We also know that the number of points from each component
    is at least $\tfrac{n\mw}{2}$ (from Lemma~\ref{lem:gmm-frequqncy}).
    Since $\ball{c_j}{2r_i}$ is pure, it must have at least
    $\tfrac{n\mw}{2}$ points in it, so the noisy answer via
    $\PCount$ would be at least $\tfrac{n\mw}{4}$. Similarly,
    since there is no Gaussian component even partially contained
    within $\ball{c_j}{10r_i} \setminus \ball{c_j}{2r_i}$,
    the noisy answer will be less than $\tfrac{n\mw}{1280}$.
    Finally, since there is at least one component in
    $\R^d \setminus \ball{c_j}{10r_i}$, the noisy answer of
    $\PCount$ would be over $\tfrac{n\mw}{4}$. So, $\QPriv$
    would return $\True$. This proves the first part of the
    theorem.
    
    Now, we prove the second part of the theorem. It is sufficient
    to show that there exists no $5$-terrific ball in the private
    dataset $\wt{Z}$ when all the data is coming from a single
    Gaussian. First, we claim that if there is a $5$-terrific ball
    in a dataset $X$, then there exists a $3$-terrific ball, too.
    The argument is simple. Suppose $\ball{c}{r}$ is a $5$-terrific
    ball, then it is easy to see that $\ball{c}{r'}$ for $r'=\tfrac{5r}{3}$
    is a $3$-terrific ball. This is because $\ball{c}{r'}$ would contain
    more points that $\ball{c}{r}$, hence, $\ball{c}{3r'} \setminus \ball{c}{r'}$
    would contain fewer points than $\ball{c}{5r} \setminus \ball{c}{r}$.
    Since $\R^d \setminus \ball{c}{5r} = \R^d \setminus \ball{c}{3r'}$,
    the final constraint would hold, as well. Hence, the claim.
    So now, it is sufficient to show that there exists no $3$-terrific
    ball with respect to $\wt{Z}$.
    
    We prove this via contradiction. We know that due to our sample
    complexity (bound on $n$), the noise in each call to $\PCount$
    is at most $\tfrac{n\mw}{1280}$. Therefore, for a query
    $\QPriv(\wt{Z},c',r',\tfrac{n\mw}{4},\PrivParams)$ to return a
    ball $\ball{c'}{r'}$, $\ball{c'}{r'}$ and $\R^d \setminus \ball{c'}{5r'}$
    must contain at least $\tfrac{n\mw}{8}$ points from $\wt{Z}$,
    and $\ball{c'}{5r'} \setminus \ball{c'}{r'}$ must contain at most
    $\tfrac{n\mw}{640}$ points from $\wt{Z}$. This implies the existence
    of a $3$-terrific ball of radius $r$ with respect to $\wt{Z}$.
    This means that there exists a unit
    vector $v$ such that on projecting all the data on to $v$, we would
    have an three adjacent intervals along the vector $I_1,I_2,I_3$ of
    length $2r$ each along the direction of $v$, such that $I_1$ and $I_3$
    together contain at most $\tfrac{n\mw}{640}$ points, $I_2$ contains at
    least $\tfrac{n\mw}{8}$ points, and the rest of the line contains
    at least $\tfrac{n\mw}{8}$ points. Suppose the mixing weight of the
    Gaussian in question is $w_i$. Lemma~\ref{lem:gmm-frequqncy} shows
    that the number of points in $\wt{Z}$ is $Cnw_i$, where
    $\tfrac{1}{2} \leq C \leq \tfrac{3}{2}$. A straightforward application
    of Chernoff bound (Lemma~\ref{lem:chernoff-mult}) and the union bound
    over all $2^{O(d)}$ unit vectors in a cover of the unit sphere in
    $\R^d$, along with our sample complexity, implies that the probability
    mass contained in $I_2$ and the part of line excluding $I_1$, $I_2$,
    and $I_3$ (which we call, $I_4$) is at least $\tfrac{\mw}{8Cw_i}$ each,
    and the mass
    contained in $I_1 \cup I_3$ is strictly less than $\tfrac{\mw}{160Cw_i}$.
    We show that this is impossible when the underlying distribution
    along $v$ is a (one-dimensional) Gaussian.
    
    Suppose the variance along $v$ of the said Gaussian is $\sigma^2$.
    WLOG, we will assume that its mean is $0$. Note that for the said
    condition about the probability mass to be true, $I_2$ must contain
    the mean. This is because of the symmetry about the mean and unimodality
    of one-dimensional Gaussians -- if $I_2$ does not contain the mean,
    then either $I_1$ or $I_3$ would be closer to the mean, and would
    contain more mass than $I_2$ because all three of these intervals
    are of the same length. Next, we show that the total mass contained
    within $I_1$ and $I_3$ is minimised when the mean is at the centre
    of $I_2$. After that, it would be sufficient to show that in this
    configuration, the probability mass contained within $I_1$ and $I_3$
    together is at least $\tfrac{\mw}{160Cw_i}$ when the mass contained
    in $I_2$ and $I_4$ each is at least $\tfrac{\mw}{8Cw_i}$. Let
    $0 \leq r_1 \leq r$ and $r_2 = 2r - r_1$, such that one end point
    of $I_2$ is $r_1$ away from the mean, and the other end point is
    $r_2$ away from the mean. Then the probability mass
    contained within $I_1 \cup I_3$ is
    $$\frac{1}{\sqrt{2\pi\sigma}}\int\limits_{r_1}^{r_1+2r}
        {e^{-\frac{t^2}{2\sigma^2}}dt} +
        \frac{1}{\sqrt{2\pi\sigma}}\int\limits_{r_2}^{r_2+2r}
        {e^{-\frac{t^2}{2\sigma^2}}dt} =
        \frac{1}{\sqrt{2\pi}}\int\limits_{r_1/\sigma}^{(r_1+2r)/\sigma}
        {e^{-\frac{t^2}{2}}dt} +
        \frac{1}{\sqrt{2\pi}}\int\limits_{(2r-r_1)/\sigma}^{(4r-r_1)/\sigma}
        {e^{-\frac{t^2}{2}}dt}.$$
    Define $f(r_1) = \frac{1}{\sqrt{2\pi}}\int\limits_{r_1/\sigma}^{(r_1+2r)/\sigma}
        {e^{-\frac{t^2}{2}}dt} +
        \frac{1}{\sqrt{2\pi}}\int\limits_{(2r-r_1)/\sigma}^{(4r-r_1)/\sigma}
        {e^{-\frac{t^2}{2}}dt}$.
    Then we have the following.
    \begin{align*}
        \frac{df(r_1)}{dr_1} = \frac{1}{\sqrt{2\pi}}\left[
                \frac{1}{\sigma}\cdot e^{-\frac{(r_1+2r)^2}{2\sigma^2}} -
                \frac{1}{\sigma}\cdot e^{-\frac{r_1^2}{2\sigma^2}} -
                \frac{1}{\sigma}\cdot e^{-\frac{(4r-r_1)^2}{2\sigma^2}} +
                \frac{1}{\sigma}\cdot e^{-\frac{(2r-r_1)^2}{2\sigma^2}}\right]
    \end{align*}
    Within the interval $[0,r]$, setting the above to $0$, we have
    $r_1 = r$. It can be checked that for all $r_1 \in [0,r]$,
    $\tfrac{d^2f(r_1)}{dr_1^2} > 0$. This means that within $[0,r]$,
    $f$ is minimised at $r_1 = r$. Therefore, for the rest of the
    proof, we would concentrate on lower bounding the quantity
    $q(r) = \frac{2}{\sqrt{2\pi}}\int\limits_{r/\sigma}^{3r/\sigma}
    {e^{-\frac{t^2}{2}}dt}$. We define
    $p(r) = \frac{2}{\sqrt{2\pi}}\int\limits_{0}^{r/\sigma}
    {e^{-\frac{t^2}{2}}dt}$ to be the mass of $I_2$ in this regime, and
    $\lambda(r) = \frac{2}{\sqrt{2\pi}}\int\limits_{3r/\sigma}^{\infty}
    {e^{-\frac{t^2}{2}}dt}$ to be the mass of $I_4$. We consider three
    cases.
    
    \noindent \paragraph{Case 1: $r^2 = \sigma^2 + \tau$,
    $0 \leq \tau \leq \sigma^2$.} Noting that $e^{-x + 0.25} \geq e^{-x^2}$
    for $x \in [0,1]$ and
    $\tfrac{1}{\sqrt{2}} \leq \tfrac{r}{\sqrt{2}\sigma} \leq 1$,
    we have the following.
    \begin{align*}
        \frac{\mw}{8Cw_i} \leq
        p(r) = \frac{2}{\sqrt{\pi}}\int\limits_{0}^{r/\sqrt{2}\sigma}
                {e^{-s^2}ds} \leq
            \frac{2e^{\frac{1}{4}}}{\sqrt{\pi}}\int\limits_{0}^{r/\sqrt{2}\sigma}
                {e^{-s}ds} =
            \frac{2e^{\frac{1}{4}}}{\sqrt{\pi}}\cdot
                \left(1 - e^{-\frac{r}{\sqrt{2}\sigma}}\right)
    \end{align*}
    Now, we compute $q(r)$.
    \begin{align*}
        q(r) &= \frac{2}{\sqrt{\pi}}\int\limits_{r/\sqrt{2}\sigma}^
                {3r/\sqrt{2}\sigma}{e^{-s^2}ds}\\
            &= \frac{2}{\sqrt{\pi}}\left[\int\limits_{r/\sqrt{2}\sigma}^
                {1}{e^{-s^2}ds} +
                \int\limits_{1}^{3/\sqrt{2}}{e^{-s^2}ds} +
                \int\limits_{3/\sqrt{2}}^{3r/\sqrt{2}\sigma}{e^{-s^2}ds}
                \right]\\
            &\geq \frac{2}{\sqrt{\pi}}\left[\int\limits_{r/\sqrt{2}\sigma}^
                {1}{e^{-s}ds} +
                \int\limits_{1}^{3/\sqrt{2}}{e^{-s^2}ds} +
                \int\limits_{3/\sqrt{2}}^{3r/\sqrt{2}\sigma}
                {e^{-\frac{3rs}{\sqrt{2}\sigma}}ds}
                \right]\\
            &= \frac{2}{\sqrt{\pi}}\left[
                e^{-\frac{r}{\sqrt{2}\sigma}} - \frac{1}{e} +
                \int\limits_{1}^{3/\sqrt{2}}{e^{-s^2}ds} +
                \frac{\sqrt{2}\sigma}{3r}\cdot e^{-\frac{9r}{2\sigma}} -
                \frac{\sqrt{2}\sigma}{3r}\cdot e^{-\frac{9r^2}{2\sigma^2}}
                \right]\\
            &\geq \frac{2}{\sqrt{pi}}\left[
                \int\limits_{1}^{3/\sqrt{2}}{e^{-s^2}ds} - \frac{1}{e} +
                e^{-\frac{r}{\sqrt{2}\sigma}} +
                \frac{1}{3}\cdot e^{-\frac{9r}{2\sigma}} -
                \frac{1}{3}\cdot e^{-\frac{9r^2}{2\sigma^2}}
                \right]\\
            &> \frac{2e^{\frac{1}{4}}}{20\sqrt{\pi}}\cdot
                \left(1 - e^{-\frac{r}{\sqrt{2}\sigma}}\right)\\
            &\geq \frac{\mw}{160Cw_i}
    \end{align*}
    We have a contradiction. Thus, in this case, the probability
    mass in $I_1 \cup I_3$ has to be at least $\tfrac{\mw}{160Cw_i}$.
    
    \paragraph{Case 2: $r^2 = \sigma^2 + \tau$, $\tau \geq \sigma^2$.}
    We have the following upper bound on $\lambda(r)$ using
    Lemma~\ref{lem:gauss-conc-1d}.
    $$\lambda(r) = \frac{2}{\sqrt{2\pi}}\int\limits_{3r/\sigma}^{\infty}
        {e^{-\frac{t^2}{2}}dt} \leq 2e^{-\frac{9r^2}{2\sigma^2}}$$
    We know that $\lambda(r) \geq \tfrac{\mw}{8Cw_i}$. This
    implies that $\tfrac{r^2}{\sigma^2} \leq
    \tfrac{2}{9}\ln\left(\tfrac{16Cw_i}{\mw}\right)$.
    This is also equivalent to $\tfrac{\mw}{16Cw_i} \leq \tfrac{1}{e^9}$.
    Lemma~\ref{lem:chi-squared} gives us the following lower bound
    on $1-p(r)$.
    $$1 - p(r) = \frac{2}{\sqrt{2\pi}}\int\limits_{r/\sigma}^{\infty}
        {e^{-\frac{t^2}{2}}dt}
        \geq 0.06e^{-\frac{3r^2}{2\sigma^2} + \frac{3}{2}}
        = (0.06)e^{\frac{3}{2}}e^{-\frac{3r^2}{2\sigma^2}}$$
    This gives us the following lower bound on $q(r)$.
    \begin{align*}
        q(r) &= \frac{2}{\sqrt{2\pi}}\int\limits_{r/\sigma}^{\infty}
                {e^{-\frac{t^2}{2}}dt} -
                \frac{2}{\sqrt{2\pi}}\int\limits_{3r/\sigma}^{\infty}
                {e^{-\frac{t^2}{2}}dt}\\
            &\geq (0.06)e^{\frac{3}{2}}e^{-\frac{3r^2}{2}} -
                2e^{-\frac{9r^2}{2\sigma^2}}\\
            &\geq (0.03)e^{\frac{3}{2}}e^{-\frac{3r^2}{2}}
                \tag{For $r^2 \geq 2\sigma^2$.}\\
            &\geq (0.03)e^{\frac{3}{2}}e^{-\frac{3}{2}\cdot\frac{2}{9}
                \ln\left(\frac{16Cw_i}{\mw}\right)}\\
            &= (0.03)e^{\frac{3}{2}}\cdot
                \left(\frac{\mw}{16Cw_i}\right)^{\frac{1}{3}}\\
            &\geq \frac{\mw}{160Cw_i}
                \tag{In the regime $\tfrac{\mw}{16Cw_i} \leq \tfrac{1}{e^9}$.}
    \end{align*}
    This gives us a contradiction for the second case, as well.
    
    \paragraph{Case 3: $r^2 = \sigma^2 - \tau$, $0 \leq \tau < \sigma^2$.}
    This final case has two sub-cases to analyse: $9r^2 \leq 2\sigma^2$ and
    $9r^2 > 2\sigma^2$. In the first instance, we have the following
    bounds on $p(r)$.
    $$\frac{\mw}{8Cw_i} \leq p(r) =
        \frac{2}{\sqrt{2\pi}}\int\limits_{0}^{r/\sigma}
        {e^{-\frac{t^2}{2}}dt} \leq \frac{2r}{\sqrt{2\pi}\sigma}$$
    Next, we lower bound $p(3r)$.
    $$p(3r) = \frac{2}{\sqrt{2\pi}}\int\limits_{0}^{3r/\sigma}
        {e^{-\frac{t^2}{2}}dt} \geq \frac{2}{\sqrt{2\pi}}\cdot\frac{3r}{\sigma}
        \cdot e^{-\frac{9r^2}{2\sigma^2}} \geq \frac{6r}{e\sqrt{2\pi}\sigma}$$
    This gives us the following lower bound on $q(r)$.
    \begin{align*}
        q(r) &= \frac{2}{\sqrt{2\pi}}\int\limits_{0}^{3r/\sigma}
                {e^{-\frac{t^2}{2}}dt} -
                \frac{2}{\sqrt{2\pi}}\int\limits_{0}^{r/\sigma}
                {e^{-\frac{t^2}{2}}dt}\\
            &\geq \frac{6r}{e\sqrt{2\pi}\sigma} - \frac{2r}{\sqrt{2\pi}\sigma}\\
            &> \frac{r}{5\sqrt{2\pi}\sigma}\\
            &> \frac{\mw}{160Cw_i}
    \end{align*}
    This shows that this situation is impossible. So, we move on
    to the final instance, that is, when $\tfrac{2\sigma^2}{9} < r^2 < \sigma^2$.
    Just as we did in Case 1, we bound $p(r)$. We get the following.
    $$\frac{\mw}{8Cw_i} \leq p(r) \leq
        \frac{2e^{\frac{1}{4}}}{\sqrt{\pi}}\cdot
        \left(1 - e^{-\frac{r}{\sqrt{2}\sigma}}\right)$$
    We lower bound $p(3r)$ again.
    \begin{align*}
        p(3r) &= \frac{2}{\sqrt{\pi}}\int\limits_{0}^{1}{e^{-s^2}ds} +
                \frac{2}{\sqrt{\pi}}\int\limits_{1}^{3r/\sqrt{2}\sigma}
                {e^{-s^2}ds}\\
            &\geq \frac{2}{\sqrt{\pi}}\int\limits_{0}^{1}{e^{-s^2}ds} +
                \frac{2}{\sqrt{\pi}}\int\limits_{0}^{3r/\sqrt{2}\sigma}
                {e^{-\frac{3rs}{\sqrt{2}\sigma}}ds}\\
            &= \frac{2}{\sqrt{\pi}}\int\limits_{0}^{1}{e^{-s^2}ds} +
                \frac{2}{\sqrt{\pi}}\cdot\frac{\sqrt{2}\sigma}{3r}
                \left[e^{-\frac{3r}{\sqrt{2}\sigma}} -
                e^{-\frac{9r^2}{2\sigma^2}}\right]\\
            &\geq \frac{2}{\sqrt{\pi}}\int\limits_{0}^{1}{e^{-s^2}ds} +
                \frac{2\sqrt{2}}{3\sqrt{\pi}}
                \left[e^{-\frac{3r}{\sqrt{2}\sigma}} -
                e^{-\frac{9r^2}{2\sigma^2}}\right]
    \end{align*}
    Now, we lower bound $q(r)$.
    \begin{align*}
        q(r) &= \frac{2}{\sqrt{2\pi}}\int\limits_{0}^{3r/\sigma}
                {e^{-\frac{t^2}{2}}dt} -
                \frac{2}{\sqrt{2\pi}}\int\limits_{0}^{r/\sigma}
                {e^{-\frac{t^2}{2}}dt}\\
            &\geq \frac{2}{\sqrt{\pi}}\int\limits_{0}^{1}{e^{-s^2}ds} +
                \frac{2\sqrt{2}}{3\sqrt{\pi}}
                \left[e^{-\frac{3r}{\sqrt{2}\sigma}} -
                e^{-\frac{9r^2}{2\sigma^2}}\right] -
                \frac{2e^{\frac{1}{4}}}{\sqrt{\pi}}\cdot
                \left(1 - e^{-\frac{r}{\sqrt{2}\sigma}}\right)\\
            &\geq \frac{1}{20} \cdot
                \frac{2e^{\frac{1}{4}}}{\sqrt{\pi}}\cdot
                \left(1 - e^{-\frac{r}{\sqrt{2}\sigma}}\right)
                \tag{For $\tfrac{2\sigma^2}{9} \leq r^2 \leq \sigma^2$.}\\
            &\geq \frac{\mw}{160Cw_i}
    \end{align*}
    This gives us the final contradiction, hence, completing the
    proof for Case 3.
    
    Therefore, there cannot be such intervals on the line
    of $v$. This means that there cannot be any such direction,
    where such intervals exist, which means that there cannot
    be a $3$-terrific ball with respect to $\wt{Z}$, implying
    that the algorithm could not have found a $5$-terrific ball
    with respect to $\wt{Z}$. By the construction of the algorithm,
    the output would be $\bot$. This completes the proof of
    the theorem.
\end{proof}

We now state the main private algorithm for clustering
points from a Gaussian mixture (Algorithm~\ref{alg:gmm-clustering-hard})
that utilises the public data available to it, along with
a theorem describing its utility guarantees.
In the theorem, $f_{PCA}'$ and $f_{\PCount}'$ denote
appropriate functions of the privacy parameters
in terms of the previously mentioned functions $f_{PCA}$ and $f_{\PCount}$,
respectively. Note that the output of the algorithm itself is not
private, but the intermediate steps involved are. Since
its output will not be released to public, and will just
be utilised by our main private estimation algorithm, there
will be no violation of privacy in the end.
This algorithm is a general private framework, which could
be instantiated for approximate DP and zCDP by choosing
the right parameters and privacy primitives, for example,
it would use approximate DP PCA algorithm for the former, and
zCDP PCA algorithm for the latter. It first uses
Algorithm~\ref{alg:supercluster} on the public data to
find a supercluster that tightly bounds a group of clusters.
Since the private data is sampled from the same distribution
as the public data, it uses that supercluster to isolate the
points in the private dataset corresponding to the components
contained within the supercluster. Next, it uses private PCA
to project the data on to the top-$k$ subspace, and then
works within that subspace to
either determine if the data is coming from a single Gaussian,
or to partition the dataset into clean subsets in case it is
coming from at least two Gaussians. In the former case, it
adds the set of private points in the original $d$-dimensional
space to the output (which contains sets of points). In the latter
case, it just proceeds to work on the partitions independently.

\begin{algorithm}[!ht]
\caption{DP GMM Hard Clustering
    $\DPHC_{\PrivParams, \beta, k, \mw}(\wt{X}, X)$}\label{alg:gmm-clustering-hard}
\KwIn{Private samples $X=(X_1,\dots,X_n) \in \R^{n \times d}$.
    Public samples $\wt{X} = (\wt{X}_1,\dots,\wt{X}_{m})
        \in \R^{m \times d}$.
    Parameters $\PrivParams \subset \R, \beta, k, \mw > 0$.}
\KwOut{Set $C \subset \cP(X)$.}
\vspace{5pt}

Let $Y \gets \wt{X}$ and $Z \gets X$.\\
Let $C \gets \emptyset$.\\
Let $Q_{\Priv}$ and $Q_{\Pub}$ be queues of sets of points.\\
Add set $Z$ to $Q_{\Priv}$, and set $Y$ to $Q_{\Pub}$.\\
$\PrivParams'$ be the set of privacy parameters modified
    as per composition based on $\PrivParams$.\\
Set $count \gets 0$ and $i \gets 1$.
\vspace{5pt}

\While{$count < k$ and $i \leq 2k$}{
    Pop $Q_{\Priv}$ to get $Z$, and $Q_{\Pub}$ to get $Y$.\\
    \vspace{5pt}
    \tcp{Run superclustering algorithm on the public dataset.}
    Set $(c_i,R_i) \gets \SC_{k,m}(Y)$.\\
    \vspace{5pt}
    \tcp{Partition both datasets on the basis of the supercluster.}
    $Y^i \gets Y \cap \ball{c_i}{R_i}$ and
        $Z^i \gets Z \cap \ball{c_i}{R_i}$.\\
    Add $Z \setminus Z^i$ to $Q_{\Priv}$, and $Y \setminus Y^i$ to $Q_{\Pub}$.\\
    \vspace{5pt}
    \tcp{Private PCA: project points of both datasets onto
        returned subspace.}
    Let $M^i \in \R^{|Z^i| \times d}$, such that each row $M_j^i \gets c_i$.\\
    $\Pi_i \gets \PrivPCA_{\PrivParams',k}(Z^i-M^i,R_i)$.\\
    $Y' \gets (Y^i-M^i)\Pi_i$ and $Z' \gets (Z^i-M^i)\Pi_i$.\\
    \vspace{5pt}
    \tcp{If there is only one Gaussian, add it to $C$, otherwise further partition
        the datasets.}
    $B \gets \LowDimPartitioner_{n,m,\mw,\PrivParams'}
        (Z', Y', R_i, \tfrac{R_i}{\sqrt{d}})$.\\
    \If{$B = \bot$}{
        $C \gets C \cup \{Z^i\}$.\\
        $count \gets count + 1$.
    }
    \Else{
        $B$ is an ordered pair $(c_i',r_i')$.\\
        $S' \gets \ball{c'}{r_i'} \cap Y'$ and
            $T' \gets \ball{c'}{r_i'} \cap Z'$.\\
        Let $S$ be points in $Y^i$ corresponding to $S'$,
            and $T$ be points in $Z^i$ corresponding to $T'$.\\
        Add $S$ to $Q_{\Pub}$ and $T$ to $Q_{\Priv}$.\\
        Add $Y^i \setminus S$ to $Q_{\Pub}$ and $Z^i \setminus T$ to $Q_{\Priv}$.
    }
    \vspace{5pt}
    $i \gets i + 1$
}
\vspace{5pt}

\Return $C$.
\vspace{5pt}
\end{algorithm}

\begin{thm}\label{thm:gmm-clustering-hard}
    There exists an algorithm (Algorithm~\ref{alg:gmm-clustering-hard})
    that takes $n$ private samples $X$, and $m$ public samples from
    $D \in \cG(d,k,s)$ satisfying Assumption (\ref{eq:gaussian-not-flat}),
    and outputs a partition $C$ of $X$, such that
    given Conditions \ref{cond:gmm-frequency} to \ref{cond:intra-gaussian-points-low}
    hold, and
    $$s \geq \Omega(\sqrt{k\ln((n+m)k/\beta)}),$$
    $$n \geq O\left(\frac{d\ln(k/\beta)}{\mw} + d^{1.5}k^{2.5}f_{PCA}'(\PrivParams)
        + \frac{\sqrt{k}\ln(k/\beta)}{\mw \cdot f_{\PCount}'(\PrivParams)}\right),$$
    and
    $$m \geq O\left(\frac{\ln(k/\beta)}{\mw}\right),$$
    then with probability at least $1-O(\beta)$, $\abs{C} = k$ and
    each $S \in C$ is clean and non-empty.
\end{thm}
\begin{proof}
    It is enough to show that $\forall i \geq 0$, at the end of the
    $i$-th iteration, (1) a new Gaussian component is isolated in $C$
    and $count$ equals the number of isolated components in $C$,
    (2) or the private dataset $X$ is further partitioned into non-empty,
    clean subsets, and the public dataset $\wt{X}$ is further partitioned
    into clean subsets, such that $\abs{Q_{\Pub}} = \abs{Q_{\Priv}}$ and
    for each $j \in [\abs{Q_{\Pub}}]$, $Q_{\Pub}[j]$ and $Q_{\Priv}[j]$
    have points from the same Gaussian components. We can prove this via
    induction on $i$.
    
    \noindent \textbf{Base Case:} The end of the $0$-th iteration
    essentially means the actual start of the loop. In this case,
    we know that $Q_{\Pub}$ and $Q_{\Priv}$ are clean subsets of
    $\wt{X}$ and $X$, respectively. Therefore, the claim trivially
    holds in this case.
    
    \noindent \textbf{Inductive Step:} We assume for all iterations,
    up to and including some $i \geq 0$, the claim holds. Then we show
    that it holds for iteration $i+1$, as well. By the inductive hypothesis,
    we know that $Z$ and $Y$ are clean subsets, and contain points from
    the same components. When the superclustering
    algorithm (Algorithm~\ref{alg:supercluster}) is called, it finds
    a pure ball that will either contain the whole dataset $Y$ (hence, $Z$
    because the ball is pure), or it partitions $Y$
    into clean subsets (Theorem~\ref{thm:supercluster}). Since the
    components from which the points
    in $Y$ and $Z$ come from are the same, $\ball{c_i}{R_i}$ would
    contain points from the same components from $Z$ as in $Y$ (because
    the deterministic regularity conditions hold). So, adding
    $Z \setminus Z^i$ to $Q_{\Priv}$ and $Y \setminus Y^i$ to $Q_{\Pub}$
    ensures that the partition of $Z$ added to the back of $Q_{\Priv}$
    is clean and contains the points from the same components
    as those that have points in $Y \setminus Y^i$ (which itself
    gets added to the back of $Q_{\Pub}$), which preserves the
    ordering of the set of components in the two queues.
    
    If there is just one component that has points in $Y^i$
    (hence, in $Z^i$), then after projecting the data points
    on to the subspace of $\Pi_i$, and feeding the projected datasets
    to Algorithm~\ref{alg:low-dim-partitioner}, we will get $\bot$
    (by the guarantees from Theorem~\ref{thm:low-dim-partitioner})
    with probability at least $1-\tfrac{\beta}{2k}$. Because $Z^i$
    is a clean subset, $C$ now contains one more clean subset of $X$,
    and we have isolated a new component, and updated the value
    of $count$ to reflect the change.
    
    Suppose there are at least two components that have points in
    $Y^i$ (hence, in $Z^i$). Then we know from Theorem~\ref{thm:supercluster},
    that if $\Sigma$ is the covariance having the largest trace among all
    components that have points in $Y^i$ (hence, in $Z^i$), then
    $R_i \in O(k\sqrt{\tr(\Sigma)})$. Let the largest directional variance
    among the Gaussians that have points in $Z^i$ (hence, in $Y^i$) be
    $\sigma_{\max}^2$, its mean be $\mu$, and its mixing weight be $w$.
    From the guarantees in Corollary~\ref{coro:private-pca}, we know
    that $\|\mu - \mu\Pi_i\| \leq O\left(\tfrac{\sigma_{\max}}{\sqrt{w}}\right)$
    with probability at least $1-\tfrac{\beta}{2k}$.
    Because of the separation condition, applying the triangle inequality,
    we know that the distance between $\mu\Pi_i$ and the projected mean
    of any other Gaussian having points in $Z^i\Pi$ is at least
    $\Omega(\sigma_{\max}\sqrt{k\ln((n+m)k)/\beta})$. Then from the
    guarantees of Theorem~\ref{thm:low-dim-partitioner}, we have
    that a ball is returned, and it is pure with respect to the
    components of the mixture projected by $\Pi_i$ that have points
    in $Z^i\Pi$. Therefore, subsets $S'$ and $T'$ are clean, and
    so are $S$ and $T$. This implies that $Y^i \setminus S$ and
    $Z^i \setminus T$ are clean, too. By adding them in order to
    $Q_{\Pub}$ and $Q_{\Priv}$ respectively, we partition the
    datasets $\wt{X}$ and $X$ further into clean subsets, and
    preserve the ordering of the components in the respective queues.
    This proves the claim.
    
    Note that before the PCA step, we recentre the points in both
    $Y^i$ and $Z^i$ for the ease of analysis later when we provide
    guarantees for privacy. It does not affect correctness of any
    step of the algorithm because it is just recentering of data.
    
    Now, we just have to show that the number of iterations we
    allow is enough with high probability. Because we have $k$
    components in the mixture, we can only partition into clean
    subsets at most $k-1$ times. For each component, the loop
    will run at most one time. So, $2k-1$ iterations are enough
    to capture all the components, and $\abs{C} = k$.
    Taking the union bound over all possible $2k-1$ iterations,
    we have the required result.
\end{proof}

We finally state the private algorithm (Algorithm~\ref{alg:gmm-estimator-hard})
for estimating
the parameters of mixtures of Gaussians, along with a theorem
highlighting its utility guarantees. Just like the DP clustering
algorithm above, it is a private framework that could instantiated
for approximate DP and zCDP settings by using the corresponding
private algorithms. It calls Algorithm~\ref{alg:gmm-clustering-hard}
to cluster the points according to their respective components,
then calls private estimators to estimate the parameters of the
respective components. It finally uses $\PCount$
to estimate their mixing weights based on the private data.

\begin{algorithm}[!ht]
\caption{DP GMM Hard Estimator
    $\DPHE_{\PrivParams, \alpha, \beta, k, \mw}(\wt{X}, X)$}\label{alg:gmm-estimator-hard}
\KwIn{Private samples $X=(X_1,\dots,X_n) \in \R^{n \times d}$.
    Public samples $\wt{X} = (\wt{X}_1,\dots,\wt{X}_{m})
        \in \R^{m \times d}$.
    Parameters $\PrivParams \subset \R, \beta, k, \mw > 0$.}
\KwOut{Set $\wh{D} = \{(\wh{\mu}_1,\wh{\Sigma}_1,\wh{w}_1),
    \dots,(\wh{\mu}_k,\wh{\Sigma}_k,\wh{w}_k)\}$.}
\vspace{5pt}

\tcp{Clustering.}
Set $C \gets \DPHC_{\PrivParams, \beta, k, \mw}(X, \wt{X})$.
\vspace{5pt}

\tcp{Parameter estimation.}
$\wh{D} \gets \emptyset$.\\
\For{$i \gets 1, \dots, k$}{
    Set $(\wh{\mu}_i,\wh{\Sigma}_i) \gets
        \DPGE_{\PrivParams, \frac{\beta}{k}}(C_i)$.\\
    Set $\wh{w}_i \gets
        \max\left\{\tfrac{1}{n}\cdot\PCount_{\PrivParams}(\abs{C_i}),
        \tfrac{\alpha}{2k}\right\}$.\\
    $\wh{D} \gets \wh{D} \cup \{(\wh{\mu}_i,\wh{\Sigma}_i,\wh{w}_i)\}$.
}
\vspace{5pt}

\Return $\wh{D}$.
\vspace{5pt}
\end{algorithm}

\begin{thm}\label{thm:gmm-hard}
    For all $\alpha,\beta>0$ and sets of privacy parameters
    $\PrivParams$, there exists an
    algorithm (Algorithm~\ref{alg:gmm-estimator-hard}) that takes
    $n$ private samples
    and $m$ public samples from $D \in \cG(d,k,s)$, such that if
    $D=\{(\mu_1,\Sigma_1,w_1),\dots,(\mu_k,\Sigma_k,w_k)\}$
    satisfies Assumption (\ref{eq:gaussian-not-flat}), and
    \begin{align*}
        s & = \Omega(\sqrt{k\ln((n+m)k/\beta)})\\
        n &= O\left(\frac{n_{GE}\ln(k/\beta)}{\mw} +
        \frac{d\ln(k/\beta)}{\mw} + d^{1.5}k^{2.5}f_{PCA}'(\PrivParams) + \frac{\sqrt{k}\ln(k/\beta)}{\mw \cdot f_{\PCount}'(\PrivParams)}\right)\\
        m &= O\left(\frac{\ln(k/\beta)}{\mw}\right),
    \end{align*}
    where $n_{GE}$ is the sample complexity of privately
    learning a Gaussian using Lemma~\ref{lem:gaussian-estimator}
    according to the type of DP required, then it $(\alpha,\beta)$-learns $D$.
\end{thm}
\begin{proof}
    By our sample complexity bound and Theorem~\ref{thm:gmm-clustering-hard},
    with probability at least $1-\tfrac{\beta}{3}$, for each
    $i \in [k]$, $C_i$ is clean with respect to $X$.
    Therefore, each $C_i$ can be used to learn an independent
    component.
    
    Next, by the guarantees of Lemma~\ref{lem:gaussian-estimator},
    for each $i \in [k]$,
    $\SD(\cN(\wh{\mu}_i,\wh{\Sigma}_i),\cN(\mu_i,\Sigma_i)) \leq \alpha$
    with probability at least $1-\tfrac{\beta}{3k}$.
    
    Finally, using Lemmata~\ref{lem:gmm-frequqncy} and~\ref{lem:pcount},
    along with our sample complexity, we know that the error due to
    each call to $\PCount$ is at most $\tfrac{\alpha}{2k}$ with probability
    at least $1-\tfrac{\beta}{3k}$, and that the error due to sampling
    is at most $\tfrac{\alpha}{2k}$. We apply the triangle inequality
    to get the final error of $\tfrac{\alpha}{k}$ for each mixing weight.
    
    Applying the union bound over all failure events (including the failures
    of Conditions~\ref{cond:gmm-frequency} to~\ref{cond:intra-gaussian-points-low}),
    we have the desired result.
\end{proof}

\subsubsection{\texorpdfstring{$(\eps,\delta)$}{}-DP Algorithm}

Here, we describe our results under approximate DP constraints, and
instantiate Algorithms~\ref{alg:private-PCA},~\ref{alg:gmm-clustering-hard},
and~\ref{alg:gmm-estimator-hard} for this version of DP. We will
not restate the entire algorithms, but just describe how $\PrivParams$
and the sample complexity in these different algorithms change.
We now state the main theorem of our section.

\begin{thm}\label{thm:gmm-hard-approx}
    For all $\alpha,\beta,\eps,\delta>0$, there exists an
    $(O(\eps),O(\delta))$-DP algorithm $\cM$ that takes $n$ private samples
    and $m$ public samples from $D \in \cG(d,k,s)$, and is
    private with respect to the private samples, such that if
    $D=\{(\mu_1,\Sigma_1,w_1),\dots,(\mu_k,\Sigma_k,w_k)\}$
    satisfies Assumption (\ref{eq:gaussian-not-flat}), and
    \begin{align*}
    s &= \Omega(\sqrt{k\ln((n+m)k/\beta)}) \\
    n &= O\left(\frac{d^2\ln\left(\frac{k}{\beta}\right)}{\mw\alpha^2} +
    \frac{(d^2\log(k/\delta) +
    d\log^{1.5}(k/\delta))\cdot\polylog\left(d,\frac{k}{\beta},
    \frac{1}{\alpha},\frac{\sqrt{k}}{\eps},
    \ln\left(\frac{k}{\delta}\right)\right)} {\mw\alpha\eps}\right.  \\
    &\hspace{2.58cm}\left.+\frac{d^{1.5}k^{2.5}\log\left(\frac{k}{\delta}\right)}{\eps}\right) \\
    m &= O\left(\frac{\log(k/\beta)}{\mw}\right),
    \end{align*}
    then $\cM$ $(\alpha,\beta)$-learns $D$.
\end{thm}
\begin{proof}
    We mainly focus on the privacy guarantees here because the accuracy
    would follow from Theorem~\ref{thm:gmm-hard} after setting the
    parameters appropriately. Note that the privacy parameters in
    this case in the set $\PrivParams$ are $\eps,\delta$. As per the
    advanced composition guarantees of DP (Lemma~\ref{lem:composition}),
    we set the privacy parameters in $\PrivParams'$ in
    Algorithm~\ref{alg:gmm-clustering-hard}
    to be $\eps' = \tfrac{\eps}{\sqrt{12k\log(1/\delta)}}$ and
    $\delta' = \tfrac{\delta}{4k}$. Define $\eps_0 = \sqrt{k}\eps'$.
    
    Now, we show that the intermediate steps in the call to
    Algorithm~\ref{alg:gmm-clustering-hard} yield $(\eps,\delta)$-DP.
    We don't release the output of that step itself, but use it in
    subsequent steps. The clusters are formed on the basis of intersection
    of the privately formed balls with the private dataset, and are
    then used subsequently in other private algorithms. So, the final
    algorithm would be private, as well. Thus, it is enough to prove the
    privacy guarantees of those intermediate steps that yield the
    said balls.
    
    In Algorithm~\ref{alg:gmm-clustering-hard}, we start working on
    the private dataset $X$ inside the loop, specifically, when we
    use private PCA. In each iteration $i$, we ensure that each
    row of the input to Algorithm~\ref{alg:private-PCA} ($Z^i-M^i$)
    has norm at most $R_i$ by virtue of selecting points that lie
    in a ball of radius $R_i$, and by recentering the points appropriately.
    Then by Lemma~\ref{lem:pca-sensitivity}, the sensitivity of
    $Y^TY$ in Algorithm~\ref{alg:private-PCA} is at most $2R_i^2$.
    Therefore, adding Gaussian noise calibrated to $\eps',\delta'$ to $Y^TY$,
    that is, setting $f_{PCA}(\eps',\delta') =
    \tfrac{\sqrt{2\ln(2/\delta')}}{\eps'}$,
    implying that $\sigma_P = \tfrac{2R_i^2\sqrt{2\ln(2/\delta')}}{\eps'}$
    is enough to ensure $(\eps',\delta')$-DP in this step
    (Lemmata~\ref{lem:gaussiandp} and~\ref{lem:post-processing}).
    The next step in Algorithm~\ref{alg:gmm-clustering-hard} that
    works with the private data is at the call to
    Algorithm~\ref{alg:low-dim-partitioner}. In this algorithm, only
    one call to $\PCount$ is performed, and this is the only time
    it works with the private dataset. Therefore, by the guarantees
    of Lemma~\ref{lem:pcount}, we know that this step is $\eps'$-DP.
    The next steps are either partitioning the private data or
    adding the isolated private data to $C$. Since neither of those
    sets are being released to public, there is no privacy loss here.
    Therefore, each iteration is $(2\eps',\delta')$-DP (by
    Lemma~\ref{lem:composition}).
    Applying composition over all $2k$ iterations, we have that
    all operations together in the entire run of the loop in the
    algorithm are $(2\eps,\delta)$-DP.
    
    In Theorem~\ref{thm:gmm-clustering-hard}, setting
    $f_{PCA}'(\PrivParams) = f_{PCA}(\eps_0,\delta_0) \in
    O\left(\tfrac{\log(k/\delta)}{\eps}\right)$ give us the right
    sample complexity for accuracy for all calls to
    Algorithm~\ref{alg:private-PCA} (by Lemma~\ref{lem:composition}),
    since we already multiply the $\sqrt{k}$ factor in the numerator.
    Similarly, setting
    $f_{\PCount}'(\PrivParams) = f_{\PCount}(\eps_0) =
    \tfrac{\eps}{\sqrt{12\log(1/\delta)}}$
    in Theorem~\ref{thm:gmm-clustering-hard} gives us the right
    sample complexity for all calls to Algorithm~\ref{alg:low-dim-partitioner}.
    
    Now, each call to the approximate-DP Gaussian learner
    (Lemma~\ref{lem:gaussian-estimator}) is on a disjoint part
    of the private dataset $X$. Therefore, all the $k$ calls
    together are $(\eps,\delta)$-DP because changing one point
    in $X$ can change one point in only one of the clusters.
    By the same reasoning, all calls to $\PCount$ together are
    $(\eps,\delta)$-DP.
    
    As far as the accuracy goes, the first two terms in the
    sample complexity ensures that enough points go to each
    call to the Gaussian estimator from Lemma~\ref{lem:gaussian-estimator}
    and to each call to $\PCount$ because of Lemma~\ref{lem:gmm-frequqncy}.
    Therefore, our GMM estimator $(\alpha,\beta)$-learns the
    mixture.
\end{proof}

\subsubsection{\texorpdfstring{$\rho$}{}-zCDP Algorithm}

Here, we state our results under zCDP, and instantiate
Algorithms~\ref{alg:private-PCA},~\ref{alg:gmm-clustering-hard},
and~\ref{alg:gmm-estimator-hard} for this version of DP.

\begin{thm}\label{thm:gmm-hard-zcdp}
    For all $\alpha,\beta,\rho>0$, there exists an
    $O(\rho)$-zCDP algorithm $\cM$ that takes $n$ private samples
    and $m$ public samples from $D \in \cG(d,k,s)$, and is
    private with respect to the private samples, such that if
    $D=\{(\mu_1,\Sigma_1,w_1),\dots,(\mu_k,\Sigma_k,w_k)\}$
    satisfies Assumption (\ref{eq:gaussian-not-flat}), for each
    $i \in [k]$, $\|\mu_i\| \leq R$ and $\id \preceq \Sigma_i \preceq K\id$,
    and
    \begin{align*}
        s &= \Omega(\sqrt{k\ln((n+m)k/\beta)})\\
        n &= O\left(\frac{d^2\ln\left(\frac{k}{\beta}\right)}{\mw\alpha^2} +
            \frac{d^2\cdot\polylog\left(\frac{dk}{\alpha\beta\rho}\right) +
            d\log\left(\frac{dk\log(R)}{\alpha\beta\rho}\right)}
            {\mw\alpha\sqrt{\rho}} \right.\\
            &\hspace{2.58cm}\left.+ \frac{d^{1.5}\sqrt{k\log(K)}\cdot
            \polylog\left(\frac{dk\log(K)}{\rho\beta}\right) +
            \sqrt{dk\log\left(\frac{Rdk}{\beta}\right)} +
            d^{1.5}k^{2.5}}{\sqrt{\rho}}\right)\\
        m &= O\left(\frac{\log(k/\beta)}{\mw}\right),
    \end{align*}
    then $\cM$ $(\alpha,\beta)$-learns $D$.
\end{thm}
\begin{proof}
    Again, we focus on the privacy guarantees because the accuracy
    would follow from Theorem~\ref{thm:gmm-hard} after setting the
    parameters correctly. Note that the privacy parameter in
    this case in the set $\PrivParams$ is $\rho$. As per the
    advanced composition guarantees of DP (Lemma~\ref{lem:composition}),
    we set the privacy parameter in $\PrivParams'$ in
    Algorithm~\ref{alg:gmm-clustering-hard} to be $\rho' = \tfrac{\rho}{2k}$.
    
    Now, we show that the intermediate steps in the call to
    Algorithm~\ref{alg:gmm-clustering-hard} yield $O(\rho)$-zCDP.
    As before, it is enough to prove the
    privacy guarantees of those intermediate steps,
    since the output of the algorithm itself wouldn't
    be released, but would be used by Algorithm~\ref{alg:gmm-estimator-hard},
    instead.
    
    In Algorithm~\ref{alg:gmm-clustering-hard}, we start working on
    the private dataset $X$ inside the loop, specifically, when we
    use private PCA. In each iteration $i$, by the same argument as
    in the proof of Theorem~\ref{thm:gmm-hard-approx}, the sensitivity
    of $Y^TY$ in the call to Algorithm~\ref{alg:private-PCA} is $2R_i^2$.
    Therefore, adding Gaussian noise calibrated to $\rho'$ to $Y^TY$,
    that is, setting $f_{PCA}(\rho') = \tfrac{1}{\sqrt{2\rho'}}$,
    implying that $\sigma_P = \tfrac{2R_i^2}{\sqrt{2\rho'}}$
    is enough to ensure $\rho'$-zCDP in this step
    (Lemmata~\ref{lem:gaussiandp} and~\ref{lem:post-processing}).
    The next step in Algorithm~\ref{alg:gmm-clustering-hard} that
    works with the private data is at the call to
    Algorithm~\ref{alg:low-dim-partitioner}. In this algorithm, only
    one call to $\PCount$ is performed, and this is the only time
    it works with the private dataset. Therefore, by the guarantees
    of Lemma~\ref{lem:pcount}, we know that this step is $\rho'$-zCDP.
    Therefore, each iteration is $2\rho'$-zCDP (by
    Lemma~\ref{lem:composition}).
    Applying composition over all $2k$ iterations, we have that
    all operations together in the entire run of the loop in the
    algorithm are $2\rho$-zCDP.
    
    In Theorem~\ref{thm:gmm-clustering-hard}, setting
    $f_{PCA}'(\PrivParams) = f_{PCA}(\rho) \in
    O\left(\tfrac{1}{\sqrt{\rho}}\right)$ give us the right
    sample complexity for accuracy for all calls to
    Algorithm~\ref{alg:private-PCA} (by Lemma~\ref{lem:composition}),
    since we already multiply the $\sqrt{k}$ factor in the numerator.
    Similarly, setting
    $f_{\PCount}'(\PrivParams) = f_{\PCount}(\sqrt{2\rho}) = \sqrt{2\rho}$
    in Theorem~\ref{thm:gmm-clustering-hard} gives us the right
    sample complexity for all calls to Algorithm~\ref{alg:low-dim-partitioner}.
    
    Now, each call to the zCDP Gaussian learner
    (Lemma~\ref{lem:gaussian-estimator}) is on a disjoint part
    of the private dataset $X$. Therefore, all the $k$ calls
    together are $\rho$-zCDP because changing one point
    in $X$ can change one point in only one of the clusters.
    By the same reasoning, all calls to $\PCount$ together are
    $\rho$-zCDP.
    
    For the accuracy guarantees, the first two terms in the
    sample complexity ensures that enough points go to each
    call to the Gaussian estimator from Lemma~\ref{lem:gaussian-estimator}
    and to each call to $\PCount$ because of Lemma~\ref{lem:gmm-frequqncy}.
    Therefore, our GMM estimator $(\alpha,\beta)$-learns the
    mixture.
\end{proof}

\subsection{\texorpdfstring{$\wt{O}(d/\mw)$}{} Public Samples}

In this subsection, we assume that we have $\wt{O}(d/\mw)$ public
data samples available. In this case, the number of public samples
is enough to do PCA accurately. So, the approach would be much
simpler than before, since we don't need to find a supercluster
anymore. Also, because we have enough public samples, we could simply use
the public dataset itself to partition the data when in low dimensions.
Therefore, the entire clustering operations to partition the private
dataset could be done using the public
dataset itself, without having to touch the private dataset at all.
As in the previous subsection, we will first give a general
private algorithm for estimating the GMM, but
will instantiate it separately for approximate DP and zCDP.

We start with the low-dimensional partitioner first, which
just uses the public dataset, and the public query, as defined
in \ref{eq:low-dim-query-public}.

\begin{algorithm}[!ht]
\caption{Partitioning in Low Dimensions
    $\LowDimPartitionerPublic_{m,\mw}(\wt{Y}, r_{\max}, r_{\min})$}\label{alg:low-dim-partitioner-easy}
\KwIn{Public Samples $\wt{Y}_1,\dots,\wt{Y}_{m'} \in \R^d$.
    Parameters $m \geq m', \mw > 0$.}
\KwOut{A tuple of centre $c \in \R^d$, radius $R \in \R$, or $\bot$.}
\vspace{5pt}

\For{$i \gets 0,\dots,\log(r_{\max}/r_{\min})$}{
    $r_i \gets \tfrac{r_{\max}}{2^i}$\\
    \For{$j \gets 1,\dots,m'$}{
        $c_j \gets \wt{Y}_j$\\
        \If{$\QPub(\wt{Y},c_j,r_i,\tfrac{m\mw}{2}) = \True$}{
            \Return $(c = c_j, R = 2r_i)$.
        }
    }
}
\vspace{5pt}

\Return $\bot$.
\vspace{5pt}
\end{algorithm}

\begin{thm}\label{thm:low-dim-partitioner-easy}
    Let $Y$ be a clean subset of a set of public samples
    from $D \in \cG(d,k)$, $\Pi$ be a projection
    matrix to $\ell$ dimensions, $\wt{Y} = Y\Pi$
    be the input to
    Algorithm~\ref{alg:low-dim-partitioner-easy}, $r_{\max}, r_{\min} > 0$,
    such that for some $j \in [m']$,
    $\wt{Y} \cap \ball{\wt{Y}_i}{r_{\max}} = \wt{Y}$
    and $\ball{\wt{Y}_i}{r_{\max}}$ is pure with
    respect to the components of $D$ projected on to the subspace
    of $\Pi$ having points in $\wt{Y}$.
    Suppose $\sigma_{\max}^2$ is the largest directional
    variance among the Gaussians that have points in $Y$.
    Let that Gaussian be $\cN(\mu,\Sigma)$. Suppose,
    $$m \geq O\left(\frac{d\ln(k/\beta)}{\mw}\right).$$
    Then we have the following with probability at least
    $1-\beta$.
    \begin{enumerate}
    \item Suppose there are at least two Gaussians that have
        points in $Y$.
        For any other Gaussian $\cN(\mu',\Sigma')$
        that has points in $Y$, suppose
        for $N > m$, $\|\mu\Pi - \mu'\Pi\| \geq
        \Omega(\sigma_{\max}\sqrt{\ell\ln(N\ell/\beta)})$.
        Let $r_{\min} \leq \sigma_{\max}\sqrt{2\ell\ln(2N\ell/\beta)}$.
        Then the ball returned by Algorithm~\ref{alg:low-dim-partitioner-easy}
        is pure with respect to the components of $D$ projected by $\Pi$,
        which have points in $\wt{Y}$,
        such that $\R^d \setminus \ball{c}{R}$ only contains
        components from $D$ projected by $\Pi$ that have points in
        $\wt{Y}$, and so
        does $\ball{c}{R}$.
    \item Suppose there is only one Gaussian that has points in
        $Y$. Then the algorithm returns $\bot$.
    \end{enumerate}
\end{thm}
\begin{proof}
    The proof for the first part is the same as that for the first
    part of Theorem~\ref{thm:low-dim-partitioner}. So, we don't discuss
    that any further.
    
    For the next part, note that we could relax the query $\QPub$
    by modifying the second constraint to say
    $\abs{X \cap (\ball{c}{11r} \setminus \ball{c}{r})} < \tfrac{t}{320}$.
    If the answer to the original query is true, then the answer
    to the modified query would be true, as well. In other words,
    if the answer to the modified query is false, then the answer
    to the original query is false, too. Therefore, it is enough
    to show that with probability at least $1-\beta$, there cannot
    be an $11$-terrific ball in $\wt{Y}$. As argued in the proof of
    Theorem~\ref{thm:low-dim-partitioner}, it is enough to show that
    there exists no $3$-terrific ball in $\wt{Y}$. We already proved
    this in the proof of Theorem~\ref{thm:low-dim-partitioner}.
    Hence, we have the claim.
\end{proof}

Next, we provide a clustering algorithm to partition the private
dataset that just operates on the public dataset
(Algorithm~\ref{alg:gmm-clustering-easy}). The algorithm is
the same as Algorithm~\ref{alg:gmm-clustering-hard}, but uses
the public data solely to partition the private data. It uses
tools like PCA and Algorithm~\ref{alg:low-dim-partitioner-easy}
on public data.

\begin{algorithm}[!ht]
\caption{GMM Easy Clustering
    $\DPEC_{\beta, k, \mw}(\wt{X}, X)$}\label{alg:gmm-clustering-easy}
\KwIn{Private samples $X=(X_1,\dots,X_n) \in \R^{n \times d}$.
    Public samples $\wt{X} = (\wt{X}_1,\dots,\wt{X}_{m})
        \in \R^{m \times d}$.
    Parameters $\beta, k, \mw > 0$.}
\KwOut{Set $C \subset \cP(X)$.}
\vspace{5pt}

Let $Y \gets \wt{X}$ and $Z \gets X$.\\
Let $C \gets \emptyset$.\\
Let $Q_{\Priv}$ and $Q_{\Pub}$ be queues of sets of points.\\
Add set $Z$ to $Q_{\Priv}$, and set $Y$ to $Q_{\Pub}$.\\
Set $count \gets 0$ and $i \gets 1$.
\vspace{5pt}

\While{$count < k$ and $i \leq 2k$}{
    Pop $Q_{\Priv}$ to get $Z$, and $Q_{\Pub}$ to get $Y$.\\
    \vspace{5pt}
    \tcp{PCA: project points of both datasets onto
        returned subspace.}
    Let $\Pi_i$ be the top-$k$ subspace of $Y^TY$.\\
    $Y' \gets Y\Pi_i$ and $Z' \gets Z\Pi_i$.\\
    \vspace{5pt}
    \tcp{If there is only one Gaussian, add it to $C$, otherwise further partition
        the datasets.}
    Let $R_i \gets 4\max\limits_{y_1,y_2 \in Y}\{\|y_1-y_2\|\}$
        and $r_i \gets \tfrac{\sqrt{2k\ln(2(n+m)k/\beta)}}{4\sqrt{d}}
        \min\limits_{y_1,y_2 \in Y}\{\|y_1-y_2\|\}$.\\
    $B \gets \LowDimPartitionerPublic_{m,\mw}
        (Y', R_i, r_i)$.\\
    \If{$B = \bot$}{
        $C \gets C \cup \{Z^i\}$.\\
        $count \gets count + 1$.
    }
    \Else{
        $B$ is an ordered pair $(c_i',r_i')$.\\
        $S' \gets \ball{c'}{r_i'} \cap Y'$ and
            $T' \gets \ball{c'}{r_i'} \cap Z'$.\\
        Let $S$ be points in $Y^i$ corresponding to $S'$,
            and $T$ be points in $Z^i$ corresponding to $T'$.\\
        Add $S$ to $Q_{\Pub}$ and $T$ to $Q_{\Priv}$.\\
        Add $Y^i \setminus S$ to $Q_{\Pub}$ and $Z^i \setminus T$ to $Q_{\Priv}$.
    }
    \vspace{5pt}
    $i \gets i + 1$
}
\vspace{5pt}

\Return $C$.
\vspace{5pt}
\end{algorithm}

\begin{thm}\label{thm:gmm-clustering-easy}
    There exists an algorithm (Algorithm~\ref{alg:gmm-clustering-easy})
    that takes $n$ private samples $X$, and $m$ public samples from
    $D \in \cG(d,k,s)$ satisfying Assumption (\ref{eq:gaussian-not-flat}),
    and outputs a partition $C$ of $X$, such that
    given Conditions \ref{cond:gmm-frequency} to \ref{cond:intra-gaussian-points-low}
    hold, and
    $$s \geq \Omega(\sqrt{k\ln((n+m)k/\beta)}),$$
    and
    $$m \geq O\left(\frac{d\ln(k/\beta)}{\mw}\right),$$
    then with probability at least $1-O(\beta)$, $\abs{C} = k$ and
    each $S \in C$ is clean and non-empty.
\end{thm}
\begin{proof}
    As in the proof of Theorem~\ref{thm:gmm-clustering-hard},
    it is enough to show that $\forall i \geq 0$, at the end of the
    $i$-th iteration, (1) a new Gaussian component is isolated in $C$
    and $count$ equals the number of isolated components in $C$,
    (2) or the private dataset $X$ is further partitioned into non-empty,
    clean subsets, and the public dataset $\wt{X}$ is further partitioned
    into clean subsets, such that $\abs{Q_{\Pub}} = \abs{Q_{\Priv}}$ and
    for each $j \in [\abs{Q_{\Pub}}]$, $Q_{\Pub}[j]$ and $Q_{\Priv}[j]$
    have points from the same Gaussian components. We can prove this via
    induction on $i$.
    
    \noindent \textbf{Base Case:} The proof is exactly the same as in
    the proof of Theorem~\ref{thm:gmm-clustering-hard}.
    
    \noindent \textbf{Inductive Step:} We assume for all iterations,
    up to and including some $i \geq 0$, the claim holds. Then we show
    that it holds for iteration $i+1$, as well. By the inductive hypothesis,
    we know that $Z$ and $Y$ are clean subsets, and contain points from
    the same components.
    
    We first reason about the correctness of the PCA step.
    Using Lemma~\ref{lem:gaussian-sum-conc} and our bound on $m$,
    we know that for each component $(\mu',\Sigma',w')$, the empirical
    mean of all the points from that Gaussian in $\wt{X}$ would be at most
    $\tfrac{\sqrt{\|\Sigma'\|_2}}{\sqrt{w'}}$ away from $\mu'$. By the
    same reasoning,
    the empirical mean of the same projected points would be at most
    $\tfrac{\sqrt{\|\Sigma'\|_2}}{\sqrt{w'}}$ away from $\mu'\Pi$ because
    the projection of a Gaussian is still a Gaussian.
    Lemmata~\ref{lem:PCA_AM_style} (by setting $B=0$)
    and~\ref{lem:data-spectral} together guarantee that the distance
    between the empirical mean of those points and that of the projected
    points of that component would be at most
    $O\left(\tfrac{\sigma_{\max}}{\sqrt{w'}}\right)$. The triangle
    inequality implies that $\|\mu' - \mu'\Pi\| \leq
    O\left(\tfrac{\sigma_{\max}}{\sqrt{w'}}\right)$. This
    happens with probability at least $1-\tfrac{\beta}{2k}$.
    
    Now, if there is just one component that has points in $Y$
    (hence, in $Z$), then on feeding $Y'$
    to Algorithm~\ref{alg:low-dim-partitioner}, we will get $\bot$
    (by the guarantees from Theorem~\ref{thm:low-dim-partitioner})
    with probability at least $1-\tfrac{\beta}{2k}$. Because $Z$
    is a clean subset, $C$ now contains one more clean subset of $X$,
    and we have isolated a new component, and updated the value
    of $count$ to reflect the change.
    
    Suppose there are at least two components that have points in
    $Y$ (hence, in $Z$). Let the largest directional variance
    among the Gaussians that have points in $Y$ be
    $\sigma_{\max}^2$, its mean be $\mu$, and its mixing weight be $w$.
    From the guarantees of the PCA step, we know
    that $\|\mu - \mu\Pi_i\| \leq O\left(\tfrac{\sigma_{\max}}{\sqrt{w}}\right)$.
    Because of the separation condition, applying the triangle inequality,
    we know that the distance between $\mu\Pi_i$ and the projected mean
    of any other Gaussian having points in $Z\Pi$ is at least
    $\Omega(\sigma_{\max}\sqrt{k\ln((n+m)k)/\beta})$.
    Also, we know by the construction of the algorithm that for any
    $y \in Y$, $Y \subseteq \ball{y}{R_i}$, and from
    Lemmata~\ref{lem:intra-gaussian-mean}, \ref{lem:intra-gaussian-points}
    and~\ref{lem:inter-gaussian}
    that $\ball{y}{R_i}$ would contain all the components of $D$ that
    have points in $Y$. Also, by Lemmata~\ref{lem:intra-gaussian-points},
    and~\ref{lem:inter-gaussian}, $r_i \leq \sigma_{\max}\sqrt{2k\ln(2(n+m)k/\beta)}$.
    Then from the
    guarantees of Theorem~\ref{thm:low-dim-partitioner-easy}, we have
    that a ball is returned, and it is pure with respect to the
    components of the mixture projected by $\Pi_i$ that have points
    in $Y'$. Therefore, subsets $S'$ and $T'$ are clean, and
    so are $S$ and $T$. This implies that $Y \setminus S$ and
    $Z \setminus T$ are clean, too. By adding them in order to
    $Q_{\Pub}$ and $Q_{\Priv}$ respectively, we partition the
    datasets $\wt{X}$ and $X$ further into clean subsets, and
    preserve the ordering of the components in the respective queues.
    This proves the claim.
    
    As argued before in the proof Theorem~\ref{thm:gmm-clustering-hard},
    allowing up to $2k$ iterations is enough. Therefore, we get a clean
    partition of $X$.
\end{proof}

Finally, we provide the DP algorithm for learning GMM's
in this regime of public data. The algorithm is the same
as Algorithm~\ref{alg:gmm-estimator-hard}, except that it
calls Algorithm~\ref{alg:gmm-clustering-easy}, instead of
Algorithm~\ref{alg:gmm-clustering-hard}.

\begin{algorithm}[!ht]
\caption{DP GMM Easy Estimator
    $\DPEE_{\PrivParams, \alpha, \beta, k, \mw}(\wt{X}, X)$}\label{alg:gmm-estimator-easy}
\KwIn{Private samples $X=(X_1,\dots,X_n) \in \R^{n \times d}$.
    Public samples $\wt{X} = (\wt{X}_1,\dots,\wt{X}_{m})
        \in \R^{m \times d}$.
    Parameters $\PrivParams \subset \R, \beta, k, \mw > 0$.}
\KwOut{Set $\wh{D} = \{(\wh{\mu}_1,\wh{\Sigma}_1,\wh{w}_1),
    \dots,(\wh{\mu}_k,\wh{\Sigma}_k,\wh{w}_k)\}$.}
\vspace{5pt}

\tcp{Clustering.}
Set $C \gets \DPEC_{\beta, k, \mw}(X, \wt{X})$.
\vspace{5pt}

\tcp{Parameter estimation.}
$\wh{D} \gets \emptyset$.\\
\For{$i \gets 1, \dots, k$}{
    Set $(\wh{\mu}_i,\wh{\Sigma}_i) \gets
        \DPGE_{\PrivParams, \frac{\beta}{k}}(C_i)$.\\
    Set $\wh{w}_i \gets
        \max\left\{\tfrac{1}{n}\cdot\PCount_{\PrivParams}(\abs{C_i}),
        \tfrac{\alpha}{2k}\right\}$.\\
    $\wh{D} \gets \wh{D} \cup \{(\wh{\mu}_i,\wh{\Sigma}_i,\wh{w}_i)\}$.
}
\vspace{5pt}

\Return $\wh{D}$.
\vspace{5pt}
\end{algorithm}

\begin{thm}\label{thm:gmm-easy}
    For all $\alpha,\beta>0$ and sets of privacy parameters
    $\PrivParams$, there exists an
    algorithm (Algorithm~\ref{alg:gmm-estimator-easy}) that takes
    $n$ private samples
    and $m$ public samples from $D \in \cG(d,k,s)$, such that if
    $D=\{(\mu_1,\Sigma_1,w_1),\dots,(\mu_k,\Sigma_k,w_k)\}$
    satisfies Assumption (\ref{eq:gaussian-not-flat}), and
    \begin{align*}
        s &= \Omega(\sqrt{k\ln((n+m)k/\beta)})\\
        n &= O\left(\frac{n_{GE}\ln(k/\beta)}{\mw}\right)\\
        m &= O\left(\frac{d\ln(k/\beta)}{\mw}\right),
    \end{align*}
    where $n_{GE}$ is the sample complexity of privately
    learning a Gaussian using Lemma~\ref{lem:gaussian-estimator}
    and Theorem~\ref{thm:gaussian}
    according to the type of DP required, then it $(\alpha,\beta)$-learns $D$.
\end{thm}
\begin{proof}
    By Theorem~\ref{thm:gmm-clustering-easy}, we know that with
    probability at least $1-\tfrac{\beta}{3}$, $\abs{C} = k$,
    and for each $i \in [k]$, $C_i$ is a clean subset of $X$.
    
    Next, by the guarantees of Lemma~\ref{lem:gaussian-estimator}
    and Theorem~\ref{thm:gaussian}, for each $i \in [k]$, with
    probability at least $1-\tfrac{\beta}{3k}$,
    $\SD(\cN(\wh{\mu}_i,\wh{\Sigma}_i),\cN(\mu_i,\Sigma_i)) \leq \alpha$.
    
    Finally, by the same argument as in the proof of
    Theorem~\ref{thm:gmm-hard}, the mixing weights are estimated
    accurately, as well.
    
    Therefore, taking the union bound over all failure events, we
    have the required result.
\end{proof}

\subsubsection{\texorpdfstring{$(\eps,\delta)$}{}-DP Algorithm}

Here, we instantiate Algorithm~\ref{alg:gmm-estimator-easy}
for the case of approximate DP.

\begin{thm}\label{thm:gmm-easy-approx}
    For all $\alpha,\beta,\eps,\delta>0$, there exists an
    $(O(\eps),O(\delta))$-DP algorithm $\cM$ that takes $n$ private samples
    and $m$ public samples from $D \in \cG(d,k,s)$, and is
    private with respect to the private samples, such that if
    $D=\{(\mu_1,\Sigma_1,w_1),\dots,(\mu_k,\Sigma_k,w_k)\}$
    satisfies Assumption (\ref{eq:gaussian-not-flat}), and
    \begin{align*}
        s &= \Omega(\sqrt{k\ln((n+m)k/\beta)})\\
        n &= O\left(\frac{d^2\ln\left(\frac{k}{\beta}\right)}{\mw\alpha^2} +
            \frac{(d^2\log(k/\delta) +
            d\log^{1.5}(k/\delta))\cdot\polylog\left(d,\frac{k}{\beta},
            \frac{1}{\alpha},\frac{\sqrt{k}}{\eps},
            \ln\left(\frac{k}{\delta}\right)\right)}{\mw\alpha\eps}\right)\\
        m &= O\left(\frac{d\log(k/\beta)}{\mw}\right),
    \end{align*}
    then $\cM$ $(\alpha,\beta)$-learns $D$.
\end{thm}
\begin{proof}
    We again focus on the privacy guarantees here because the accuracy
    would follow from Theorem~\ref{thm:gmm-easy} after setting the
    parameters appropriately. Note that the privacy parameters in
    this case in the set $\PrivParams$ are $\eps,\delta$. We just
    have to show the privacy guarantees for the calls to the Gaussian
    estimator and $\PCount$ because all other steps involve computations
    on the public dataset.

    By the same argument as in the proof for Theorem~\ref{thm:gmm-hard-approx},
    all $k$ calls together to the approximate DP learner from
    Lemma~\ref{lem:gaussian-estimator} are $(\eps,\delta)$-DP,
    and all calls together to $\PCount$ are $(\eps,\delta)$-DP.
    Therefore, we get $(2\eps,2\delta)$-DP guarantee in the end.
    
    For the accuracy goal, the sample complexity ensures that enough points go to each
    call to the Gaussian estimator from Lemma~\ref{lem:gaussian-estimator}
    and to each call to $\PCount$ because of Lemma~\ref{lem:gmm-frequqncy}.
    Therefore, our GMM estimator $(\alpha,\beta)$-learns the
    mixture.
\end{proof}

\subsubsection{\texorpdfstring{$\rho$}{}-zCDP Algorithm}

Finally, we instantiate Algorithm~\ref{alg:gmm-estimator-easy}
for the case of zCDP. Note that we don't need any bounds on the
range parameters in this case because we are using the Gaussian
learner from Section~\ref{sec:gaussians}.

\begin{thm}\label{thm:gmm-easy-zcdp}
    For all $\alpha,\beta,\rho>0$, there exists an
    $O(\rho)$-zCDP algorithm $\cM$ that takes $n$ private samples
    and $m$ public samples from $D \in \cG(d,k,s)$, and is
    private with respect to the private samples, such that if
    $D=\{(\mu_1,\Sigma_1,w_1),\dots,(\mu_k,\Sigma_k,w_k)\}$
    satisfies Assumption (\ref{eq:gaussian-not-flat}),
    and
    \begin{align*}
        s &= \Omega(\sqrt{k\ln((n+m)k/\beta)})\\
        n &= O\left(\frac{d^2\ln\left(\frac{k}{\beta}\right)}{\mw\alpha^2} +
            \frac{d^2\cdot\polylog\left(\frac{dk}{\alpha\beta\rho}\right)}
            {\mw\alpha\sqrt{\rho}}\right)\\
        m &= O\left(\frac{d\log(k/\beta)}{\mw}\right),
    \end{align*}
    then $\cM$ $(\alpha,\beta)$-learns $D$.
\end{thm}
\begin{proof}
    We mainly prove the privacy guarantees because the accuracy
    would follow from Theorem~\ref{thm:gmm-easy} after setting the
    parameters correctly. Note that the privacy parameter in
    this case in the set $\PrivParams$ is $\rho$.
    
    Using the same argument as in the proof of
    Theorem~\ref{thm:gmm-hard-approx}, we have $\rho$-zCDP
    for all $k$ calls to the learner in Section~\ref{sec:gaussians}
    (by Theorem~\ref{thm:gaussian}), and $\rho$-zCDP for all calls
    together to $\PCount$. Therefore, we have $2\rho$-zCDP in the end.
    
    By the same argument as before, the sample complexity
    is enough to send enough points to each call to the learner
    in Section~\ref{sec:gaussians} and to $\PCount$. Therefore,
    our GMM estimator $(\alpha,\beta)$-learns the mixture.
\end{proof}

\printbibliography

\appendix

\section{A Proof-of-Concept Numerical Result}\label{sec:exp}

For Gaussian mean estimation, our approach is relatively simple to implement on top of an existing private algorithm. We offer some proof-of-concept simulations that demonstrate the effectiveness of public data in private statistical estimation.\footnote{Our code is available at \url{https://github.com/alexbie98/1pub-priv-mean-est}. Experiments run within two minutes on a 2020 M1 MacBook Air.}
In Figure \ref{fig:1pub-priv-mean-est}, we show plots that evaluate 1-public-sample private mean estimation (the algorithm described in Section \ref{subsec:gaussians-mean}).

\begin{figure}[!ht]
\centering
\begin{subfigure}{0.3\textwidth}
    \includegraphics[trim={0.5cm 0.6cm 0.5cm 0.6cm}, width=1.0\textwidth]{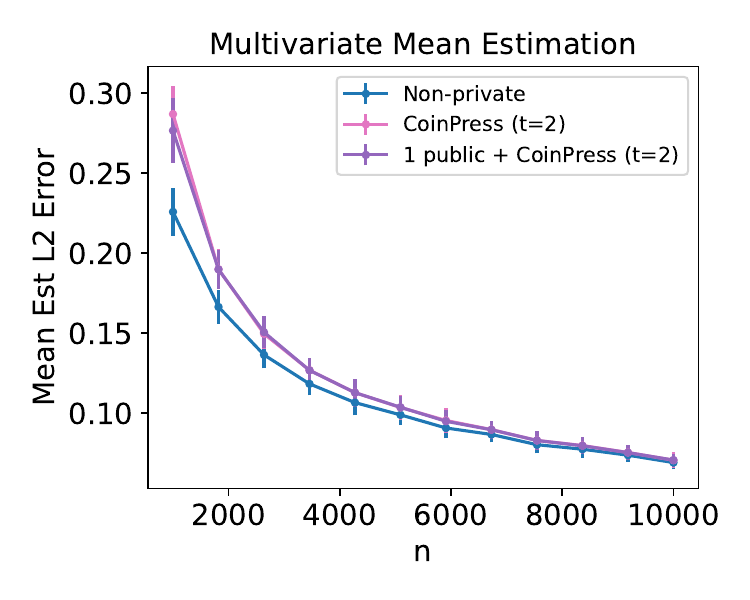}
    \caption{$k=10$}
\end{subfigure}
\begin{subfigure}{0.3\textwidth}
    \includegraphics[trim={0.5cm 0.6cm 0.5cm 0.6cm}, width=1.0\textwidth]{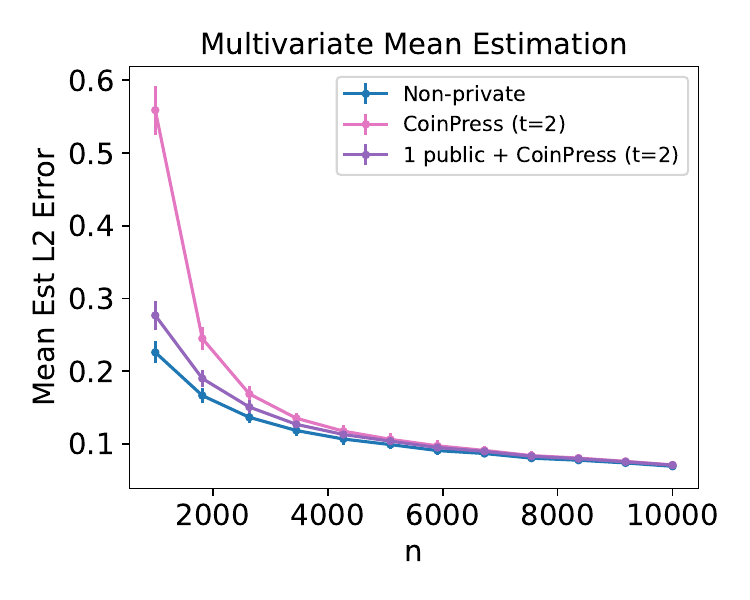}
    \caption{$k=100$}
\end{subfigure}
\begin{subfigure}{0.3\textwidth}
    \includegraphics[trim={0.5cm 0.6cm 0.5cm 0.6cm}, width=1.0\textwidth]{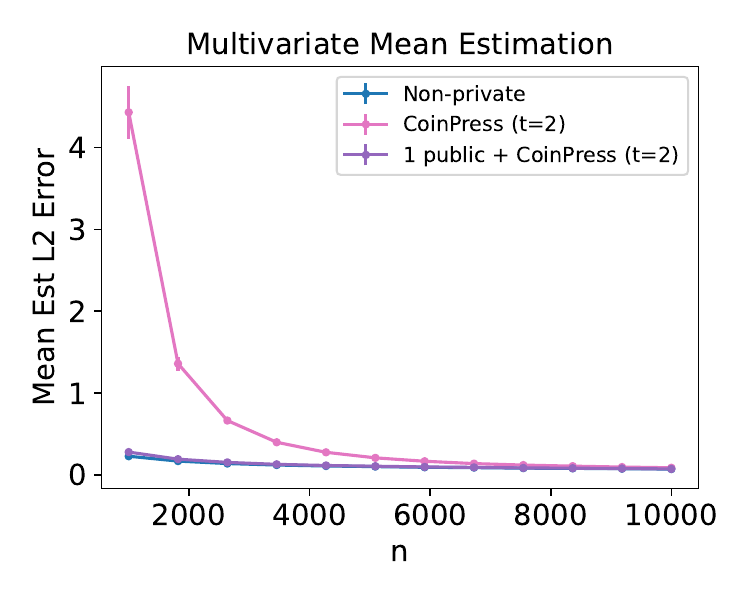}
    \caption{$k=1000$}
\end{subfigure}

\begin{subfigure}{0.35\textwidth}
    \includegraphics[trim={0.0cm 0.6cm 0.0cm 0.0cm}, width=1.0\textwidth]{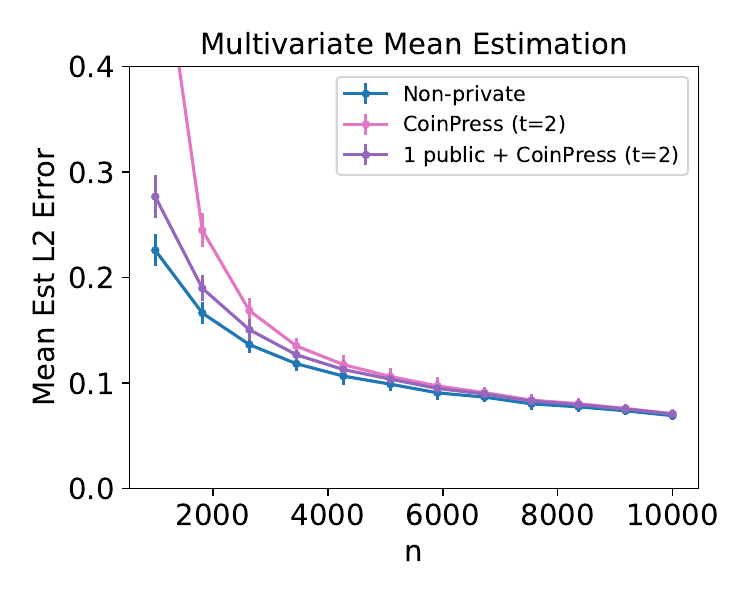}
    \caption{$k=100$ (zoomed in)}
\end{subfigure}
\begin{subfigure}{0.35\textwidth}
    \includegraphics[trim={0.0cm 0.6cm 0.0cm 0.0cm}, width=1.0\textwidth]{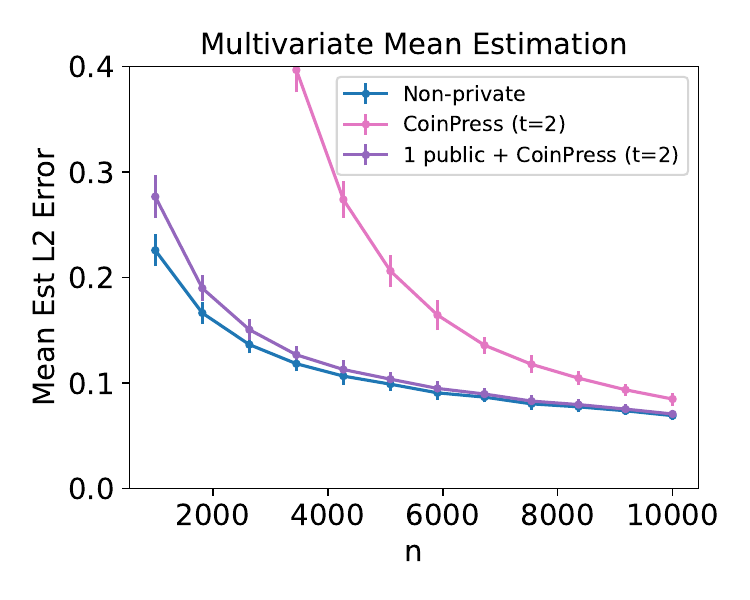}
    \caption{$k=1000$ (zoomed in)}
\end{subfigure}

\caption{Comparing the error of CoinPress \cite{BiswasDKU20} under its best setting, against CoinPress with 1 public sample for mean estimation of $\cN(k\cdot[1,...,1]^T, I_d)$ for $d=50$, targeting zCDP at $\rho=0.5$. Larger $k$ corresponds to weaker a priori bounds on the mean for CoinPress. For large $k$, a single public sample significantly improves results.}
\label{fig:1pub-priv-mean-est}
\end{figure}

We examine the effect of $1$ public sample on the performance of CoinPress \cite{BiswasDKU20} with its best parameter setting ($t=2$), in a case where the initial a priori bounds on the mean are weak. Concretely, we draw $n$ samples from a $d=50$ dimensional Gaussian $\cN(\mu_k, I_d)$, where $\mu_k = k \cdot [1,...,1]^T$ and correspondingly set our a priori bound $R = k\sqrt d$ for CoinPress. We show results for $k \in [10,100,1000]$, representing varying levels of strength in our a prior bounds on the mean.

We follow the evaluation protocol from \cite{BiswasDKU20}: we target zCDP with $\rho=0.5$, and at each sample size $n$ we run the estimator 100 times and report the $10\%$ trimmed mean of error from the ground truth (we additionally report the $10\%$ trimmed standard deviation as error bars). We also follow their practice of treating target failure probabilities $\beta_i$ of various steps of the algorithm as hyperparameters that can be tuned for the best empirical results. The new step we introduce (using the public sample to set $R$ based on $d, \beta$) uses $\beta = 0.01$, which is the same value used for all the $\beta_i$'s in CoinPress.

The numerical result demonstrates the promise of utilizing public data for private data analysis, and confirms the takeaway that very little public data can help greatly when a priori knowledge of the private data is weak. As is visible from the plots, the error of our public-private algorithm tracks the non-private algorithm closer when a priori bounds on the mean are weak $(k=1000$ case).

Note that these results are only meant to be a proof-of-concept simulation to demonstrate the promise of public data -- thorough tuning and evaluation of these algorithms (which is necessary to bring these algorithms to practice) is an important direction for future work.

\end{document}